\documentclass[10pt]{article}

\let\stdauthor\author
\usepackage{bofflab}

\usepackage[utf8]{inputenc}
\usepackage[T1]{fontenc}
\usepackage{fullpage}
\usepackage{microtype}
\usepackage{parskip}
\usepackage{ragged2e}

\usepackage{amsmath}
\usepackage{amsfonts}
\usepackage{amssymb}
\usepackage{amsthm}
\usepackage{mathrsfs}
\usepackage{mathtools}
\usepackage{nicefrac}
\usepackage{thmtools}
\usepackage{thm-restate}

\usepackage{graphicx}
\usepackage{booktabs}
\usepackage{colortbl}
\usepackage{multirow}
\usepackage{placeins}
\usepackage{wrapfig}
\usepackage{float}
\usepackage[font=small,labelfont=bf]{caption}
\usepackage{subcaption}
\usepackage{enumerate}
\usepackage{enumitem}
\usepackage{etoolbox}
\usepackage[ruled,vlined]{algorithm2e}
\usepackage{tikz}
\usetikzlibrary{arrows.meta, fit, positioning, calc}
\usepackage{pgfplots}
\pgfplotsset{compat=1.18}

\usepackage[capitalise,noabbrev]{cleveref}

\crefformat{equation}{(#2#1#3)}
\crefrangeformat{equation}{(#3#1#4) to (#5#2#6)}
\crefmultiformat{equation}{(#2#1#3)}{ and (#2#1#3)}{, (#2#1#3)}{ and (#2#1#3)}

\crefname{appendix}{Appendix}{Appendices}
\Crefname{appendix}{Appendix}{Appendices}

\crefname{assumption}{Assumption}{Assumptions}
\Crefname{assumption}{Assumption}{Assumptions}

\makeatletter
\let\c@table\c@figure
\@ifundefined{c@algocf}{}{\let\c@algocf\c@figure}
\makeatother

\DeclareMathOperator*{\argmax}{argmax}

\newcommand{\R}{\ensuremath{\mathbb{R}}}

\newcommand{\E}{\mathbb{E}}

\newcommand{\calL}{\mathcal{L}}

\newcommand{\calR}{\mathcal{R}}

\newcommand{\calP}{\mathcal{P}}
\newcommand{\calU}{\mathcal{U}}
\newcommand{\calW}{\mathcal{W}}

\newcommand{\norm}[1]{\lVert #1 \rVert}

\newcommand{\Normal}{\mathcal{N}}
\newcommand{\Unif}{\mathrm{Unif}}

\newcommand{\Law}{\mathrm{Law}}
\newcommand{\supp}{\mathrm{supp}}

\newcommand{\kl}[2]{\mathrm{KL}\left(#1 \| #2\right)}

\newcommand{\Id}{I}

\newcommand{\sg}[1]{\mathrm{sg}\left(#1\right)}

\newcommand{\lsd}{\mathrm{LSD}}
\newcommand{\esd}{\mathrm{ESD}}
\newcommand{\psd}{\mathrm{PSD}}
\newcommand{\dist}{\mathrm{dist}}

\newcommand{\Tprior}{\mathcal{T}_b}             %
\newcommand{\cprior}{c_b}                       %
\newcommand{\Lprior}{\mathcal{L}_b}                       %
\newcommand{\WTF}{\mathrm{WTF}}                 %
\newcommand{\SBeps}{\mathrm{SB}_\varepsilon}    %

\newcommand{\p}{\partial}

\usepackage{xspace}

\definecolor{ourrowcolor}{RGB}{232,228,245}
\newcommand{\ourrow}{\rowcolor{ourrowcolor}}

\definecolor{keyblue}{RGB}{229,238,249}
\definecolor{keyblueborder}{RGB}{41,112,204}

\newtcolorbox{keyresult}{
	enhanced,
	colback=keyblue,
	colframe=keyblueborder,
	boxrule=1pt,
	arc=3pt,
	left=8pt,
	right=8pt,
	top=6pt,
	bottom=6pt,
	breakable
}

\newtcolorbox{highlight}{
	enhanced,
	colback=keyblue,
	colframe=keyblue,
	boxrule=0pt,
	arc=2pt,
	left=8pt,
	right=8pt,
	top=6pt,
	bottom=6pt,
	breakable
}

\definecolor{keygray}{RGB}{245,245,245}
\definecolor{keygrayborder}{RGB}{200,200,200}

\newtcolorbox{graybox}{
	enhanced,
	colback=keygray,
	colframe=keygrayborder,
	boxrule=0.5pt,
	arc=3pt,
	left=8pt,
	right=8pt,
	top=6pt,
	bottom=6pt,
	breakable
}

\makeatletter
\@ifundefined{newcounteralias}{}{%
  \renewcommand\thmt@autorefsetup{\@xa\def\csname\thmt@envname autorefname\@xa\endcsname\@xa{\thmt@thmname}}%
}
\makeatother

\theoremstyle{plain}
\declaretheorem[name=Theorem, numberwithin=section]{theorem}
\declaretheorem[name=Proposition, sibling=theorem]{proposition}
\declaretheorem[name=Lemma, sibling=theorem]{lemma}

\theoremstyle{definition}

\declaretheorem[name=Assumption, sibling=theorem, style=definition]{assumption}

\theoremstyle{remark}

\title{WTF?!\\Simulation-Free Reinforcement Learning with Wasserstein-Tilted Flow Maps}

\author[1,*]{Abbas Mammadov}
\author[2,*]{Jerry Y. Huang}
\author[2,*]{Justin Lin}
\author[2]{\protect \\ Partha Kaushik}
\author[2]{Sheel Shah}
\author[2]{Kartik Nair}
\author[1]{\protect \\ Yee Whye Teh}
\author[2]{Nicholas M. Boffi}

\affiliation[1]{University of Oxford}
\affiliation[2]{Carnegie Mellon University}

\stdauthor{%
  Abbas Mammadov\textsuperscript{1,*}%
    \thanks{\textsuperscript{*}Equal contribution.
      \textsuperscript{1}University of Oxford,
      \textsuperscript{2}Carnegie Mellon University.
      Correspondence to \texttt{abbas.mammadov@stats.ox.ac.uk} and
      \texttt{jerryhua@andrew.cmu.edu}.}
  \and Jerry Y. Huang\textsuperscript{2,*}
  \and Justin Lin\textsuperscript{2,*}
  \and Partha Kaushik\textsuperscript{2}
  \and Sheel Shah\textsuperscript{2}
  \and Kartik Nair\textsuperscript{2}
  \and Yee Whye Teh\textsuperscript{1}
  \and Nicholas M. Boffi\textsuperscript{2}}

\newcommand{\blfootnote}[1]{%
	\begingroup
	\renewcommand{\thefootnote}{}\footnote{#1}%
	\addtocounter{footnote}{-1}%
	\endgroup
}

\begin{document}

\begin{abstract}
	\emph{Reward fine-tuning} aims to update a pre-trained flow-based generative model to improve the downstream reward of its generated samples.
	Existing methods typically formulate this problem as sampling from a reward-tilted distribution, the solution to a KL-regularized reward-maximization problem.
	Here, we introduce an \textit{optimal transport} regularizer built directly from the pre-trained drift.
	Unlike KL reward tilting, the resulting objective transports individual samples toward higher reward rather than reweighting the base distribution.
	We show that the resulting problem is equivalent to a \textit{deterministic optimal control problem} on the flow.
	Given a pre-trained flow map, this equivalence yields a simulation-free reinforcement learning algorithm for fine-tuning generative flows.
	We call the resulting framework \textbf{Wasserstein-Tilted Flow Maps (WTF)}, the first end-to-end fine-tuning recipe native to flow maps.
  The output is a fine-tuned flow map that retains strong reward-aligned performance at few-step inference budgets without post-hoc distillation.
  Experiments on ImageNet-256 and text-to-image show that WTF achieves higher reward with comparable or higher diversity than baselines, while requiring up to \textbf{$280\times$} less training compute.
  More broadly, we argue that accelerated samplers such as flow maps are essential infrastructure for efficient post-training, and that the dominant KL-regularized formulation is only one of many choices worth revisiting.
\end{abstract}

\null\vspace{-0.25in}\vspace{-\baselineskip}\vspace{-\parskip} 
\maketitle
\enlargethispage{0.5in}
\blfootnote{$^{*}$Equal contribution. Correspondence to \texttt{abbas.mammadov@stats.ox.ac.uk} and \texttt{jerryhua@andrew.cmu.edu}.}

\vspace{-0.35in}
\begin{figure}[H]
	\centering
	\includegraphics[width=0.98\linewidth]{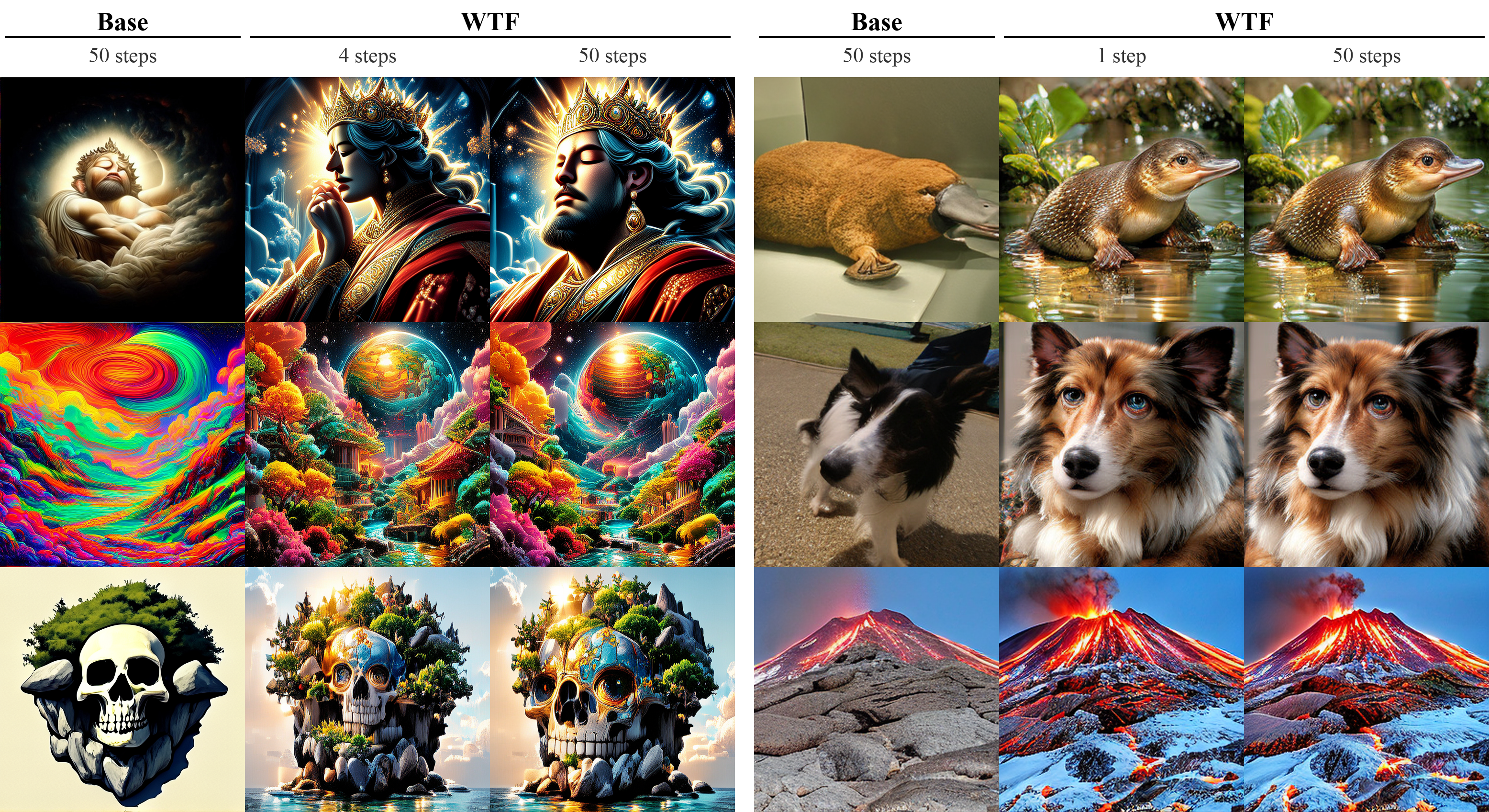}
	\caption{\textbf{Reward-aligned generation at few steps.}
		Flow maps fine-tuned with WTF for HPSv2~\citep{wu2023humanpreferencescorev2}, a learned
		human-preference reward.
		WTF at $4$ steps (text-to-image, left) and $1$ step (ImageNet-256, right) matches
		the quality of its own $50$-step generation; the base model is shown at $50$ steps.
		Prompts are given in~\cref{app:qual_t2i}.}
	\label{fig:front_hero}
\end{figure}

\section{Introduction}
\label{sec:intro}
Flow-based generative models are state-of-the-art for high-fidelity synthesis across continuous and discrete modalities including images~\citep{rombach2022high, esser_scaling_2024}, video~\citep{blattmann2023stable}, protein structure~\citep{abramson2024}, materials~\citep{zeni2025}, and text~\citep{lee2026flowmaplanguagemodels, roos2026categoricalflowmaps, potaptchik2026discreteflowmaps}.
These models are pre-trained on large corpora of data to enable generation from the distribution observed during training, but in practice, we rarely desire samples without further preference.
Applications in creative generation, language modeling, and scientific design generically require samples that are likely under the data distribution but which also attain a high score under a \textit{reward function} $r$, which quantifies, for example, alignment with human aesthetic preferences~\citep{xu2023imagerewardlearningevaluatinghuman,wu2023humanpreferencescorev2}, adherence to safety guidelines or cultural norms~\citep{ouyang2022training,gao2023scaling}, or suitability for downstream tasks such as binding affinity~\citep{watson2023,corso2023diffdock}.
The goal of \emph{reward fine-tuning}\footnote{We write ``fine-tuning'' for ``reward fine-tuning'' throughout.} is to leverage an additional post-training phase to update the model's weights so that its samples improve $r$ while remaining close to its base distribution.
The supervision comes from the reward, with no target samples provided, making this a reinforcement learning problem.
Most of these methods require expensive rollouts from the pre-trained model to score terminal samples under $r$, which makes rollout cost a major component of the post-training budget.

The prevailing theoretical formulation casts reward fine-tuning as sampling from a \emph{reward-tilted} distribution $\tilde{\rho}_1(x) \propto e^{\lambda\, r(x)}\, \rho_1(x)$, where $\rho_1$ is the model's sampling distribution and $\lambda > 0$ is the inverse temperature reward scale.
This reward tilt is the solution to a KL-regularized reward-maximization problem.
Qualitatively, this definition reweights samples from $\rho_1$ according to the reward, making samples with higher reward more likely even if they were rare under the original flow.
Significant recent effort has gone into algorithms that approximate this distribution~\citep{clark2024directlyfinetuningdiffusionmodels, domingoenrich2025adjointmatchingfinetuningflow, potaptchik2026metaflowmapsenable, holderrieth2025diamond, uehara2025inferencetimealignmentdiffusionmodels} through an equivalent diffusion process, rather than operating on the flow itself.
While seemingly awkward, this occurs because the corresponding pathwise KL regularizer can be computed efficiently for diffusions via Girsanov's theorem~\citep{karatzas2014brownian}.
Existing methods therefore adopt an approach in which the flow is first converted into a diffusion, the diffusion is fine-tuned, and the fine-tuned diffusion is converted back into a flow.

The fine-tuning procedure is therefore chiefly diffusion-based, while the field has primarily moved towards the use of deterministic flows.
Furthermore, increasing attention has been placed on the distillation of flows into few-step \emph{flow maps}, which amortize the inference process and produce a sample in as few as one network evaluation~\citep{boffi_flow_2024, boffi2025self, song2023consistencymodels, kim_consistency_2024, geng2025meanflowsonestepgenerative}.
Recently, Flow Map Reward Guidance (FMRG)~\citep{huang2026guideflowfewstepalignment} suggested that this deterministic structure can be exploited for efficient reward alignment without relying on reward tilting, achieving few-step inference-time alignment.
For reward fine-tuning, however, existing procedures remain reliant on diffusion and cannot exploit these accelerated state transitions during training.
A flow-centric fine-tuning paradigm could leverage these distilled generators for dramatically accelerated rollouts, significantly reducing the central computational burden of post-training.

This mismatch between flow map deployment and existing fine-tuning procedures motivates the central question of our work:
\begin{center}
	\emph{Is there an efficient framework for direct end-to-end fine-tuning of a deterministic flow map?}
\end{center}
\begin{figure}[tb]
\centering
\usetikzlibrary{decorations.markings}

\definecolor{baselavfill}{RGB}{232, 228, 245}
\definecolor{baselavline}{RGB}{155, 150, 180}
\definecolor{ftpurplefill}{RGB}{170, 145, 195}
\definecolor{ftpurpleline}{RGB}{75, 50, 115}
\definecolor{rewardgreen}{RGB}{80, 130, 80}

\begin{tikzpicture}[
    x=1cm, y=1cm,
    every node/.style={font=\small},
]
\pgfresetboundingbox
\useasboundingbox (-3.30, -0.40) rectangle (10.40, 2.10);

\begin{scope}[shift={(0, 0)}, xscale=2, yscale=1]

    \node[anchor=south, font=\small\bfseries, black] at (0, 1.65)
        {Wasserstein tilt (ours)};
    \node[anchor=south, font=\footnotesize, black] at (0, 0.95)
        {$\arg\max_\nu\, \big\{\, \lambda\, \E_\nu[r] - \Tprior(\rho_0, \nu)\, \big\}$};

    \draw[->, thick] (-1.5, 0) -- (1.55, 0) node[anchor=west, font=\footnotesize] {$x$};

    \begin{scope}[yscale=1.4]
        \fill[rewardgreen, opacity=0.12]
            plot[smooth, variable=\x, domain=-1.5:1.5, samples=240]
                (\x, {0.2 + 0.3*exp(-8*(\x-0.7)*(\x-0.7))
                          + 0.4*exp(-6*(\x+0.75)*(\x+0.75))})
            -- (1.5, 0) -- (-1.5, 0) -- cycle;
        \draw[rewardgreen, thick]
            plot[smooth, variable=\x, domain=-1.5:1.5, samples=240]
                (\x, {0.2 + 0.3*exp(-8*(\x-0.7)*(\x-0.7))
                          + 0.4*exp(-6*(\x+0.75)*(\x+0.75))});
    \end{scope}

    \begin{scope}[yscale=2]
        \fill[baselavfill, opacity=0.85]
            plot[smooth, variable=\x, domain=-1.5:1.5, samples=300]
                (\x, {0.25*exp(-36*(\x-1)*(\x-1))   + 0.25*exp(-36*(\x+1)*(\x+1))
                    + 0.25*exp(-18*(\x-0.3)*(\x-0.3)) + 0.25*exp(-18*(\x+0.3)*(\x+0.3))})
            -- (1.5, 0) -- (-1.5, 0) -- cycle;
        \draw[baselavline, line width=0.5pt]
            plot[smooth, variable=\x, domain=-1.5:1.5, samples=300]
                (\x, {0.25*exp(-36*(\x-1)*(\x-1))   + 0.25*exp(-36*(\x+1)*(\x+1))
                    + 0.25*exp(-18*(\x-0.3)*(\x-0.3)) + 0.25*exp(-18*(\x+0.3)*(\x+0.3))});

        \fill[ftpurplefill, opacity=0.55]
            plot[smooth, variable=\x, domain=-1.5:1.5, samples=300]
                (\x, {0.25*exp(-36*(\x-0.83)*(\x-0.83))   + 0.25*exp(-36*(\x+0.80)*(\x+0.80))
                    + 0.25*exp(-18*(\x-0.48)*(\x-0.48))   + 0.25*exp(-18*(\x+0.52)*(\x+0.52))})
            -- (1.5, 0) -- (-1.5, 0) -- cycle;
        \draw[ftpurpleline, line width=0.7pt]
            plot[smooth, variable=\x, domain=-1.5:1.5, samples=300]
                (\x, {0.25*exp(-36*(\x-0.83)*(\x-0.83))   + 0.25*exp(-36*(\x+0.80)*(\x+0.80))
                    + 0.25*exp(-18*(\x-0.48)*(\x-0.48))   + 0.25*exp(-18*(\x+0.52)*(\x+0.52))});

        \draw[ftpurpleline, line width=0.85pt, -{Stealth[length=4pt, width=3pt]}]
            ( 1.00, 0.34) -- ( 0.83, 0.34);
        \draw[ftpurpleline, line width=0.85pt, -{Stealth[length=4pt, width=3pt]}]
            (-1.00, 0.34) -- (-0.80, 0.34);
        \draw[ftpurpleline, line width=0.85pt, -{Stealth[length=4pt, width=3pt]}]
            ( 0.30, 0.34) -- ( 0.48, 0.34);
        \draw[ftpurpleline, line width=0.85pt, -{Stealth[length=4pt, width=3pt]}]
            (-0.30, 0.34) -- (-0.52, 0.34);
    \end{scope}

    \node[anchor=north, font=\scriptsize, ftpurpleline] at (0, -0.05)
        {local transport toward nearby high-reward regions};
\end{scope}

\begin{scope}[shift={(7, 0)}, xscale=2, yscale=1]

    \node[anchor=south, font=\small\bfseries, black] at (0, 1.65)
        {Reward tilt};
    \node[anchor=south, font=\footnotesize, black] at (0, 0.95)
        {$\arg\max_\nu\, \big\{\, \lambda\, \E_\nu[r] - \kl{\nu}{\rho_1}\, \big\} \propto e^{\lambda r} \rho_1$};

    \draw[->, thick] (-1.5, 0) -- (1.55, 0) node[anchor=west, font=\footnotesize] {$x$};

    \begin{scope}[yscale=1.4]
        \fill[rewardgreen, opacity=0.12]
            plot[smooth, variable=\x, domain=-1.5:1.5, samples=240]
                (\x, {0.2 + 0.3*exp(-8*(\x-0.7)*(\x-0.7))
                          + 0.4*exp(-6*(\x+0.75)*(\x+0.75))})
            -- (1.5, 0) -- (-1.5, 0) -- cycle;
        \draw[rewardgreen, thick]
            plot[smooth, variable=\x, domain=-1.5:1.5, samples=240]
                (\x, {0.2 + 0.3*exp(-8*(\x-0.7)*(\x-0.7))
                          + 0.4*exp(-6*(\x+0.75)*(\x+0.75))});
    \end{scope}

    \begin{scope}[yscale=2]
        \fill[baselavfill, opacity=0.85]
            plot[smooth, variable=\x, domain=-1.5:1.5, samples=300]
                (\x, {0.25*exp(-36*(\x-1)*(\x-1))   + 0.25*exp(-36*(\x+1)*(\x+1))
                    + 0.25*exp(-18*(\x-0.3)*(\x-0.3)) + 0.25*exp(-18*(\x+0.3)*(\x+0.3))})
            -- (1.5, 0) -- (-1.5, 0) -- cycle;
        \draw[baselavline, line width=0.5pt]
            plot[smooth, variable=\x, domain=-1.5:1.5, samples=300]
                (\x, {0.25*exp(-36*(\x-1)*(\x-1))   + 0.25*exp(-36*(\x+1)*(\x+1))
                    + 0.25*exp(-18*(\x-0.3)*(\x-0.3)) + 0.25*exp(-18*(\x+0.3)*(\x+0.3))});

        \fill[ftpurplefill, opacity=0.55]
            plot[smooth, variable=\x, domain=-1.5:1.5, samples=320]
                (\x, {0.3222
                    * (0.25*exp(-36*(\x-1)*(\x-1))   + 0.25*exp(-36*(\x+1)*(\x+1))
                     + 0.25*exp(-18*(\x-0.3)*(\x-0.3)) + 0.25*exp(-18*(\x+0.3)*(\x+0.3)))
                    * exp(3 * (0.2 + 0.3*exp(-8*(\x-0.7)*(\x-0.7))
                                  + 0.4*exp(-6*(\x+0.75)*(\x+0.75))))})
            -- (1.5, 0) -- (-1.5, 0) -- cycle;
        \draw[ftpurpleline, line width=0.7pt]
            plot[smooth, variable=\x, domain=-1.5:1.5, samples=320]
                (\x, {0.3222
                    * (0.25*exp(-36*(\x-1)*(\x-1))   + 0.25*exp(-36*(\x+1)*(\x+1))
                     + 0.25*exp(-18*(\x-0.3)*(\x-0.3)) + 0.25*exp(-18*(\x+0.3)*(\x+0.3)))
                    * exp(3 * (0.2 + 0.3*exp(-8*(\x-0.7)*(\x-0.7))
                                  + 0.4*exp(-6*(\x+0.75)*(\x+0.75))))});

        \draw[ftpurpleline, line width=0.5pt, -{Stealth[length=1.0pt, width=0.8pt]}]
            (-1.05, 0.237) -- (-1.05, 0.269);
        \draw[ftpurpleline, line width=0.85pt, -{Stealth[length=2.6pt, width=2.1pt]}]
            (-0.95, 0.237) -- (-0.95, 0.346);
        \draw[ftpurpleline, line width=0.5pt, -{Stealth[length=1.0pt, width=0.8pt]}]
            (-0.30, 0.246) -- (-0.30, 0.216);
        \draw[ftpurpleline, line width=0.65pt, -{Stealth[length=1.6pt, width=1.3pt]}]
            (-0.22, 0.226) -- (-0.22, 0.175);
        \draw[ftpurpleline, line width=0.7pt, -{Stealth[length=1.9pt, width=1.5pt]}]
            ( 0.22, 0.226) -- ( 0.22, 0.162);
        \draw[ftpurpleline, line width=0.65pt, -{Stealth[length=1.6pt, width=1.3pt]}]
            ( 0.30, 0.246) -- ( 0.30, 0.195);
        \draw[ftpurpleline, line width=0.5pt, -{Stealth[length=1.0pt, width=0.8pt]}]
            ( 1.05, 0.227) -- ( 1.05, 0.196);
        \draw[ftpurpleline, line width=0.55pt, -{Stealth[length=1.1pt, width=0.9pt]}]
            ( 1.10, 0.180) -- ( 1.10, 0.145);
    \end{scope}

    \node[anchor=north, font=\scriptsize, ftpurpleline] at (0, -0.05)
        {reweighting within existing support};
\end{scope}

\end{tikzpicture}
\caption{\textbf{Overview.}
	\textit{(Left)} Our Wasserstein-tilted approach transports each base sample locally to improve its reward, with the displacement magnitude regularized by the transport cost $\Tprior$.
	\textit{(Right)} Standard reward tilting targets $\tilde\rho_1 \propto e^{\lambda r}\rho_1$, which keeps mode positions fixed and only reweights mass within them.}
\label{fig:overview}
\end{figure}

To construct such a framework, we propose to measure deviation from the base distribution via an alternative optimal transport cost built directly from the pre-trained drift.
Unlike KL reward tilting, the resulting objective transports individual samples toward higher reward rather than reweighting the base distribution.
This choice is natural for a deterministic flow because the cost measures how much the dynamics must be steered away from the pre-trained drift.
We show that the resulting fine-tuning problem admits an equivalent deterministic optimal control formulation.
Leveraging this perspective, we arrive at a new class of efficient algorithms in which the flow map emerges as a core component, enabling \textit{simulation-free} backpropagation through model rollouts by amortizing model inference.
We call the resulting framework Wasserstein-tilted flow maps (WTF).

Our \textbf{main contributions} are:
\begin{enumerate}[leftmargin=*, itemsep=2pt, topsep=2pt]
	\item
	      We introduce an optimal transport regularizer between the prior and the candidate terminal distribution that we build directly from the pre-trained drift.
	      We show that its induced fine-tuning objective decouples per particle, moving each sample toward higher reward in contrast to the \emph{reweighting} performed by the reward tilt.

	\item
	      We prove that fine-tuning under our regularizer is equivalent to a deterministic optimal control problem with terminal reward, and identify it as the rescaled small-noise limit of stochastic optimal control-based algorithms for reward tilting.
	\item We devise the first end-to-end fine-tuning algorithm for flow maps.
	      Our method exploits the flow map's ability to advance the dynamics in a single network evaluation.
          In particular, we introduce a Monte Carlo estimate of the deterministic value function, so that terminal samples and value estimates require $O(1)$ flow map evaluations instead of $O(T)$ network evaluations for a $T$-step rollout.  
	      The Monte Carlo value estimator is unbiased and critic-free, and the fine-tuned model retains the flow map's few-step inference at deployment.

	\item %
	      We evaluate WTF on high-resolution class-conditional image synthesis on ImageNet-256 and text-to-image modeling with the TiM-T2I checkpoint~\citep{wang2025transition}. Across both benchmarks, WTF matches or surpasses baselines on reward. At text-to-image scale, WTF reaches matched reward with up to $280\times$ and $47\times$ less compute than zeroth-order and first-order baselines, respectively.
\end{enumerate}

\section{Background}
\label{sec:background}

\subsection{Flow and flow map-based generative models}
\label{ssec:bg_flow_matching}
In this work, we assume access to a pre-trained flow-based generative model specified by a velocity field $b : [0, 1] \times \R^d \to \R^d$.
The resulting probability flow is given by
\begin{equation}
	\label{eqn:pflow}
	\dot{x}_t = b_t(x_t), \qquad x_0 \sim \rho_0,
\end{equation}
which transports samples from a tractable prior distribution $\rho_0 = \Normal(0, \Id)$ to the pre-trained distribution $\rho_1$.
The velocity $b_t$ is typically learned over a stochastic interpolant such as $I_t = (1-t)x_0 + t x_1$ with $x_0 \sim \rho_0$ and $x_1 \sim \rho_1$ by minimizing the flow matching objective~\citep{lipman2022flow, liu2022flow, albergo2023stochastic, albergo2022building}, but can also be obtained from the probability flow for a diffusion model~\citep{song2020score,boffi_probability_2023,karras_elucidating_2022}.
Samples from the pre-trained $\rho_1$ can be obtained by solving~\cref{eqn:pflow} numerically, which typically requires tens to hundreds of network evaluations and is computationally demanding~\citep{karras_elucidating_2022}.

To avoid the cost of this numerical integration, recent work has centered on learning the \textit{flow map} $X : [0, 1]^2 \times \R^d \to \R^d$~\citep{boffi_flow_2024, boffi2025self, song2023consistencymodels, kim_consistency_2024, geng2025meanflowsonestepgenerative, frans2025stepdiffusionshortcutmodels}, which is the solution operator of~\cref{eqn:pflow}.
By definition, the flow map satisfies the \textit{jump condition} along trajectories for any $(s, t) \in [0, 1]^2$,
\begin{equation}
	\label{eqn:jump_cond}
	X_{s, t}(x_s) = x_t.
\end{equation}
Given $X_{s, t}$, we can generate a sample from $\rho_1$ in a single evaluation $x_1 = X_{0, 1}(x_0)$.
More generally, we can use the semigroup property $X_{s, t} = X_{u, t} \circ X_{s, u}$ to implement higher-accuracy multi-step sampling on an arbitrary grid~\citep{boffi_flow_2024,boffi2025self}.
Recent applications of this approach obtain samples in $\leq 4$ steps that match the performance of many-step flows~\citep{geng2025meanflowsonestepgenerative,sabour2025align,zhou2025terminal}.

A complete self-contained review on both flows and flow maps is provided in~\cref{app:flow_maps}.

\subsection{Reward fine-tuning}
\label{ssec:bg_reward_tilt}
Existing approaches for reward fine-tuning and alignment of flow- and diffusion-based generative models take the reward-tilted distribution with an inverse temperature $\lambda > 0$
\begin{equation}
	\label{eqn:reward_tilt}
	\tilde{\rho}_1(x) = \frac{1}{Z}e^{\lambda\, r(x)}\, \rho_1(x), \quad Z = \int e^{\lambda r(x)}\rho_1(x)dx,
\end{equation}
as the fundamental object of interest, and design algorithms to approximately sample it~\citep{clark2024directlyfinetuningdiffusionmodels, domingoenrich2025adjointmatchingfinetuningflow, uehara2025inferencetimealignmentdiffusionmodels,potaptchik2026metaflowmapsenable,holderrieth2025diamond,holderrieth2025glassflowstransitionsampling}.

A classical result based on the Doob $h$-transform~\citep{Doob,karatzas2014brownian} establishes that one can sample from $\tilde{\rho}_1$ by estimating the gradient of a stochastic value function $U:[0, 1]\times \R^d\to\R$,
\begin{equation}
	\label{eqn:bg_value_fn}
	U_t(x) = \log \E\!\left[\, e^{\lambda\, r(X_1)} \,\big|\, X_t = x\, \right],
\end{equation}
where the conditional expectation is taken over a \emph{memoryless} stochastic process
\begin{equation}
	\label{eqn:bg_memoryless_sde}
	dX_t = b_t(X_t) dt + \tfrac{1}{2}\sigma^2(t)\, \nabla\log\rho_t(X_t) dt + \sigma(t) dW_t,
\end{equation}
with $\sigma(t) = \sqrt{2(1-t)/t}$ constructed so that $X_0$ and $X_1$ are independent~\citep{domingoenrich2025adjointmatchingfinetuningflow}.
In~\cref{eqn:bg_memoryless_sde}, $\rho_t = \Law(x_t)$ is the density of the probability flow~\cref{eqn:pflow}.
Given this value function, the modified stochastic process
\begin{equation}
	\label{eqn:controlled_sde}
	d\tilde{X}_t = b_t(\tilde{X}_t) dt + \tfrac{1}{2}\sigma^2(t) \nabla\log\rho_t(\tilde{X}_t) dt + \sigma^2(t)\nabla U_t(\tilde{X}_t)dt + \sigma(t) dW_t,
\end{equation}
then satisfies that $\tilde{X}_1 \sim \tilde{\rho}_1$ is a sample from the reward-tilted measure~\cref{eqn:reward_tilt}.
The fine-tuned diffusion process~\cref{eqn:controlled_sde} can then be converted back into an equivalent fine-tuned flow,
\begin{equation}
	\label{eqn:controlled_flow}
	\dot{\tilde{x}}_t = b_t(\tilde{x}_t) + \tfrac{1}{2}\sigma^2(t)\, \nabla U_t(\tilde{x}_t), \:\: \tilde{x}_0 \sim \rho_0,
\end{equation}
whose terminal law $\Law(\tilde{x}_1) = \tilde{\rho}_1$ is also the reward-tilted measure~\cref{eqn:reward_tilt}.
We refer the reader to~\cref{app:soc_background} for the full derivation of these processes and why they sample from the reward tilt.

\section{Wasserstein-tilted flow maps}
\label{sec:methods}
While conceptually elegant, the above recipe is clunky in practice, as it revolves around fine-tuning a diffusion as a surrogate for the flow.
Computationally, the value gradient $\nabla_x U_t(x)$ depends on a terminal-time conditional expectation, which requires expensive rollouts of the auxiliary process~\cref{eqn:bg_memoryless_sde} all the way to $t = 1$.
These rollouts are slow because stable integration requires many small steps, and they can be high variance because the exponential $e^{\lambda r(X_1)}$ can be dominated by a small number of samples with high reward.
To address these pathologies, we develop a fine-tuning procedure that operates directly on the flow and uses the flow map to amortize deterministic value gradient rollouts. \Cref{fig:rollout_cost} compares the resulting value estimators.
Throughout this section, all stated results assume the standard regularity conditions on $r$, $b$, and $\rho_0$ collected in~\cref{app:standing_assumptions}.

\subsection{An optimal transport regularizer}
\label{ssec:from-regularizer-to-OT}
At a qualitative level, reward fine-tuning aims to find a new terminal distribution $\tilde{\rho}_1$ that maximizes the expected reward while ``staying close'' to the pre-trained flow.
This can be made precise as the regularized reward-maximization problem
\begin{equation}
	\label{eqn:wtf-abstract}
	\tilde{\rho}_1 = \argmax_{\nu\in\calP(\R^d)}\ \lambda\, \E_{x \sim \nu}[r(x)] - \calR(\nu),
\end{equation}
where $\nu$ ranges over candidate terminal distributions on $\R^d$ and $\calR:\calP(\R^d)\to\R_{\geq 0}$ is a regularizer that penalizes the departure of $\nu$ from the base.
Intuitively, the choice of $\calR$ determines our definition of ``close''.
Setting $\calR(\nu) = \kl{\nu}{\rho_1}$ recovers the reward-tilted distribution~\cref{eqn:reward_tilt} of~\cref{ssec:bg_reward_tilt}. Setting $\calR$ to the entropic Schr\"odinger-bridge cost against the auxiliary process~\cref{eqn:bg_memoryless_sde} recovers the approach of~\citet{domingoenrich2025adjointmatchingfinetuningflow}. \Cref{app:sb_zero_noise} gives this connection in detail.
Existing algorithms for these choices are stochastic and rely on a diffusion process.

We instead seek a regularizer defined directly by the deterministic flow.

As the flow's primitive operation is transport, we argue that it is natural to search for $\calR$ over optimal-transport problems.
The general Kantorovich functional~\citep{villani2009optimal} with cost $c:\R^d\times\R^d\to\R_{\geq 0}$,
\begin{equation}
	\label{eqn:kantorovich-general}
	\mathcal{T}_c(\mu, \nu) = \inf_{\pi \in \Pi(\mu, \nu)} \int c(x, y)\, \pi(dx\, dy),
\end{equation}
defines a family of candidate regularizers $\calR(\nu) = \mathcal{T}_c(\mu, \nu)$ parameterized by the cost $c$ and the source measure $\mu$, where $\Pi(\mu, \nu)$ denotes the set of couplings with marginals $\mu$ and $\nu$.

\paragraph{Defining the regularizer.}
The choice $c(x, y) = \tfrac{1}{2}\norm{x - y}^2$ defines the standard $W_2$ distance~\citep{villani2009optimal}, but neither $\calR(\nu) = \tfrac{1}{2} W_2^2(\rho_1, \nu)$ nor $\calR(\nu) = \tfrac{1}{2} W_2^2(\rho_0, \nu)$ is suitable for our goals.
In the former case, the optimizer is a transport map between $\rho_1$ and $\nu$ that must be composed with the base model at inference, leaving the fine-tuned model a two-stage object.
In the latter, the cost knows nothing about $b$ or $\rho_1$.
We therefore propose a regularizer $\calR(\nu) = \mathcal{T}_{\cprior}(\rho_0, \nu)$, where $\cprior$ measures the \textit{residual control} required to \textit{steer} the base flow from an initial point $x\in\R^d$ to a target point $y\in\R^d$.
The source $\rho_0$ ensures that the optimizer is a flow from the same prior as the base model.

For a candidate velocity $v$, we define a Lagrangian $\Lprior: [0, 1]\times \R^d\times\R^d\to\R_{\geq 0}$
to measure the instantaneous deviation of $v$ from the base drift.
We then define our cost function $c_b:\R^d\times\R^d\to\R_{\geq 0}$ as the integrated Lagrangian over paths $\omega : [0, 1] \to \R^d$ joining $x$ to $y$,
\begin{equation}
	\label{eqn:cb-cost}
	\Lprior(t, x, v) = \tfrac{1}{2}\norm{v - b_t(x)}^2, \quad
	\cprior(x, y) = \inf_{\omega_0 = x,\, \omega_1 = y}\ \int_0^1 \Lprior(t, \omega_t, \dot{\omega}_t)\, dt,
\end{equation}
which measures the minimum residual energy required to steer a particle from $x$ to $y$ while following $b_t$ as closely as possible.
We then take our regularizer to be the corresponding Kantorovich problem,
\begin{lavenderbox}
	\begin{equation}
		\label{eqn:Tb-static}
		R(\nu) = \Tprior(\rho_0, \nu) = \inf_{\pi \in \Pi(\rho_0,\, \nu)} \int \cprior(x, y)\, \pi(dx\, dy),
	\end{equation}
\end{lavenderbox}
which defines the minimum total residual energy needed to steer the base law $\rho_0$ to a candidate terminal law $\nu$ along the dynamics $b_t$.
When $b \equiv 0$, $\cprior(x, y) = \tfrac{1}{2}\norm{x - y}^2$ and $\Tprior(\rho_0, \nu) = \tfrac{1}{2} W_2^2(\rho_0, \nu)$, so that $\Tprior$ generalizes the Benamou--Brenier formulation of $W_2^2$~\citep{benamou2000computational} to a non-trivial reference.
This construction has been studied under the name \emph{optimal transport with prior} by~\citet{chen2017OTlineardynamics,chen2016SBcontrol,chen2021stochastic}; we adapt their framework here to fine-tuning of deterministic generative flows.

\begin{figure}[tb]
\centering
\hspace*{-1.2cm}
\resizebox{\linewidth}{!}{\usetikzlibrary{arrows.meta, decorations.pathmorphing, positioning, calc, shapes.geometric}

\definecolor{rolloutgray}{RGB}{135, 135, 145}
\definecolor{rolloutdark}{RGB}{75, 75, 85}
\definecolor{controlblue}{RGB}{72, 110, 180}
\definecolor{controlbluedark}{RGB}{40, 75, 145}
\definecolor{ctrlamber}{RGB}{198, 125, 28}
\definecolor{rewardgreen}{RGB}{60, 120, 80}
\definecolor{ftpurple}{RGB}{75, 50, 115}
\definecolor{criticpurple}{RGB}{135, 100, 165}
\definecolor{criticpurpledark}{RGB}{90, 65, 125}
\definecolor{biaspink}{RGB}{195, 110, 110}
\definecolor{goodgreen}{RGB}{75, 155, 95}
\definecolor{dividergray}{RGB}{200, 200, 210}

\begin{tikzpicture}[
    x=1cm, y=1cm,
    every node/.style={font=\small},
    dot/.style={circle, fill=ftpurple, draw=white, line width=0.5pt,
                inner sep=0pt, minimum size=4.5pt},
    enddot/.style={circle, fill=black, draw=white, line width=0.5pt,
                   inner sep=0pt, minimum size=5pt},
    fmlabel/.style={fill=white, draw=controlbluedark, line width=0.7pt,
                    inner sep=2.5pt, font=\small\bfseries, text=controlbluedark,
                    rounded corners=2pt},
    netbox/.style={rectangle, draw=criticpurpledark, fill=criticpurple!18,
                   line width=0.8pt, rounded corners=3pt, inner sep=7pt,
                   font=\small\bfseries, text=criticpurpledark,
                   minimum width=24pt, minimum height=24pt, align=center},
    pillgood/.style={rectangle, fill=goodgreen!18, draw=goodgreen!75, line width=0.5pt,
                     rounded corners=4pt, inner xsep=4pt, inner ysep=2pt,
                     text height=1.6ex, text depth=0.5ex,
                     minimum height=2.6ex, minimum width=1.6cm,
                     font=\footnotesize\bfseries, text=goodgreen!50!black},
    pillbad/.style={rectangle, fill=biaspink!18, draw=biaspink, line width=0.5pt,
                    rounded corners=4pt, inner xsep=4pt, inner ysep=2pt,
                    text height=1.6ex, text depth=0.5ex,
                    minimum height=2.6ex, minimum width=1.6cm,
                    font=\footnotesize\bfseries, text=biaspink!60!black},
]

\draw[dividergray, dashed, line width=0.5pt] (7.5, 0.2) -- (7.5, 4.6);
\draw[dividergray, dashed, line width=0.5pt] (15.5, 0.2) -- (15.5, 4.6);

\begin{scope}[shift={(0, 0)}]

\node[anchor=south, font=\large\bfseries] at (3.15, 4.2) {(a) Actor--Critic};

\node[dot] (xt_a) at (0.7, 1.85) {};
\node[anchor=east, xshift=-4pt, font=\small, ftpurple] at (xt_a) {$x_t$};

\draw[criticpurpledark, line width=1.3pt, -{Stealth[length=5pt]}]
    (xt_a) -- (2.3, 1.85);

\node[netbox, anchor=center] (critic) at (3.15, 1.85)
    {$V_\theta(t,x)$};

\draw[criticpurpledark, line width=1.3pt, -{Stealth[length=5pt]}]
    (critic.east) -- (5.4, 1.85);
\node[enddot] (vout) at (5.6, 1.85) {};
\node[anchor=west, xshift=4pt, font=\small, criticpurpledark]
    at (vout) {$\widehat V_t$};

\draw[ctrlamber, line width=1pt, dashed, -{Stealth[length=4.5pt]}]
    (5.6, 2.55) .. controls (4.0, 3.30) and (2.3, 3.30) .. (0.7, 2.55);
\node[font=\footnotesize, ctrlamber, anchor=south] at (3.15, 3.40)
    {backprop $\nabla_{x_t} V_\theta$};

\node[font=\footnotesize, criticpurpledark, anchor=north] at (3.15, 1.30)
    {learned critic network};

\node[pillgood, anchor=north] at (0.7, 0.80) {$O(1)$};
\node[pillbad, anchor=north]  at (3.15, 0.80) {biased};
\node[pillbad, anchor=north]  at (5.6, 0.80) {needs critic};

\end{scope}

\begin{scope}[shift={(8, 0)}]

\node[anchor=south, font=\large\bfseries] at (3.15, 4.2) {(b) Multi-step rollout};

\def\xtPos{0.7}
\def\xRPos{5.6}
\pgfmathsetmacro{\dx}{(\xRPos - \xtPos) / 9}

\foreach \i [evaluate=\i as \px using \xtPos + \i*\dx,
             evaluate=\i as \py using 1.85 + 0.14*sin(\i*60) + 0.04*sin(\i*180)] in {0,...,9} {
    \coordinate (rp\i) at (\px, \py);
}
\foreach \i in {0,...,8} {
    \pgfmathtruncatemacro{\j}{\i + 1}
    \draw[rolloutgray, line width=1.4pt, -{Stealth[length=2.8pt, width=2.5pt]}]
        (rp\i) -- (rp\j);
}
\foreach \i in {1,...,8} {
    \node[circle, fill=rolloutgray, inner sep=0pt, minimum size=3pt] at (rp\i) {};
}

\node[dot] (xt_b) at (\xtPos, 1.85) {};
\node[anchor=east, xshift=-4pt, font=\small, ftpurple] at (xt_b) {$x_t$};
\node[enddot] (x1_b) at (\xRPos, 1.85) {};
\node[anchor=west, xshift=4pt, font=\small, rewardgreen] at (x1_b) {$r(x_1)$};

\draw[ctrlamber, line width=1pt, dashed, -{Stealth[length=4.5pt]}]
    (5.6, 2.55) .. controls (4.0, 3.30) and (2.3, 3.30) .. (0.7, 2.55);
\node[font=\footnotesize, ctrlamber, anchor=south] at (3.15, 3.40)
    {backprop $\times T$};

\node[font=\footnotesize, rolloutdark, anchor=north] at (3.15, 1.30)
    {$T$ network calls};

\node[pillgood, anchor=north] at (0.7, 0.80) {unbiased};
\node[pillbad, anchor=north]  at (3.15, 0.80) {$O(T)$ rollout};
\node[pillbad, anchor=north]  at (5.6, 0.80) {$O(T)$ memory};

\end{scope}

\begin{scope}[shift={(16, 0)}]

\node[anchor=south, font=\large\bfseries, controlbluedark] at (3.15, 4.2)
    {(c) WTF (ours)};

\draw[controlblue, line width=2.1pt, line cap=round,
      -{Stealth[length=8pt, width=6.5pt, inset=2.5pt]}]
    (0.7, 1.85) .. controls (1.6, 2.30) and (2.5, 2.30) .. (3.15, 2.0);
\node[fmlabel] at (1.95, 2.40) {$X^{\bar u}_{t,\tau}$};

\draw[controlblue, line width=2.1pt, line cap=round,
      -{Stealth[length=8pt, width=6.5pt, inset=2.5pt]}]
    (3.15, 2.0) .. controls (4.0, 2.30) and (4.8, 2.30) .. (5.6, 1.85);
\node[fmlabel] at (4.4, 2.40) {$X^{\bar u}_{\tau,1}$};

\node[dot] (xt_c) at (0.7, 1.85) {};
\node[anchor=east, xshift=-4pt, font=\small, ftpurple] at (xt_c) {$x_t$};

\node[dot] (xtau_c) at (3.15, 2.0) {};

\draw[ctrlamber, line width=1.3pt, -{Stealth[length=3.5pt, width=2.8pt]}]
    (xtau_c) -- ++(0.12, 0.42);
\node[font=\footnotesize, ctrlamber, anchor=south west, xshift=-3pt, yshift=-3pt]
    at ($(xtau_c) + (0.12, 0.42)$) {$\bar u_\tau$};

\node[anchor=north, yshift=-5pt, font=\footnotesize, ftpurple] at (xtau_c) {$x_\tau$};

\node[enddot] (x1_c) at (5.6, 1.85) {};
\node[anchor=west, xshift=4pt, font=\small, rewardgreen] at (x1_c) {$r(x_1)$};

\draw[ctrlamber, line width=1pt, dashed, -{Stealth[length=4.5pt]}]
    (5.6, 2.55) .. controls (4.0, 3.30) and (2.3, 3.30) .. (0.7, 2.55);
\node[font=\footnotesize, ctrlamber, anchor=south] at (3.15, 3.40)
    {single Jacobian backprop};

\node[font=\footnotesize, controlbluedark, anchor=north] at (3.15, 1.30)
    {$2$ flow map calls};

\node[pillgood, anchor=north] at (0.7, 0.80) {unbiased};
\node[pillgood, anchor=north] at (3.15, 0.80) {$O(1)$};
\node[pillgood, anchor=north] at (5.6, 0.80) {no critic};

\end{scope}

\end{tikzpicture}}
\caption{\textbf{Flow map value estimation.}
  Common alternatives use either a learned critic, as in actor--critic methods, or a multi-step rollout that backpropagates through $T$ steps.
  WTF instead reaches $x_1$ from $x_t$ in two flow map evaluations and a single Jacobian backprop, which is $O(1)$, unbiased, and does not require a critic.
  }
\label{fig:rollout_cost}
\end{figure}

\subsection{Understanding the regularizer}
\label{ssec:understanding-regularizer}
Combining $\Tprior$ with the reward function $r$ gives the regularized reward maximization problem
\begin{equation}
	\label{eqn:wtf-reward-objective}
	\tilde{\rho}_1 = \argmax_{\nu \in \calP(\R^d)}\ \left\{\ \lambda\, \E_{x \sim \nu}[r(x)] - \Tprior(\rho_0, \nu)\ \right\},
\end{equation}
whose optimizer $\tilde{\rho}_1$ defines a \emph{Wasserstein-tilted flow} (WTF).
To understand the nature of this problem, we examine how the distribution-level optimization acts on individual source samples.
The following result shows that the optimum can be understood as transporting each source sample toward a high-reward endpoint, subject to the cost of deviating from the pre-trained flow.

\begin{restatable}{proposition}{ppdecomp}
	\label{prop:pointwise-decomp}
	Under the regularity conditions of~\cref{app:standing_assumptions},
	
	\begin{equation}
		\label{eqn:pointwise-decomp}
		\sup_{\nu}\ \left\{\ \lambda\, \E_{x \sim \nu}[r(x)] - \Tprior(\rho_0, \nu)\ \right\}
		\;=\;
		\E_{x \sim \rho_0}\!\left[\, \sup_{y \in \R^d}\ \bigl\{\, \lambda\, r(y) - \cprior(x, y)\, \bigr\}\, \right],
	\end{equation}
	and the optimum is attained by choosing $y^*(x) = \argmax_y \{\, \lambda\, r(y) - \cprior(x, y)\, \}$ for each $x \sim \rho_0$.
\end{restatable}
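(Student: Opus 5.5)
We prove the identity by rewriting the left-hand side as a single optimization over couplings and showing two inequalities.

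\emph{Reduction to couplings.} By the definition~\cref{eqn:Tb-static} of $\Tprior$, for any $\nu$,
\[
\lambda\,\E_\nu[r] - \Tprior(\rho_0,\nu) = \sup_{\pi\in\Pi(\rho_0,\nu)} \int \bigl(\lambda r(y) - \cprior(x,y)\bigr)\,\pi(dx\,dy).
\]
Here we use that $\E_\nu[r] = \int r(y)\,\pi(dx\,dy)$ for every coupling $\pi$ with second marginal $\nu$. This requires $r\in L^1(\nu)$, or at least that the expression is well defined in $[-\infty,\infty)$; the standing assumptions (e.g.\ $r$ bounded above) should guarantee this. Taking the supremum over $\nu$ and then over $\pi\in\Pi(\rho_0,\nu)$ is the same as taking the supremum over all $\pi$ whose first marginal is $\rho_0$, since the second marginal is free. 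So the left-hand side equals
\[
\sup_{\pi:\,\pi^1=\rho_0}\int \Phi(x,y)\,\pi(dx\,dy), \qquad \Phi(x,y) := \lambda r(y)-\cprior(x,y).
\]

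\emph{Upper bound.} Set $\phi(x) := \sup_y \Phi(x,y)$. Then $\Phi(x,y)\le \phi(x)$ pointwise. Integrating against any $\pi$ with first marginal $\rho_0$ gives $\int \Phi\,d\pi \le \E_{\rho_0}[\phi]$. This bounds the left-hand side by the right-hand side.

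\emph{Lower bound and attainment.} This is the main step and needs a measurable selection. I would first use the standing regularity conditions to make $\Phi(x,\cdot)$ well behaved. Take $b$ Lipschitz in $x$ and $r$ upper semicontinuous and bounded above, or with subquadratic growth. To get coercivity, compare $\cprior(x,y)$ with $\tfrac12\|y-X_{0,1}(x)\|^2$ up to constants. Change variables $\omega_t = X_{0,t}(\omega_0)+e_t$ and apply Gr\"onwall to get $\cprior(x,y)\ge c\,\|y - X_{0,1}(x)\|^2$. Also show $\cprior$ is lower semicontinuous jointly, via a direct-method argument on $H^1$ paths.

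With these facts, $\Phi$ is upper semicontinuous and $\Phi(x,\cdot)\to-\infty$ as $\|y\|\to\infty$, so the supremum over $y$ is attained. The argmax correspondence $x\mapsto \argmax_y\Phi(x,y)$ then has nonempty compact values and closed graph. The Kuratowski--Ryll-Nardzewski theorem, or a measurable maximum theorem, yields a Borel selector $y^*(x)$.

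Now define $\pi^* := (\mathrm{id},y^*)_\#\rho_0$ and $\nu^* := y^*_\#\rho_0$. Then $\int \Phi\,d\pi^* = \E_{\rho_0}[\phi]$, which gives equality. For attainment, note that $\pi^*$ achieves the coupling supremum above, so $\nu^*$ optimizes the original problem, and $\pi^*$ is an optimal coupling for $\Tprior(\rho_0,\nu^*)$. If the argmax is not attained under weaker assumptions, I would fall back on $\varepsilon$-optimal measurable selectors, which still give the value identity.

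\emph{Expected obstacle.} The delicate part is the integrability and selection step: measurability of $\phi$, finiteness of $\E_{\rho_0}[\phi]$, and lower semicontinuity and coercivity of $\cprior$ derived from the growth conditions on $b$.
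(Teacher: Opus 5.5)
Your proof is correct and follows the same route as the paper. You rewrite the objective as a supremum over couplings with first marginal $\rho_0$, optimize pointwise in $y$, and realize the optimum by the deterministic coupling $(\mathrm{id}, y^*)_\#\rho_0$, which is the paper's ``Dirac conditional'' step. Your split into an upper bound plus a measurable-selection lower bound makes rigorous the sup--integral exchange that the paper asserts without justification. Two small points: coercivity of $\cprior(x,\cdot)$ already follows from the linear-growth assumption via Gr\"onwall, so you do not need global Lipschitz. Your closed-graph claim for the argmax needs upper semicontinuity of $\cprior$ in $x$ (lower semicontinuity is not enough); this does hold here because $b$ is locally Lipschitz.
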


The proof, given in~\cref{app:pointwise_decomp}, follows from Kantorovich duality~\citep{villani2009optimal,santambrogio2015optimal}.
The result gives a concrete particle-level interpretation of the objective.
The optimal solution transports each source sample to the destination that gives its best reward-cost tradeoff, rather than selecting the terminal law only through global reweighting.

\paragraph{Comparison with KL reward tilting.}
The sample-level distinction above reflects a broader structural difference between the two objectives.
KL reward tilting reweights the base distribution according to reward, whereas WTF transports samples under a cost induced by the pre-trained drift.
As a consequence, reward reweighting preserves the base conditional distribution within each reward level set, while WTF can redistribute mass within a level set.
\Cref{app:synthetic} gives exact examples of this distinction.

While our proposed regularizer~\cref{eqn:Tb-static} is natural for the problem under study, and while~\cref{prop:pointwise-decomp} demonstrates a clean interpretation of what it does at the sample level, it is still unclear how to implement it algorithmically.
The following result shows that~\cref{eqn:wtf-reward-objective} can be reformulated as a deterministic optimal control problem with terminal reward, which will enable us to develop scalable algorithms.
\begin{restatable}{proposition}{wassersteintheorem}
	\label{thm:wasserstein_tilt}
	Under standard regularity conditions on $r$, $b$, and $\rho_0$, the optimal value of~\cref{eqn:wtf-reward-objective} equals the optimal value of the deterministic optimal control problem
	\begin{lavenderbox}
		\begin{equation}
			\label{eqn:oc_problem}
			\begin{aligned}
				 & \sup_{u}\ \E_{x_0 \sim \rho_0}\!\left[\, \lambda\, r(x_1^u) - \frac{1}{2} \int_0^1 \norm{u_t(x_t^u)}^2\, dt\, \right], \\
				 & \text{subject to} \quad \dot{x}_t^u = b_t(x_t^u) + u_t(x_t^u),\quad x_0^u = x_0.
			\end{aligned}
		\end{equation}
	\end{lavenderbox}
	Whenever an optimizer $u^*:[0, 1]\times\R^d\to\R^d$ exists, the terminal law $\rho^{u^*}_1 = \Law(x_1^{u^*})$ solves~\cref{eqn:wtf-reward-objective}.
\end{restatable}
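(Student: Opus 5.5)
The plan is to pass through \cref{prop:pointwise-decomp}, which already collapses the distributional problem~\cref{eqn:wtf-reward-objective} to the pointwise value $\E_{x\sim\rho_0}[\sup_y\{\lambda r(y) - \cprior(x,y)\}]$. It then suffices to show that this pointwise value equals the optimal value of~\cref{eqn:oc_problem}. I will do this with two inequalities, and then read off the statement about the terminal law.

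\textbf{Upper bound} ($\text{OC} \le \text{WTF}$). Fix any admissible feedback control $u$ for which the controlled ODE is well posed, and fix a starting point $x_0$. The path $\omega_t = x^u_t$ joins $x_0$ to $x^u_1$ and satisfies $\dot\omega_t - b_t(\omega_t) = u_t(x^u_t)$. Hence
\[
\tfrac12\int_0^1\norm{u_t(x^u_t)}^2dt = \int_0^1 \Lprior(t,\omega_t,\dot\omega_t)\,dt \;\ge\; \cprior(x_0,x^u_1).
\]
Therefore $\lambda r(x^u_1) - \tfrac12\int_0^1\norm{u_t(x_t^u)}^2dt \le \lambda r(x_1^u) - \cprior(x_0,x_1^u) \le \sup_y\{\lambda r(y)-\cprior(x_0,y)\}$. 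Taking the expectation over $x_0\sim\rho_0$ and then the supremum over $u$ gives the inequality. An equivalent route avoids the pointwise result: the map $x_0\mapsto x_1^u$ induces a coupling $\pi\in\Pi(\rho_0,\rho_1^u)$, so the control cost is at least $\Tprior(\rho_0,\rho^u_1)$, and $\rho^u_1$ is a feasible $\nu$.

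\textbf{Lower bound} ($\text{WTF}\le\text{OC}$). For each $x$, take the maximizer $y^*(x)$ from \cref{prop:pointwise-decomp} and a minimizing path $\omega^{x}$ for $\cprior(x,y^*(x))$. Existence of such a path follows from coercivity of $\Lprior$ under the standing assumptions, with $b$ Lipschitz and of linear growth. Setting $u_t(\omega^x_t) := \dot\omega^x_t - b_t(\omega^x_t)$ gives cost exactly $\cprior(x,y^*(x))$ and terminal point $y^*(x)$, so the OC objective reaches the WTF value.

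\textbf{The main obstacle} is that this $u$ is defined along trajectories and need not be a well-defined Lipschitz feedback field, since trajectories from different $x$ may cross. I will handle this in two ways. First, the Euler--Lagrange/Pontryagin equations show that each optimal path solves a Hamiltonian system with costate $p_t$ and control $u=p_t$. Via the method of characteristics, $p_t=\nabla V_t(x_t)$ for the value function $V$ solving the HJB equation $\partial_t V + b\cdot\nabla V + \tfrac12\norm{\nabla V}^2=0$ with $V_1=\lambda r$. On the region where $V$ is smooth, the feedback $u_t=\nabla V_t$ realizes the optimal paths. Second, as a fallback, I will interpret the supremum over $u$ as including open-loop (time-dependent, $x_0$-indexed) controls, for which the construction is immediate. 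I will then argue by approximation: smooth feedback controls approximate open-loop controls in cost, with the terminal reward handled through continuity of $r$ and dominated convergence. Crossings occur on a $\rho_0$-null set because $\rho_0$ is absolutely continuous and $V$ is semiconcave.

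\textbf{Terminal law.} If an optimizer $u^*$ exists, the chain of inequalities in the upper bound must hold with equality. This yields $\lambda\E_{\rho^{u^*}_1}[r]-\Tprior(\rho_0,\rho^{u^*}_1)\ge \text{OC}=\text{WTF}$. Since the WTF value is a supremum over $\nu$, this forces equality, so $\rho^{u^*}_1$ solves~\cref{eqn:wtf-reward-objective}.
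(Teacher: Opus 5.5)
Your proposal is correct in outline, but it takes a different route from the paper. The paper argues in Eulerian form. It invokes the Benamou--Brenier representation of $\Tprior$ with prior drift (\cref{lem:dynamic_static}, cited from Chen et al.). The upper bound follows because $(\rho^u, b+u)$ is feasible for the dynamic problem from $\rho_0$ to $\Law(x^u_1)$. The lower bound comes from taking a maximizer $\nu^*$ of the static objective and a dynamic minimizer $(\rho^*, v^*)$, then setting $u^* = v^* - b$.

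You instead argue trajectory by trajectory through \cref{prop:pointwise-decomp}. This identifies the WTF value with $\E_{\rho_0}[V^*_0]$, where $V^*_0(x) = \sup_y\{\lambda r(y) - \cprior(x,y)\}$ is the optimal value function at time $0$. Your route buys two things:
\begin{itemize}
\item An elementary upper bound: the controlled path is itself a competitor in the definition of $\cprior$, so neither the continuity equation nor the cited dynamic lemma is needed.
\item A squeeze argument for the terminal law that covers an \emph{arbitrary} optimizer, which is what the statement asserts. The paper only shows that its constructed $u^*$ is optimal with law $\nu^*$. Its upper-bound chain would give your squeeze in one line, but it does not say so.
\end{itemize}

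The price is paid in the lower bound, where per-particle optimal paths must be assembled into a feedback field in $\calU$. The paper does not do this work either. It simply asserts that, under the assumed regularity, $v^* - b \in \calU$ and generates $\rho^*$, so your sketch is no weaker than the paper's argument. Note, however, that the open-loop fallback on its own changes the problem, so the approximation step is not optional. That step needs more than the standing assumptions:
\begin{itemize}
\item $b$ differentiable in $x$ (in practice $C^{1,1}$), so that the Hamiltonian system is available and well posed.
\item A measurable selection $x \mapsto (y^*(x), \omega^x)$.
\item A.e.\ uniqueness of optimal paths \emph{and} absence of crossings between distinct optimal paths at interior times. The latter needs its own argument (cut-and-paste plus ODE uniqueness) beyond a.e.\ differentiability of $V^*_0$.
\item Continuity of $r$, since upper semicontinuity gives the wrong inequality under weak convergence of the approximating terminal laws.
\end{itemize}
Finally, with the sup convention $V^*_0$ is semiconvex rather than semiconcave: for $b \equiv 0$, $V^*_0 + \tfrac12\norm{x}^2$ is convex. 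Only a.e.\ differentiability is used, so this slip is harmless.
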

The proof is given in~\cref{app:wasserstein_tilt}, which proceeds via a Benamou-Brenier-style dynamic reformulation of the static regularizer~\cref{eqn:Tb-static} established by~\citet{chen2017OTlineardynamics,chen2016SBcontrol,chen2021stochastic}.
A direct consequence is that the fine-tuned model is again a single drift $\tilde{v}_t = b_t + u^*_t$ that replaces $b_t$ at inference.
We further show in~\cref{app:sb_zero_noise} that the problem~\cref{eqn:oc_problem} is the zero-noise limit of a Schr\"odinger-bridge problem~\citep{leonard2014survey,chen2016SBcontrol,chen2017OTlineardynamics,chen2021stochastic}, connecting WTF to the stochastic optimal control framework used by Adjoint Matching and related methods~\citep{domingoenrich2025adjointmatchingfinetuningflow,potaptchik2026metaflowmapsenable,holderrieth2025diamond,holderrieth2025glassflowstransitionsampling}. \Cref{tab:regularizer-comparison} situates the corresponding regularizers and algorithms.

\section{Solving the optimal control problem}
\label{sec:training}
We now develop a practical algorithm to solve the optimal control problem~\cref{eqn:oc_problem} using a pre-trained flow map $X_{s, t}(x) = x + (t - s)\, v_{s, t}(x)$ for the uncontrolled dynamics $\dot{x}_t = b_t(x_t)$ with $b_t = v_{t, t}$.
In practice, we find that direct backpropagation through a sampled version of~\cref{eqn:oc_problem} exhibits a reward-hacking failure mode, where the terminal map can increase reward while the diagonal velocity fails to learn the corresponding controlled dynamics; see~\cref{app:naive_fail} for details.
To avoid this pathology, we instead regress the diagonal control onto the value gradient of a frozen reference control, giving a policy-improvement update.
The flow map's long-range transitions let us construct the on-policy states and value estimates required by this update without numerically simulating every intermediate state, yielding simulation-free rollouts and an end-to-end fine-tuning algorithm for flow maps.
For a control $u$, define the value function as the reward-to-go from state $x$ at time $t$ minus the remaining control cost,
\begin{equation}
	\label{eqn:value_fn}
	V^u_t(x) = \lambda\, r(X^u_{t, 1}(x)) - \frac{1}{2}\int_t^1 \norm{u_\tau(X^u_{t, \tau}(x))}^2\, d\tau,
\end{equation}
so that maximizing $\E_{x_0}[V^u(0, x_0)]$ recovers~\cref{eqn:oc_problem}.
Thus $V^u_t(x)$ is the future objective for a trajectory starting from $x$ at time $t$; \cref{app:doc_background} provides a self-contained introduction to deterministic optimal control and the role of the value function.
A standard result in optimal control theory~\citep{liberzon2011calculus} states that the optimal control for~\cref{eqn:oc_problem} is the gradient of the optimal value function,
\begin{equation}
	\label{eqn:optimal_control}
	u^*_t(x) = \nabla_x V^*_t(x), \quad V_t^*(x) = \sup_u V_t^u(x).
\end{equation}
The following result shows that fitting our trainable control to the current estimate of the value gradient in an iterative fashion is guaranteed to improve the objective at the population level.

\begin{restatable}[Performance difference and policy improvement]{proposition}{performancediff}
	\label{prop:performance_diff}
	Let $\bar{u}$ be a reference control with value function $V^{\bar{u}}_t$ and value gradient $\bar{g}_t(x) = \nabla_x V^{\bar{u}}_t(x)$.
	For any candidate control $w$ and any initial pair $(s, x)$ with controlled trajectory $x^w_t = X^w_{s, t}(x)$ for $t \in [s, 1]$,
	\begin{equation}
		\label{eqn:performance_diff}
		V^w_s(x) - V^{\bar{u}}_s(x) = \frac{1}{2} \int_s^1 \!\left[\, \norm{\bar{u}_t(x^w_t) - \bar{g}_t(x^w_t)}^2 - \norm{w_t(x^w_t) - \bar{g}_t(x^w_t)}^2\, \right] dt.
	\end{equation}
	In particular, setting $w = \bar g$ makes the second norm vanish and gives $V^{\bar g}_s(x) \geq V^{\bar u}_s(x)$ for all $(s, x)$, with equality if and only if $\bar u_t(x^{\bar g}_t) = \bar g_t(x^{\bar g}_t)$; in that case $\bar u$ is the optimal control of~\cref{eqn:oc_problem}.
\end{restatable}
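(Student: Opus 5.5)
The plan is to use the standard performance-difference argument from deterministic optimal control: evaluate the reference value function $V^{\bar u}$ along the trajectory generated by the candidate control $w$, and differentiate in time.

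\textbf{Step 1: HJB identity for $\bar u$.} Differentiating the definition~\cref{eqn:value_fn} along the $\bar u$-controlled flow gives the policy-evaluation equation
\begin{equation*}
\partial_t V^{\bar u}_t(x) + \bar g_t(x)\cdot\bigl(b_t(x) + \bar u_t(x)\bigr) - \tfrac12\norm{\bar u_t(x)}^2 = 0,\qquad V^{\bar u}_1 = \lambda r.
\end{equation*}
This holds for every $(t,x)$. To derive it, write $V^{\bar u}_t(x) = V^{\bar u}_{t+h}(X^{\bar u}_{t,t+h}(x)) - \tfrac12\int_t^{t+h}\norm{\bar u_\tau}^2 d\tau$ using the semigroup property, and let $h\to 0$.

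\textbf{Step 2: Differentiate along the $w$-trajectory.} Set $\phi(t) = V^{\bar u}_t(x^w_t)$. By the chain rule, $\dot\phi = \partial_t V^{\bar u}_t + \bar g_t\cdot(b_t + w_t)$, evaluated at $x^w_t$. Substituting Step 1 gives
\begin{equation*}
\dot\phi(t) = \bar g_t\cdot(w_t - \bar u_t) + \tfrac12\norm{\bar u_t}^2 .
\end{equation*}
Integrate from $s$ to $1$ and use $\phi(1) = \lambda r(x^w_1)$. Then note that $V^w_s(x) = \lambda r(x^w_1) - \tfrac12\int_s^1\norm{w_t}^2dt$. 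This yields
\begin{equation*}
V^w_s(x) - V^{\bar u}_s(x) = \int_s^1 \Bigl[\bar g\cdot(w-\bar u) + \tfrac12\norm{\bar u}^2 - \tfrac12\norm{w}^2\Bigr]dt .
\end{equation*}
Completing the square, the integrand equals $\tfrac12\norm{\bar u - \bar g}^2 - \tfrac12\norm{w-\bar g}^2$, which is~\cref{eqn:performance_diff}.

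\textbf{Step 3: Consequences.} Take $w = \bar g$. The integrand becomes nonnegative, so $V^{\bar g}_s \ge V^{\bar u}_s$. Since the integrand is continuous and nonnegative, equality holds iff $\bar u_t(x^{\bar g}_t) = \bar g_t(x^{\bar g}_t)$ for all $t \in [s,1]$.

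For optimality, suppose equality holds for all $(s,x)$. Because every pair $(t,y)$ is the starting point of its own trajectory, this gives $\bar u = \bar g = \nabla V^{\bar u}$ everywhere. Plugging this into Step 1 shows that $V^{\bar u}$ solves the HJB equation
\begin{equation*}
\partial_t V + \nabla V\cdot b + \tfrac12\norm{\nabla V}^2 = 0 .
\end{equation*}
Alternatively, apply~\cref{eqn:performance_diff} with an arbitrary $w$: the first term vanishes, so $V^w \le V^{\bar u}$ for every $w$. This directly shows that $\bar u$ attains $V^*$, consistent with~\cref{eqn:optimal_control}.

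\textbf{Main obstacle.} The main obstacle is regularity. Step 1 requires $V^{\bar u}$ to be $C^1$ in $(t,x)$, so that the chain rule and $\bar g$ make sense. This needs $\bar u$, $b$ and $r$ to be $C^1$ with growth conditions, so that the flow $X^{\bar u}$ is differentiable in its initial data. I would invoke the standing assumptions of~\cref{app:standing_assumptions} for this. If needed, I would prove differentiability of $V^{\bar u}$ by differentiating the flow map in $x$ and using the variational equation.

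A second subtlety is that $w = \bar g$ must itself generate a well-posed flow. This requires $\bar g$ to be Lipschitz, i.e.\ $V^{\bar u}$ to be $C^{1,1}$. I would state this as an assumption on the reference control.
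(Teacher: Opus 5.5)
Your proposal is correct and follows the paper's proof essentially step for step. Both arguments use the policy-evaluation identity for $V^{\bar u}$, differentiate $V^{\bar u}_t(x^w_t)$ along the $w$-trajectory, complete the square, and integrate against the terminal condition $V^{\bar u}_1 = \lambda r$. Your alternative optimality argument, which plugs an arbitrary $w$ into the identity once $\bar u = \bar g$ everywhere, is a self-contained verification step that avoids the paper's appeal to uniqueness of HJB solutions. You are also right to make explicit that this step needs equality for all $(s,x)$, not just along a single trajectory.
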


The proof is given in~\cref{app:performance_diff}.
The identity~\cref{eqn:performance_diff} implies that $w = \bar g$, the gradient of the current value function, maximizes the improvement term.
Starting from a reference control $\bar u$, we perform policy iteration by fitting the trainable control $w$ to the corresponding value gradient $\bar g$ and then using $w$ as the reference control for the next iteration.

By~\cref{prop:performance_diff}, each iterate strictly improves the objective.
The iteration continues to improve until it reaches the fixed point $\bar u = \bar g$, which is the global optimum.

\paragraph{Simulation-free value estimation.}
Computing $\nabla_x V^{\bar u}$ for a frozen reference $\bar u$ requires integrating along and then backpropagating through the entire future trajectory, both of which are computationally expensive.
To avoid this, we observe that the integrated control cost can be written as an expectation,
$\int_t^1 \norm{\bar u_\tau(X^{\bar u}_{t, \tau}(x))}^2\, d\tau = (1 - t)\, \E_{\tau \sim \Unif[t, 1]}\!\left[\norm{\bar u_\tau(X^{\bar u}_{t, \tau}(x))}^2\right]$.
Defining $x_\tau = X^{\bar{u}}_{t, \tau}(x)$ and $x_1 = X^{\bar{u}}_{\tau, 1}(x_\tau)$, this gives an unbiased single-sample Monte Carlo estimate of~\cref{eqn:value_fn},
\begin{lavenderbox}
	\begin{equation}
		\label{eqn:mc_value}
		\widehat{V}(t, x) = \lambda\, r(x_1) - \frac{1 - t}{2}\norm{\bar{u}_\tau(x_\tau)}^2.
	\end{equation}
\end{lavenderbox}
The flow map gives the endpoint $x_1$ without numerically simulating the full future trajectory, so the only remaining Monte Carlo approximation is the scalar time integral.
We emphasize that the only variance in~\cref{eqn:mc_value} comes from our Monte Carlo estimate of a one-dimensional time integral, in contrast to the high-dimensional integrals that need to be computed for stochastic value functions arising from reward tilting.
This formula gives an unbiased regression target $\widehat{g}(t, x) = \nabla_x \widehat{V}(t, x)$ via automatic differentiation.
The WTF training objective combines value gradient regression on the diagonal $s = t$ with an off-diagonal flow map self-distillation loss:
\begin{lavenderbox}
	\begin{equation}
		\label{eqn:val_objective}
		\begin{aligned}
			\calL_{\WTF}(\hat{w}) & = \E_{x_0, t, \tau}\!\left[\, \norm{\hat{w}_{t, t}(\bar{x}_t) - b_t(\bar{x}_t) - \widehat{g}(t, \bar{x}_t)}^2\, \right] + \beta\, \calL_{\dist}(\hat{w}), \\
			\bar{w}              & = \sg{\hat{w}}, \quad \bar{x}_t = X^{\bar{w}}_{0, t}(x_0),
		\end{aligned}
	\end{equation}
\end{lavenderbox}
where $\sg{\cdot}$ denotes the stop-gradient operator, $\widehat{g}(t, x) = \nabla_x \widehat{V}(t, x)$ is built from the Monte Carlo estimator~\cref{eqn:mc_value}, and $\calL_{\dist}$ is any of the self-distillation losses~\cref{eqn:app_lsd_esd_psd} reviewed in~\cref{app:flow_maps}.
Here $\bar w = \sg{\hat w}$ is a frozen reference used to form the on-policy states and value-gradient targets.
In addition to ensuring that the output of our method is a fine-tuned flow map, continual self-distillation ensures that we always have access to \textit{amortized on-policy rollouts} for value-function estimation.
At the population level,  we call $\hat w$ a fixed point when~\cref{eqn:val_objective} is stationary in $\hat w$ with $\bar w$ and the stopped targets held fixed, and $\bar w = \hat w$.

\begin{restatable}[Fixed-point optimality of WTF]{proposition}{wtfcriticalpoint}
	\label{prop:wtf_critical_point}
	Under the regularity conditions of~\cref{app:standing_assumptions} and assuming $\rho_0$ has full support on $\R^d$, any population fixed point of~\cref{eqn:val_objective} satisfies $\hat w_{t, t} = b_t + u^*_t$.
	Moreover, $X^{\hat w}_{s, t}$ is the flow map of $b_t + u^*_t$.
\end{restatable}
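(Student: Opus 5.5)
The plan is to characterize the population fixed point in three steps. First I identify what the stationarity conditions force on the diagonal and off-diagonal parts of $\hat w$. Then I reduce the diagonal condition to the fixed-point equation $\bar u = \bar g$ from \cref{prop:performance_diff}. Finally I conclude optimality from the equality case of that proposition.

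\textbf{Step 1: the two loss terms.} At a population fixed point, $\bar w = \hat w$. Write $u_t := \hat w_{t,t} - b_t$ for the induced control and $\bar x_t = X^{\hat w}_{0,t}(x_0)$ for the on-policy states.

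The first term of~\cref{eqn:val_objective} is a quadratic regression in $\hat w_{t,t}$ with stopped targets. Its stationarity in $\hat w_{t,t}$ means that, for a.e. $t$ and $\Law(\bar x_t)$-a.e. point $x$,
\begin{equation*}
\hat w_{t,t}(x) - b_t(x) = \E_\tau\!\left[\widehat g(t,x)\right].
\end{equation*}

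Unbiasedness of~\cref{eqn:mc_value} gives $\E_\tau[\widehat V(t,x)] = V^{u}_t(x)$, where $\bar u = u$ because the reference is the frozen copy of $\hat w$. Exchanging gradient and expectation, which the standing regularity assumptions permit, yields $\E_\tau[\widehat g(t,x)] = \nabla_x V^u_t(x) = \bar g_t(x)$.

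For the distillation term I use the property of $\calL_{\dist}$ reviewed in \cref{app:flow_maps}: its stationary points (in fact its minimizers, where it vanishes) are exactly the flow map of the diagonal velocity. Hence $X^{\hat w}_{s,t}$ is the flow map of $b_t + u_t$. This is also consistent with the notation $X^{\bar u}$ used in the value estimator.

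I would argue that the two terms decouple. The regression term only involves the diagonal $\hat w_{t,t}$, while the distillation loss is minimized (value zero) once the off-diagonal entries are consistent with whatever diagonal is chosen. So a stationary point of the sum must make each term stationary.

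\textbf{Step 2: extending $u = \bar g$ off the support.} Step 1 gives $u_t = \nabla V^u_t$ only on $\supp \Law(\bar x_t)$. This is where full support of $\rho_0$ enters. The map $X^{\hat w}_{0,t}$ is the flow of a regular vector field, hence a diffeomorphism of $\R^d$. Therefore $\Law(\bar x_t)$ has a positive density everywhere, and by continuity $u_t(x) = \bar g_t(x)$ for all $(t,x)$.

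I expect this to be the main technical obstacle. It requires enough regularity of $\hat w$ (Lipschitz, so the flow is a global diffeomorphism) and continuity of $\nabla V^u$, and both must be pulled from \cref{app:standing_assumptions}.

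\textbf{Step 3: optimality.} Now $\bar u = u$ satisfies $\bar u_t = \bar g_t$ everywhere. By \cref{prop:performance_diff}, taking any $w$ gives
\begin{equation*}
V^w_s(x) - V^u_s(x) = -\tfrac12\int_s^1 \norm{w_t(x^w_t) - \bar g_t(x^w_t)}^2\,dt \le 0.
\end{equation*}
So $u$ maximizes $V_s(x)$ for every $(s,x)$, and hence also the objective of~\cref{eqn:oc_problem}. Thus $u = u^*$, in the sense of~\cref{eqn:optimal_control} with $V^* = V^u$.

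This gives $\hat w_{t,t} = b_t + u^*_t$, and by Step 1, $X^{\hat w}_{s,t}$ is the flow map of $b_t + u^*_t$. If uniqueness of the optimal control is needed to identify it with ``the'' $u^*$, the equality case of \cref{prop:performance_diff} shows that any other optimizer must agree with $\bar g$ along its own trajectories. By the same full-support argument, it then agrees with $u$ everywhere.
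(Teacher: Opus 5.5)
Your proposal is correct and has the same skeleton as the paper's proof. Distillation makes $X^{\hat w}$ the flow map of $\hat w_{t,t}$; the diagonal regression plus unbiasedness of $\widehat V$ gives $u_t = \nabla_x V^u_t$ on the on-policy support; full support of $\rho_0$ extends this everywhere; and \cref{prop:performance_diff} finishes.

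Your last step is a genuinely different argument. The paper uses the equality clause: it substitutes $\bar u = \nabla_x V^{\bar u}$ into \cref{lem:policy_evaluation} to get the reduced HJB equation \cref{eqn:doc_hjb_reduced}, then appeals to uniqueness of its solution. You instead plug $\bar u = \bar g$ directly into the performance-difference identity, so the first integrand vanishes and $V^w_s(x) \le V^u_s(x)$ for every admissible $w$ and every $(s,x)$. This is a verification argument. It gives $V^u = V^*$ without any uniqueness theorem for \cref{eqn:doc_hjb_reduced}, and needs only the $C^1$ regularity of $V^u$ that both routes already use. Your remark on uniqueness of the optimizer, via the same full-support reasoning, is a useful addition.

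The step to tighten is the distillation step, which is where the paper's proof does its real work. You cite it as a property from \cref{app:flow_maps}, but that appendix only lists the Eulerian characterization. For a stop-gradient objective it is not automatic that a stationary point has zero residual. The paper argues as follows:
\begin{itemize}
\item With the target $c = \nabla X^{\hat w}_{t,\tau}\, \hat w_{t,t}$ frozen, the parameterization gives $\partial_t X^{\hat w}_{t,\tau} = -\hat w_{t,\tau} + (\tau - t)\,\partial_t \hat w_{t,\tau}$, which is linear in $\hat w$.
\item Hence the frozen loss is a convex quadratic, and any stationary point is a global minimizer.
\item The minimum is zero, because the averaged field $\frac{1}{\tau - t}\int_t^\tau c(\sigma, \tau, x)\, d\sigma$ attains it and lies in $\calW$ by \cref{assump:flowmap_class}; that clause exists for exactly this purpose.
\item The vanishing Eulerian residual, together with $X^{\hat w}_{t,t} = \mathrm{id}$, then gives the flow-map and semigroup properties by uniqueness of characteristics.
\end{itemize}

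This step must also come logically before your unbiasedness claim. The identity $\E_\tau[\widehat V(t,x)] = V^u_t(x)$ requires $X^{\bar w}_{\tau,1} \circ X^{\bar w}_{t,\tau} = X^{\bar w}_{t,1}$ and requires $X^{\bar w}_{t,\tau}(x)$ to be the controlled trajectory, which is exactly what the distillation step delivers.

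By contrast, the extension off the support that you flag as the main obstacle is short. As in the paper, it follows from full support, the diffeomorphism property of the flow, and continuity.
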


The proof, given in~\cref{app:wtf_critical_point}, factorizes the fixed-point condition.
The diagonal regression gives $\hat w_{t, t} = b_t + \nabla_x V^{\bar w}_t$ on the on-policy support, while the off-diagonal distillation makes $X^{\hat w}$ the flow map of $\hat w_{t, t}$.
At $\bar w = \hat w$, \cref{prop:performance_diff} then gives the optimal control.

\Cref{alg:wtf} summarizes the training procedure and shows the two choices of reward gradient discussed in~\cref{sec:experiments}.

The synthetic experiments use~\cref{alg:wtf} as written, while the large-scale image experiments use the Euclidean reward-gradient variant described in~\cref{sec:algorithmic}.
Both loss terms are evaluated on every iteration; \cref{app:hyper_t2i} records the practical choice of frozen reference $\bar w$, which is an exponential moving average of $\hat w$ on ImageNet and the stop-gradient copy of the live weights on text-to-image.

\begin{algorithm}[!t]
	\caption{WTF: Wasserstein-tilted fine-tuning of flow maps}
	\label{alg:wtf}
	\KwIn{Pre-trained flow map $X_{s, t}(x) = x + (t - s)\, v_{s, t}(x)$; reward $r$; scale $\lambda$; distillation weight $\beta$; off-diagonal sampler $p_{s, t}$; learning rate $\eta$; reward-gradient type $\in \{\mathrm{exact}, \mathrm{Euclidean}\}$}
	\KwOut{Fine-tuned flow map $X^{\hat w}_{s, t}(x) = x + (t - s)\, \hat w_{s, t}(x)$}
	Initialize $\hat w$ from $v$; set frozen reference $\bar{w} = \sg{\hat w}$ \tcp*{implementation: $\bar w$ may be an EMA of $\hat w$, see~\cref{app:hyper_t2i}}
	\Repeat{converged}{
		Sample $x_0 \sim \rho_0$, $t \sim \Unif[0, 1]$, $\tau \sim \Unif[t, 1]$\;
		Forward (frozen $\bar{w}$): $\bar{x}_t = X^{\bar{w}}_{0, t}(x_0)$, $\bar{x}_\tau = X^{\bar{w}}_{t, \tau}(\bar{x}_t)$, $\bar{x}_1 = X^{\bar{w}}_{\tau, 1}(\bar{x}_\tau)$\;
		Monte Carlo value: $\widehat{V} = \lambda\, r(\bar{x}_1) - \tfrac{1 - t}{2}\, \norm{\bar{w}_{\tau, \tau}(\bar{x}_\tau) - b_\tau(\bar{x}_\tau)}^2$\;
		Value gradient: $\widehat{g} = \nabla_{\bar{x}_t} \widehat{V}$ \tcp*{autograd through frozen $\bar w$}
		\uIf{\textnormal{reward-gradient type} $=$ \textnormal{exact}}{
			$\widehat{g}_r \leftarrow \nabla X^{\bar w}_{t, 1}(\bar{x}_t)^\top \nabla r(\bar{x}_1)$\;
		}
		\Else{
			$\widehat{g}_r \leftarrow \nabla r(\bar{x}_1)$\;
		}
		Diagonal loss: $\calL_{\mathrm{val}} = \norm{\hat w_{t, t}(\bar{x}_t) - \sg{\,b_t(\bar{x}_t) + \widehat{g}\,}}^2$\;
		Sample $(s', t') \sim p_{s, t}$ over the upper triangle; compute $\calL_{\dist}$ via~\cref{eqn:dist_eulerian}\;
		Update: $\theta \leftarrow \theta - \eta\, \nabla_\theta\!\left(\calL_{\mathrm{val}} + \beta\, \calL_{\dist}\right)$\;
	}
	\Return $X^{\hat w}_{s, t}(x) = x + (t - s)\, \hat w_{s, t}(x)$\;
\end{algorithm}

\section{Related work}
\label{sec:related_main}
\paragraph{Generative modeling via dynamical transport.}
Diffusion and flow models generate samples by transporting a simple prior to the data distribution through learned stochastic or deterministic dynamics~\citep{lipman2022flow, liu2022flow, albergo2023stochastic, albergo2022building}.
Flow matching learns the deterministic dynamics directly, but sampling still requires numerical integration and repeated network evaluations.
Flow maps amortize this computation by learning the solution operator between pairs of times~\citep{boffi_flow_2024, boffi2025self, song2023consistencymodels, kim_consistency_2024, frans2025stepdiffusionshortcutmodels, geng2025meanflowsonestepgenerative}, enabling generation in one or a few evaluations.
Although developed primarily for accelerated inference, flow maps have recently been used for reward alignment~\citep{potaptchik2026metaflowmapsenable, holderrieth2025diamond, mammadov2026variationalflowmaps}.
We study direct reward fine-tuning of the flow map, so the fine-tuned model retains few-step generation.

\paragraph{Reward fine-tuning.}
Reward fine-tuning adapts a pre-trained generative model to improve a downstream reward while limiting deviation from the base model.
A common formulation is KL-regularized reward maximization, whose optimizer is the exponential reward tilt.
Adjoint Matching~\citep{domingoenrich2025adjointmatchingfinetuningflow} casts this target as memoryless stochastic optimal control, while RAM~\citep{bergmeister2026reinforce} derives a regression objective for the same KL-regularized target.
Reinforcement-learning methods including DDPO~\citep{black2024ddpo}, DPOK~\citep{fan2023dpok}, and Flow-GRPO~\citep{liu2025improving} instead optimize reward through the generative sampler.
VGG-Flow~\citep{liu2025vggflow} also derives flow fine-tuning from deterministic optimal control, but learns a value-gradient critic and updates the velocity field.
WTF instead regularizes with the transport cost induced by the pre-trained deterministic drift.
This yields a deterministic control problem in which flow map transitions provide simulation-free value estimates without a learned critic.
Optimal transport has also appeared in reinforcement learning for generative policies~\citep{sun2025scorediffusionpolicyot}, where transport enters through a critic-based policy objective. In WTF, the reward-independent transport cost is built from the pre-trained dynamics and serves directly as the regularizer.

\enlargethispage{2\baselineskip}

\paragraph{Flow maps for reward alignment.}
Flow maps have also begun to play a direct role in reward-alignment methods.
MFM~\citep{potaptchik2026metaflowmapsenable} uses stochastic flow maps for efficient conditional endpoint sampling and value estimation, but its fine-tuning procedure returns a velocity field and therefore requires distillation for few-step deployment.
VFM~\citep{mammadov2026variationalflowmaps} learns a noise adapter together with a flow map for one-step conditional generation, while score-distillation approaches~\citep{kumari2025npedit, chen2025flashdmdhighfidelityfewstepimage} regularize one-step generators toward a base distribution.
These methods differ in how the flow map enters the alignment procedure and in the model retained after fine-tuning. WTF directly fine-tunes the deterministic flow map across time pairs, preserving inference across NFE budgets and compatibility with subsequent flow map alignment.

\paragraph{Inference-time alignment.}
Complementary to fine-tuning, inference-time methods keep model parameters fixed and modify generation through reward guidance, particle-based sampling, or test-time optimization~\citep{uehara2025inferencetimealignmentdiffusionmodels, dhariwal2021diffusion, ho2022classifier, ye2024tfgunifiedtrainingfreeguidance, yu2023freedomtrainingfreeenergyguidedconditional, chung2024diffusionposteriorsamplinggeneral, kim2025flowdps, skreta2025feynmankaccorrectorsdiffusionannealing, singhal2025generalframeworkinferencetimescaling}.
FMRG~\citep{huang2026guideflowfewstepalignment} formulates few-step, single-trajectory guidance of flow maps as deterministic optimal control.
Diamond Maps~\citep{holderrieth2025diamond} instead uses stochastic flow maps for value estimation and supports guidance, search, and sequential Monte Carlo.
Because WTF retains a flow map after fine-tuning, methods such as FMRG can be applied directly for additional inference-time alignment; velocity-field fine-tuning methods require flow map distillation first.

\section{Experiments}
\label{sec:experiments}

\begin{figure}[tb]
    \centering
    \includegraphics[width=\linewidth]{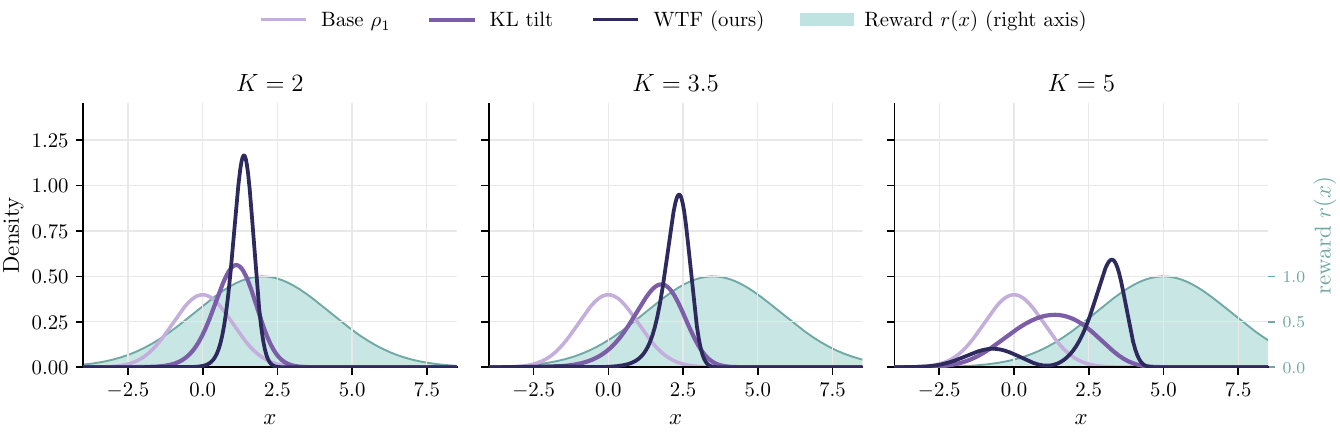}
\caption{\textbf{Comparison of WTF and KL regularization in one dimension.}
Each panel places the reward at a different distance $K$ from the mean of the base law and shows the resulting terminal distributions.
Both terminal laws are computed from their exact population optima.
As the reward moves into lower-density regions, KL tilting reweights the existing mass, whereas WTF transports mass toward the reward.
}
    \label{fig:synth_lowdensity}
\end{figure}

To gain intuition for how the Wasserstein-tilted and KL-tilted objectives differ, we first compare their population optima in one dimension, where both can be evaluated without training.
We then evaluate trained flow maps on ImageNet-256 and text-to-image generation.
The exact reward gradient differentiates the terminal reward through the flow map transition, giving the pullback $\nabla X_{t, 1}(\bar{x}_t)^\top \nabla r(\bar{x}_1)$.
We use this gradient in the synthetic experiments.
For our large-scale image experiments, we instead use $\nabla r(\bar{x}_1)$ directly while leaving the forward endpoint unchanged.
This removes the flow map Jacobian from the reward gradient and computes the update in the direction that increases reward at the generated endpoint.
Following the terminology of FMRG~\citep{huang2026guideflowfewstepalignment}, we refer to the resulting update as the Euclidean reward gradient.
We find that this choice improves both reward and diversity, although the exact gradient also achieves strong results and converges rapidly.
We highlight these choices in~\cref{alg:wtf}.
\Cref{sec:algorithmic} gives implementation details, and \cref{ssec:exp_detach} compares the two empirically.
For ImageNet-256 we fine-tune DMF XL/2~\citep{lee2025decoupled}, and for text-to-image we fine-tune TiM-T2I~\citep{wang2025transition}.
We use HPSv2~\citep{wu2023humanpreferencescorev2}, a learned human-preference score, as the training reward in both settings.
We report PickScore~\citep{kirstain2023pickapicopendatasetuser} and ImageReward~\citep{xu2023imagerewardlearningevaluatinghuman} as additional reward metrics. We measure diversity by the mean pairwise squared distance in DreamSim and CLIP embedding space, which collapses to zero when all samples coincide.
On ImageNet, we compare against Adjoint Matching~\citep{domingoenrich2025adjointmatchingfinetuningflow} and other flow map-based fine-tuning methods, MFM~\citep{potaptchik2026metaflowmapsenable} and VFM~\citep{mammadov2026variationalflowmaps}.
On text-to-image, we compare against Adjoint Matching, a first-order method, and Flow-GRPO~\citep{liu2025improving}, a zeroth-order method.
Because WTF fine-tunes the full flow map across time pairs, the same checkpoint can be evaluated at different NFE budgets without post-hoc distillation.
Full training and evaluation details are in~\cref{sec:algorithmic,app:hyper}.

\subsection{Synthetic experiments}
\label{ssec:exp_synthetic}
We first isolate the effect of the regularizer by comparing the WTF and KL population optima in one dimension.
The KL optimum $\tilde{\rho} \propto \rho_1 e^{\lambda r}$ reweights the base law, while~\cref{prop:pointwise-decomp} gives the WTF optimum by transporting each sample to its reward-cost optimum.
We take $\rho_1 = \Normal(0, 1)$ and use the Gaussian reward bump $r_K(x) = \exp(-(x-K)^2/2w^2)$ centered at $K$, with width $w = 2.25$ and reward scale $\lambda = 7$.
Both terminal laws are evaluated without training.
We compute the KL tilt by numerical quadrature.
For WTF, the prior-action cost is available in closed form, after which we solve the pointwise optimization in~\cref{thm:wasserstein_tilt} and obtain the terminal density by change of variables.
This gives an exact comparison of the two population optima without sampling.
\Cref{fig:synth_lowdensity} shows that WTF achieves higher reward than KL tilting, with the gap increasing as the reward moves into lower-density regions of the base distribution.
This behavior follows from the different regularizers.
KL tilting can only reweight mass already present under the base law, whereas WTF can transport mass toward high-reward regions.
Transport is therefore most advantageous when high-reward regions carry little mass under the base distribution

The two objectives also differ in which terminal laws they can reach. 
A tilt $\nu \propto \rho_1 e^{\lambda r}$ cannot change the relative density of two points with equal reward: if $r(x_a) = r(x_b)$, both are multiplied by the same factor $e^{\lambda r(x_a)}$, so
\begin{equation*}
	\frac{\nu(x_a)}{\nu(x_b)} = \frac{\rho_1(x_a)}{\rho_1(x_b)}
	\qquad \text{for every } \lambda .
\end{equation*}
Transport has no such constraint and can move mass between such points, so WTF can reach terminal laws that no tilt of $\rho_1$ by $r$ produces.

\subsection{ImageNet-256 main results}
\label{ssec:exp_imagenet}
We fine-tune DMF XL/2 with HPSv2 as the reward.
WTF results are means over three matched seeds, and every method was given the same fine-tuning budget of roughly five hours on one $8 \times \mathrm{H100}$ node.

\begin{table}[tb]
    \centering
    \resizebox{\textwidth}{!}{%
        \begin{tabular}{lcccccc}
            \toprule
            & & \multicolumn{3}{c}{Reward $\uparrow$} & \multicolumn{2}{c}{Diversity $\uparrow$} \\
            \cmidrule(lr){3-5} \cmidrule(lr){6-7}
            Method & NFE
                & HPSv2 (fine-tuned)
                & PickScore
                & ImageReward
                & DreamSim
                & CLIP diversity \\
            \midrule
            Base Model         & 1   &  0.214 & 19.21 & $-0.271$ & 0.790 & 0.350 \\
            Base Model         & 250 &  0.218 & 19.45 & $-0.252$ & 0.841 & 0.340 \\
            \midrule
            Adjoint Matching   & 250 &  0.243 & 20.26 &  0.005   & 0.453 & 0.237 \\
            MFM                & 250 &  0.291 & 19.83 &  0.544   & 0.379 & 0.189 \\
            VFM                & 1   &  0.307 & 20.91 &  0.860   & 0.482 & 0.194 \\
            \midrule
            \ourrow WTF (Ours) & 1   &  0.319 & 20.24 &  0.612  & 0.482 & 0.201 \\
            \ourrow WTF (Ours) & 250 &  0.330 & 20.45 &  0.718  & 0.460 & 0.177 \\
            \bottomrule
        \end{tabular}%
    }
    \vspace{0.5em}
    \caption{\label{tab:reward-comparison-imagenet}\textbf{ImageNet results.}
    WTF is evaluated at $1$ and $250$ NFE from the same fine-tuned flow map.
    Results are means over three matched seeds; the base model is DMF XL/2.}
\end{table}

\begin{wrapfigure}[16]{r}{0.47\linewidth}
	\vspace{-\baselineskip}
	\centering
	\includegraphics[width=\linewidth]{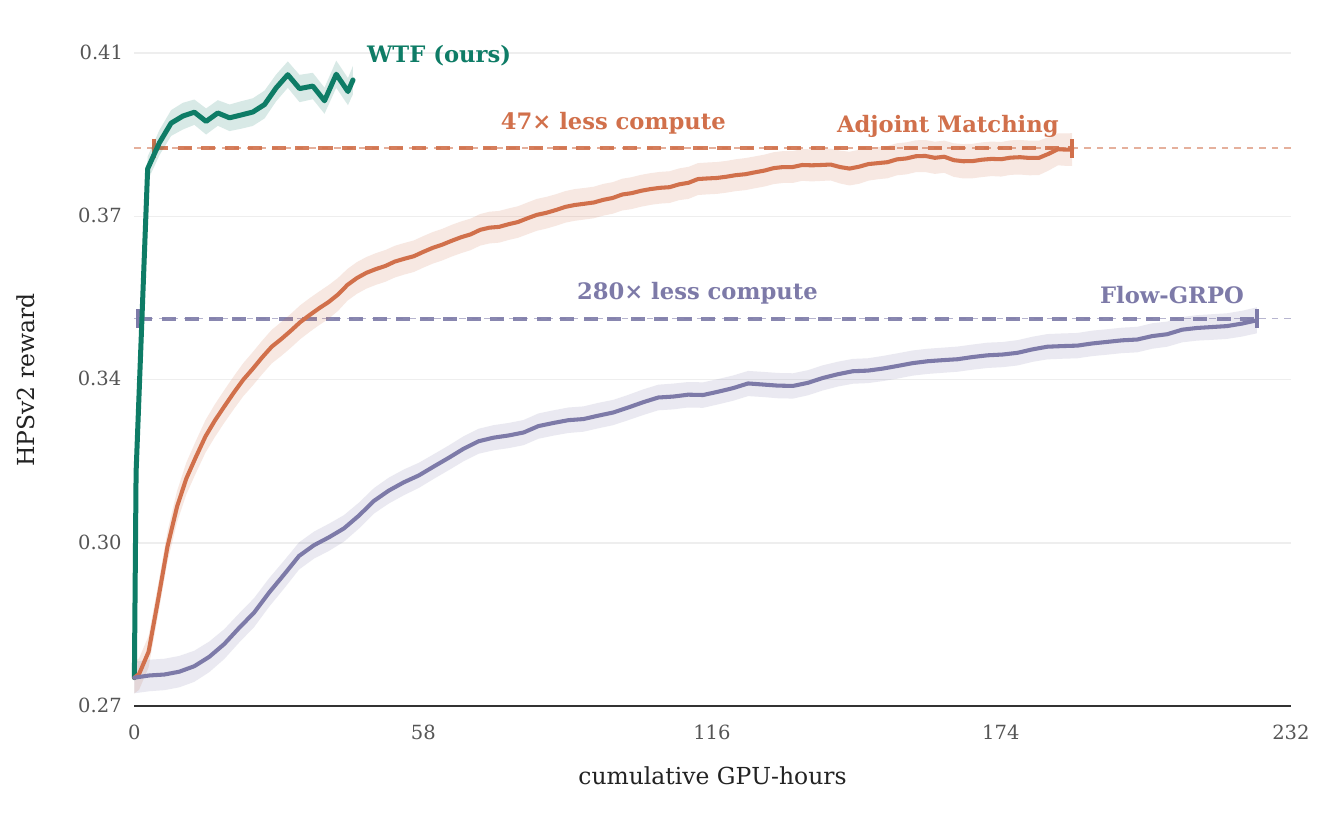}
	\caption{\textbf{Reward against cumulative training compute.}
		HPSv2 versus GPU-hours at $50$ NFE, all methods fine-tuned from the same flow map.
		WTF reaches the peak rewards of Adjoint Matching and Flow-GRPO with $47\times$ and $280\times$ less compute.
		Shading is $\pm1$ standard error; protocol in \cref{app:compute}.
		}
	\label{fig:compute_efficiency}
\end{wrapfigure}

\Cref{tab:reward-comparison-imagenet} compares WTF with Adjoint Matching, MFM, and VFM.
WTF achieves higher reward while maintaining competitive diversity.
The same checkpoint also remains effective from one to 250 NFE.
MFM fine-tunes the diagonal velocity field, while VFM targets the one-step sampler, so neither directly provides the same arbitrary-budget flow map.
\Cref{ssec:exp_imagenet_qual} shows matched-latent samples.

\FloatBarrier

\subsection{Text-to-image main results}
\label{ssec:exp_t2i}
We fine-tune the TiM-T2I flow map~\citep{wang2025transition} against HPSv2.
Every method was fine-tuned under the same budget of roughly $48$ GPU-hours on one $8 \times \mathrm{H100}$ node. \Cref{app:hyper_t2i} gives the full setup. \Cref{tab:reward-comparison-t2i} evaluates each released baseline checkpoint in our harness.

\begin{table}[tb]
    \centering
    \resizebox{\textwidth}{!}{%
        \begin{tabular}{lcccccc}
            \toprule
            & & \multicolumn{3}{c}{Reward $\uparrow$} & \multicolumn{2}{c}{Diversity $\uparrow$} \\
            \cmidrule(lr){3-5} \cmidrule(lr){6-7}
            Method & NFE
                & HPSv2 (fine-tuned)
                & PickScore
                & ImageReward
                & DreamSim
                & CLIP diversity \\
            \midrule
            Base Model         & 1   &  0.228 & 20.36 &  0.182   & 0.548 & 0.413 \\
            Base Model         & 4   &  0.255 & 21.06 &  0.665   & 0.426 & 0.346 \\
            Base Model         & 8   &  0.264 & 21.29 &  0.801   & 0.361 & 0.296 \\
            Base Model         & 50  &  0.274 & 21.58 &  0.997   & 0.281 & 0.239 \\
            \midrule
            Adjoint Matching   & 50  &  0.357 & 22.75 &  1.380   & 0.210 & 0.188 \\
            Flow-GRPO           & 50  &  0.303 & 22.06 &  1.210   & 0.250 & 0.215 \\

            \midrule
            \ourrow WTF (Ours) & 1   &  0.347 & 22.31 &  1.069  & 0.325 & 0.289 \\
            \ourrow WTF (Ours) & 4   &  0.401 & 22.97 &  1.383  & 0.250 & 0.220 \\
            \ourrow WTF (Ours) & 8   &  0.400 & 22.93 &  1.394  & 0.233 & 0.212 \\
            \ourrow WTF (Ours) & 50  &  0.398 & 22.87 &  1.416  & 0.217 & 0.205 \\
            \bottomrule
        \end{tabular}%
    }
    \caption{\label{tab:reward-comparison-t2i}\textbf{Text-to-image results.}
    Baseline checkpoints are evaluated in the same harness, and WTF is evaluated across
    inference budgets from a single fine-tuned flow map.
    At 50 NFE, WTF attains the highest HPSv2 (the training reward), PickScore and ImageReward of any method in the table.}
\end{table}

\Cref{tab:reward-comparison-t2i} compares WTF with released Adjoint Matching and Flow-GRPO checkpoints evaluated in the same harness.
As on ImageNet, WTF achieves higher reward while maintaining comparable diversity, and the same fine-tuned flow map remains effective in the few-step regime.
\Cref{fig:compute_efficiency} shows that WTF also converges substantially faster, reaching the peak rewards attained by Flow-GRPO and Adjoint Matching with up to $280\times$ and $47\times$ less training compute, respectively.
\Cref{ssec:exp_lambda} varies $\lambda$ to characterize the reward-diversity tradeoff, and \cref{app:compute} gives the compute comparison.

\FloatBarrier

\section{Conclusion}
\label{sec:conclusion}
We introduced a transport-regularized formulation of reward fine-tuning for deterministic flows.
The objective transports samples toward higher reward rather than reweighting the base distribution and is equivalent to a \textit{deterministic optimal control problem}.
A pre-trained flow map makes value estimation simulation-free and critic-free, yielding direct end-to-end fine-tuning of the flow map in which the same few-step sampler is used both at training and at deployment.
On ImageNet-256 and text-to-image generation, WTF matches or surpasses baselines on reward, reaching the peak rewards of the comparison methods with substantially less training compute.
The fine-tuned model remains compatible with flow map inference-time alignment without an intervening distillation stage.
We hope that our framework spurs broader interest in the use of flow maps as a foundational primitive for accelerated post-training of generative models.

\paragraph{Limitations.}
\Cref{prop:wtf_critical_point} is a population-level fixed-point result.
The large-scale experiments use the Euclidean reward gradient described in~\cref{sec:algorithmic}.
Increasing reward can reduce sample diversity, with the reward scale controlling this tradeoff.
WTF also assumes access to a pre-trained flow map. Starting from a velocity model requires a preceding flow map distillation stage, whose cost is not included in our fine-tuning comparison.

\FloatBarrier

\section*{Acknowledgements}
AM is supported by the Clarendon Fund Scholarship, University of Oxford.
We gratefully acknowledge fal for providing the computational resources that enabled this work. 
We also thank Modal for additional compute support.
The authors acknowledge the use of resources provided by the Isambard-AI National AI Research Resource (AIRR)~\citep{mcintoshsmith2024isambardaileadershipclasssupercomputer}.
Isambard-AI is operated by the University of Bristol and is funded by the UK Government's Department for Science, Innovation and Technology (DSIT) via UK Research and Innovation; and the Science and Technology Facilities Council [ST/AIRR/I-A-I/1023].

\newpage
\bibliographystyle{unsrtnat}
\bibliography{references}

\newpage

\appendix
\crefalias{section}{appendix}
\crefalias{subsection}{appendix}
\crefalias{subsubsection}{appendix}
\section{Background on flow-based generative models}
\label{app:flows_and_maps}
\label{app:si_flows}
\label{app:flow_maps}
In this section, we provide a self-contained introduction to flow matching via stochastic interpolants and their self-distillation into flow maps.

\paragraph{Stochastic interpolants and flow matching.}
Given a dataset of samples $\{x_1^i\}_{i=1}^n \sim \rho_1$ and a base distribution $\rho_0 = \Normal(0, \Id)$, the stochastic-interpolant framework~\citep{albergo2022building, albergo2023stochastic} introduces the time-dependent random variable
\begin{equation}
	\label{eqn:app_interpolant}
	I_t = \alpha_t\, x_0 + \beta_t\, x_1, \qquad x_0 \sim \rho_0,\ x_1 \sim \rho_1,
\end{equation}
with $\alpha, \beta : [0, 1] \to [0, 1]$ satisfying $\alpha_0 = \beta_1 = 1$ and $\alpha_1 = \beta_0 = 0$.
A standard result is that $\Law(I_t) = \Law(x_t)$ where $x_t$ solves the probability flow~\cref{eqn:pflow} with velocity $b_t(x) = \E[\dot{I}_t \mid I_t = x]$.
This conditional expectation is learned by minimizing the flow-matching objective~\citep{lipman2022flow, liu2022flow, albergo2023stochastic}
\begin{equation}
	\label{eqn:app_flow_matching}
	\calL_b(\hat{b}) = \E_{t, x_0, x_1}\!\left[\norm{\hat{b}_t(I_t) - \dot{I}_t}^2\right]
\end{equation}
over a class of neural networks, converting generative modeling into a regression problem.

\paragraph{Flow map parameterization.}
Throughout the paper we use the flow map parameterization
\begin{equation}
	\label{eqn:flow_map_param}
	X_{s, t}(x) = x + (t - s)\, v_{s, t}(x),
\end{equation}
where $v : [0, 1]^2 \times \R^d \to \R^d$ is the function to be learned.
On the diagonal $s = t$, $X_{t, t}$ reduces to the identity, and the \textit{tangent condition}
\begin{equation}
	\label{eqn:tangent_identity}
	v_{t, t}(x) = b_t(x)
\end{equation}
identifies the diagonal velocity of the flow map with the drift of the underlying probability flow.
We refer to $v_{t, t}$ as the \textit{implicit velocity} of $X_{s, t}$; this property allows the flow map to serve both as an accelerated sampler \emph{and} as the source of the velocity field that our fine-tuning algorithm operates on.

\paragraph{Three characterizations and self-distillation losses.}
The flow map of the deterministic dynamics $\dot{x}_t = b_t(x_t)$ admits three equivalent characterizations,
\begin{equation}
	\label{eqn:app_three_characterizations}
	\begin{aligned}
		\partial_t X_{s, t}(x)                               & = b_t(X_{s, t}(x))      &  & (\text{Lagrangian}),                    \\
		\partial_s X_{s, t}(x) + \nabla X_{s, t}(x)\, b_s(x) & = 0                     &  & (\text{Eulerian}),                      \\
		X_{s, t}(x)                                          & = X_{u, t}(X_{s, u}(x)) &  & (\text{semigroup, for } s \le u \le t),
	\end{aligned}
\end{equation}
each of which gives rise to a different self-distillation training objective by squaring the corresponding residual and replacing $b_t$ with the (stop-gradient) diagonal velocity $v_{t, t}$ via the tangent condition~\cref{eqn:tangent_identity}:
\begin{equation}
	\label{eqn:app_lsd_esd_psd}
	\begin{aligned}
		\calL_{\lsd}(X) & = \E\!\left[\, \norm{\partial_t X_{s, t}(x_s) - \sg{v_{t, t}(X_{s, t}(x_s))}}^2\, \right],              \\
		\calL_{\esd}(X) & = \E\!\left[\, \norm{\partial_s X_{s, t}(x_s) + \sg{\nabla X_{s, t}(x_s)\, v_{s, s}(x_s)}}^2\, \right], \\
		\calL_{\psd}(X) & = \E\!\left[\, \norm{X_{s, t}(x_s) - \sg{X_{u, t}(X_{s, u}(x_s))}}^2\, \right].
	\end{aligned}
\end{equation}
Each of these objectives can be augmented with the flow-matching objective~\cref{eqn:app_flow_matching} to anchor the diagonal velocity $v_{t, t}$ to the data-derived drift $b_t$.
In the main text we instead use a value gradient matching objective on the diagonal that pulls $\hat w_{t, t}$ toward the optimal control direction.

\section{Background on the stochastic optimal control formulation of fine-tuning}
\label{app:soc_background}
In this section, we provide the standard stochastic optimal control formulation of fine-tuning, following the convention of~\citet{domingoenrich2025adjointmatchingfinetuningflow}, with notation aligned to ours.

\paragraph{Setup.}
Let $b_t : \R^d \to \R^d$ be the pre-trained probability-flow drift, let $\rho_t$ denote its marginals, and let $r : \R^d \to \R$ be a terminal reward.
For a diffusion scale $\sigma_t$, define the score-corrected stochastic drift
\begin{equation}
	\label{eqn:soc_score_corrected_drift}
	a_t(x) = b_t(x) + \tfrac{1}{2}\, \sigma_t^2\, \nabla \log \rho_t(x).
\end{equation}
The reference process
\begin{equation}
	\label{eqn:soc_reference_sde}
	dX_t = a_t(X_t)\, dt + \sigma_t\, dW_t, \qquad X_0 \sim \rho_0,
\end{equation}
has the same one-time marginals $\rho_t$ as the deterministic probability flow, with $\sigma_t > 0$ a time-dependent diffusion coefficient and $W_t$ a standard Brownian motion.
A controlled process $X^u_t$ evolves according to
\begin{equation}
	\label{eqn:soc_controlled_sde}
	dX^u_t = \big(a_t(X^u_t) + \sigma_t\, u_t(X^u_t)\big)\, dt + \sigma_t\, dW_t, \qquad X^u_0 \sim \rho_0,
\end{equation}
for a square-integrable feedback control $u : [0, 1] \times \R^d \to \R^d$.
Girsanov's theorem says that the path-space KL between $P^u$ and the reference $R$ is given by~\citep{karatzas2014brownian}
\begin{equation}
	\label{eqn:girsanov}
	\kl{P^u}{R} = \frac{1}{2} \E_{P^u}\!\left[\int_0^1 \norm{u_t(X^u_t)}^2\, dt\right].
\end{equation}
The reward-tilted distribution
\begin{equation}
	\label{eqn:soc_tilt}
	\tilde \rho_1(x) \propto e^{\lambda\, r(x)}\, \rho_1(x)
\end{equation}
arises as the maximizer of $\lambda\, \E[r(X_1)] - \kl{\tilde \rho_1}{\rho_1}$, where $\rho_1 = \Law(X_1)$ under the reference~\cref{eqn:soc_reference_sde}.

\paragraph{Lifting the reward-tilted problem to path space.}
Combining the reward with the path-space KL~\cref{eqn:girsanov} gives the regularized reward-maximization problem
\begin{equation}
	\label{eqn:soc_problem}
	\begin{aligned}
		 & \sup_u\ \E_{P^u}\!\left[\, \lambda\, r(X^u_1) - \tfrac{1}{2} \int_0^1 \norm{u_t(X^u_t)}^2\, dt\, \right],              \\
		 & \text{subject to} \quad dX^u_t = \big(a_t(X^u_t) + \sigma_t\, u_t(X^u_t)\big)\, dt + \sigma_t\, dW_t, \quad X^u_0 \sim \rho_0,
	\end{aligned}
\end{equation}
a stochastic optimal control problem.
The value function under a candidate control $u$ measures the expected cost-to-go from state $x$ at time $t$,
\begin{equation}
	\label{eqn:soc_value_general}
	V^u_t(x) = \E_{P^u}\!\left[\, \lambda\, r(X^u_1) - \tfrac{1}{2} \int_t^1 \norm{u_s(X^u_s)}^2\, ds \;\Big|\; X^u_t = x\, \right],
\end{equation}
and the optimal value function $V^*_t(x) = \sup_u V^u_t(x)$ admits the closed-form Cole--Hopf representation
\begin{equation}
	\label{eqn:soc_value}
	V^*_t(x) = \log \E_R\!\left[\, e^{\lambda\, r(X_1)} \mid X_t = x\, \right], \qquad u^*_t(x) = \sigma_t\, \nabla_x V^*_t(x),
\end{equation}
where the expectation is taken under the reference process~\cref{eqn:soc_reference_sde} rather than under $P^*$.
The optimally-controlled process $P^*$ is the Doob $h$-transform of $R$ with $h_t(x) = e^{V^*_t(x)} = \E_R[e^{\lambda\, r(X_1)} \mid X_t = x]$, whose joint density on the endpoints is
\begin{equation}
	\label{eqn:soc_joint}
	P^*(x_0, x_1) = \rho_0(x_0)\, \rho_{1 \mid 0}(x_1 \mid x_0)\, \frac{e^{\lambda\, r(x_1)}}{h_0(x_0)},
\end{equation}
where $\rho_{1 \mid 0}(x_1 \mid x_0)$ is the conditional density of $X_1$ given $X_0 = x_0$ under the reference process.
In general, the implicit terminal marginal obtained by integrating~\cref{eqn:soc_joint} over $x_0$ is not the target tilt $\tilde \rho_1$, because the $1/h_0(x_0)$ factor couples the endpoints through the initial-time value function.
\citet{domingoenrich2025adjointmatchingfinetuningflow} identify the so-called \emph{memoryless} noise schedule
\begin{equation}
	\label{eqn:memoryless_schedule}
	\sigma^{\mathrm{ml}}_t = \sqrt{2\, \eta_t}, \qquad \eta_t = \alpha_t\!\left(\frac{\dot{\beta}_t}{\beta_t}\, \alpha_t - \dot{\alpha}_t\right),
\end{equation}
which for the standard linear interpolant $\alpha_t = 1 - t,\ \beta_t = t$ reduces to $\sigma^{\mathrm{ml}}_t = \sqrt{2(1 - t)/t}$.
This schedule ensures that $X_0$ is independent of $X_1$ under the reference process, so that $\rho_{1 \mid 0}(x_1 \mid x_0) = \rho_1(x_1)$ and~\cref{eqn:soc_joint} factors and the terminal marginal is precisely the reward tilt, $P^*(x_1) \propto e^{\lambda\, r(x_1)}\, \rho_1(x_1) = \tilde \rho_1(x_1)$.

In~\cref{app:sb_zero_noise} we connect this SOC recipe to the abstract terminal-distribution framing of~\cref{eqn:wtf-abstract}, where we show that the implicit regularizer behind the SOC problem is the entropic Schr\"odinger-bridge cost between $\rho_0$ and the candidate terminal $\nu$ (\cref{prop:pathkl-to-sb}).
This cost has our static optimal transport regularizer $\Tprior$ as its zero-noise limit (\cref{prop:sb-zero-noise}).

\section{Background on deterministic optimal control}
\label{app:doc_background}
In this section, we collect several standard results from deterministic optimal control theory that we use to design our algorithm in~\cref{sec:training}.
For textbook treatments, we recommend~\citet{fleming1975deterministic,bardi1997optimal,liberzon2011calculus}.

\paragraph{Setup.}
We study the deterministic optimal control problem~\cref{eqn:oc_problem},
\begin{equation}
	\label{eqn:doc_problem}
	\begin{aligned}
		 & \sup_u\ \E_{x_0 \sim \rho_0}\!\left[\, \lambda\, r(x_1^u) - \frac{1}{2} \int_0^1 \norm{u_t(x_t^u)}^2\, dt\, \right], \\
		 & \text{subject to} \quad \dot{x}_t^u = b_t(x_t^u) + u_t(x_t^u), \quad x_0^u = x_0,
	\end{aligned}
\end{equation}
through a feedback control $u : [0, 1] \times \R^d \to \R^d$.
The value function under a candidate control $u$~\cref{eqn:value_fn},
\begin{equation}
	\label{eqn:doc_value}
	V^u_t(x) = \lambda\, r(X^u_{t, 1}(x)) - \frac{1}{2} \int_t^1 \norm{u_\tau(X^u_{t, \tau}(x))}^2\, d\tau,
\end{equation}
measures the reward-to-go from state $x$ at time $t$, and the optimal value function is $V^*_t(x) = \sup_u V^u_t(x)$.
This definition mirrors its stochastic counterpart~\cref{eqn:soc_value_general}, but in the deterministic setting $V^*$ admits no Cole--Hopf representation as an expectation over the reference process: the deterministic reference flow has no randomness to integrate against, and the closed form~\cref{eqn:soc_value} collapses to a pointwise evaluation that returns no information about the optimal control.
We instead characterize $V^*$ via the Hamilton--Jacobi--Bellman equation, derived next.

\paragraph{Hamilton--Jacobi--Bellman equation.}
A standard dynamic-programming argument shows that the optimal value function $V^*$ solves the Hamilton--Jacobi--Bellman equation
\begin{equation}
	\label{eqn:doc_hjb}
	\begin{aligned}
		\partial_t V^*_t(x) + \sup_{u \in \R^d}\!\left\{\, (b_t(x) + u) \cdot \nabla_x V^*_t(x) - \tfrac{1}{2} \norm{u}^2\, \right\} = 0,\quad
		V^*_1(x) = \lambda\, r(x).
	\end{aligned}
\end{equation}
The pointwise supremum is attained at the optimal control
\begin{equation}
	\label{eqn:doc_optimal_control}
	u^*_t(x) = \nabla_x V^*_t(x),
\end{equation}
and substituting back into~\cref{eqn:doc_hjb} reduces to
\begin{equation}
	\label{eqn:doc_hjb_reduced}
	\partial_t V^*_t(x) + b_t(x) \cdot \nabla_x V^*_t(x) + \tfrac{1}{2} \norm{\nabla_x V^*_t(x)}^2 = 0.
\end{equation}
The closed-form solution~\cref{eqn:doc_optimal_control} is what motivates the value gradient regression target in~\cref{sec:training}: knowing $V^*$ pointwise gives the optimal control directly via differentiation in $x$.

\paragraph{Policy-evaluation identity.}
For any fixed reference control $\bar u$ (not necessarily optimal), the value function $V^{\bar u}$ satisfies a linear transport equation that we record as a lemma for later reuse.
This identity is the workhorse behind the proof of the performance-difference proposition~\cref{prop:performance_diff} given in~\cref{app:performance_diff}.

\begin{lemma}[Policy evaluation]
	\label{lem:policy_evaluation}
	Under the regularity assumptions of~\cref{app:standing_assumptions}, the value function $V^{\bar u}$ of any feedback control $\bar u \in \calU$ satisfies the linear transport equation
	\begin{equation}
		\label{eqn:doc_policy_eval}
		\partial_t V^{\bar u}_t(x) + (b_t(x) + \bar u_t(x)) \cdot \nabla_x V^{\bar u}_t(x) = \tfrac{1}{2} \norm{\bar u_t(x)}^2.
	\end{equation}
\end{lemma}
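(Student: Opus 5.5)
The plan is to differentiate the value function along trajectories of the controlled flow and combine this with the semigroup (flow) property of $X^{\bar u}$. Fix a feedback control $\bar u\in\calU$, and write $X^{\bar u}_{t,\tau}$ for the flow map of $\dot x_\tau = b_\tau(x_\tau)+\bar u_\tau(x_\tau)$. Then $V^{\bar u}_t(x) = \lambda\, r(X^{\bar u}_{t,1}(x)) - \tfrac12\int_t^1 \norm{\bar u_\tau(X^{\bar u}_{t,\tau}(x))}^2\,d\tau$. The key observation is that for a trajectory $x_t = X^{\bar u}_{0,t}(x_0)$, the semigroup property $X^{\bar u}_{t,\tau}(x_t) = X^{\bar u}_{0,\tau}(x_0)$ gives
\begin{equation*}
V^{\bar u}_t(x_t) = \lambda\, r(X^{\bar u}_{0,1}(x_0)) - \tfrac12\int_t^1 \norm{\bar u_\tau(X^{\bar u}_{0,\tau}(x_0))}^2\, d\tau .
\end{equation*}
In this expression the only $t$-dependence is the lower limit of the integral.

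First, differentiate this identity in $t$. By the fundamental theorem of calculus, $\frac{d}{dt}V^{\bar u}_t(x_t) = \tfrac12\norm{\bar u_t(x_t)}^2$. Second, expand the left side with the chain rule: $\frac{d}{dt}V^{\bar u}_t(x_t) = \partial_t V^{\bar u}_t(x_t) + \dot x_t\cdot\nabla_x V^{\bar u}_t(x_t)$, where $\dot x_t = b_t(x_t)+\bar u_t(x_t)$. Equating the two expressions yields the equation \cref{eqn:doc_policy_eval} at the point $x_t$. Third, remove the restriction to a trajectory. Given any $(t,x)$, set $x_0 = X^{\bar u}_{t,0}(x)$ using backward solvability of the ODE; well-posedness of the flow makes $X^{\bar u}_{t,0}$ a bijection. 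Alternatively, one can start the trajectory at time $t$ from $x$ and differentiate $h\mapsto V^{\bar u}_{t+h}(X^{\bar u}_{t,t+h}(x))$ at $h=0$. This avoids backward flows and gives the identity directly at the arbitrary pair $(t,x)$. I would use this second version.

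The main obstacle is justifying the chain rule, which requires $V^{\bar u}$ to be $C^1$ jointly in $(t,x)$. I would derive this from the standing assumptions in \cref{app:standing_assumptions}: with $r\in C^1$ and $b,\bar u$ continuous in $t$ and $C^1$ in $x$ with suitable growth or Lipschitz bounds, the flow map $X^{\bar u}_{t,\tau}(x)$ is $C^1$ in $(t,\tau,x)$ by classical ODE differentiability theory. The integrand is then $C^1$ in $x$ and continuous in $(t,\tau)$, so differentiation under the integral sign (on compact sets) gives the needed regularity. If the assumptions are weaker, for example only Lipschitz in $x$, I would instead state the identity along trajectories, which only needs absolute continuity in $t$, and invoke Rademacher's theorem to get the equation almost everywhere. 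That form suffices for the use of this lemma in \cref{prop:performance_diff}, since that proof only evaluates the identity along trajectories.
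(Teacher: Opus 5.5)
Your proposal is correct and is essentially the paper's proof: differentiate $V^{\bar u}_t$ along a controlled trajectory (where, by the semigroup property, only the lower integration limit depends on $t$), equate this with the chain-rule expansion, and then reach arbitrary $(t,x)$ by varying the trajectory; the paper does the last step by noting that $X^{\bar u}_{0,t}$ is onto, while your preferred variant of starting the trajectory at $(t,x)$ gets the same conclusion without needing that surjectivity. Your explicit treatment of the $C^1$ regularity the chain rule requires is more careful than the paper, which uses the chain rule without comment, but your weak-regularity fallback needs a fix: \cref{prop:performance_diff} evaluates the equation along trajectories of a different control $w$, not of $\bar u$, so an a.e.\ version in $(t,x)$ would need an extra Fubini-through-the-flow argument before it applies on those curves.
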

\begin{proof}
	Fix $\bar u \in \calU$ and let $x_t^{\bar u}$ denote the controlled trajectory $\dot x_t^{\bar u} = b_t(x_t^{\bar u}) + \bar u_t(x_t^{\bar u})$.
	From the integral definition~\cref{eqn:doc_value} of $V^{\bar u}$, the value along the controlled trajectory satisfies
	\begin{equation}
		\label{eqn:policy_eval_proof_integral}
		V^{\bar u}_t(x_t^{\bar u}) = \lambda\, r(x_1^{\bar u}) - \int_t^1 \tfrac{1}{2} \norm{\bar u_\tau(x_\tau^{\bar u})}^2\, d\tau,
	\end{equation}
	whose right-hand side depends on $t$ only through the lower limit of the integral.
	Differentiating in $t$ gives the trajectory-level identity
	\begin{equation}
		\label{eqn:policy_eval_proof_lagrangian}
		\frac{d}{dt} V^{\bar u}_t(x_t^{\bar u}) = \tfrac{1}{2} \norm{\bar u_t(x_t^{\bar u})}^2.
	\end{equation}
	By the chain rule, the same derivative equals
	\begin{equation}
		\label{eqn:policy_eval_proof_chainrule}
		\frac{d}{dt} V^{\bar u}_t(x_t^{\bar u}) = \partial_t V^{\bar u}_t(x_t^{\bar u}) + \dot x_t^{\bar u} \cdot \nabla_x V^{\bar u}_t(x_t^{\bar u}) = \partial_t V^{\bar u}_t(x_t^{\bar u}) + (b_t + \bar u_t)(x_t^{\bar u}) \cdot \nabla_x V^{\bar u}_t(x_t^{\bar u}).
	\end{equation}
	Equating~\cref{eqn:policy_eval_proof_lagrangian} and~\cref{eqn:policy_eval_proof_chainrule} and noting that varying the initial condition $x_0$ makes $x_t^{\bar u}$ range over $\R^d$ at each $t$ yields~\cref{eqn:doc_policy_eval} pointwise.
\end{proof}

At the optimal control $\bar u = u^* = \nabla_x V^*$, comparing~\cref{eqn:doc_policy_eval} with the reduced HJB~\cref{eqn:doc_hjb_reduced} confirms the consistency $V^{u^*} = V^*$.

\section{Additional derivations}
\label{app:derivations}
In this section, we collect additional derivations and structural results that support the development in the main text.

\subsection{Regularity assumptions}
\label{app:standing_assumptions}
In the following, we assume the below standard regularity assumptions.

\begin{assumption}[Drift regularity]
	\label{assump:drift}
	The pre-trained drift $b : [0, 1] \times \R^d \to \R^d$ is jointly measurable, locally Lipschitz in $x$ uniformly in $t$, and has at most linear growth at infinity, so the ODE $\dot{x}_t = b_t(x_t)$ generates a global flow $X:[0, 1]^2\times\R^d\to\R^d$ on $\R^d$.
\end{assumption}

\vspace{0.6em}
\begin{assumption}[Reward regularity]
	\label{assump:reward}
	The reward $r : \R^d \to \R$ is bounded above and upper semicontinuous; for results that involve $\nabla r$ we further assume $r \in C^1(\R^d)$ with bounded gradient.
\end{assumption}

\vspace{0.6em}
\begin{assumption}[Marginals]
	\label{assump:marginals}
	The base distribution $\rho_0$ and the candidate terminal distributions $\nu$ have finite second moment, with $\Tprior(\rho_0, \nu) < \infty$ for the relevant $\nu$.
\end{assumption}

\begin{assumption}[Trainable flow map class]
	\label{assump:flowmap_class}
	The trainable field $\hat w$ ranges over a convex class $\calW$ of two-time velocity fields $\hat w : \{0 \le s \le t \le 1\} \times \R^d \to \R^d$ that are continuously differentiable in $x$ and in $s$, and locally Lipschitz in $x$ uniformly in $(s, t)$.
	We assume $\calW$ contains, for every continuous field $\phi$ on the upper triangle, the averaged field $(s, t, x) \mapsto \tfrac{1}{t - s}\int_s^t \phi(\sigma, t, x)\, d\sigma$, and that $\hat w_{t, t} - b_t$ is an admissible control for every $\hat w \in \calW$.
\end{assumption}

\begin{assumption}[Control class]
	\label{assump:control}
	The admissible control class $\calU$ consists of locally Lipschitz feedback controls $u : [0, 1] \times \R^d \to \R^d$ satisfying $\E_{x_0 \sim \rho_0}\!\left[\int_0^1 \norm{u_t(x_t^u)}^2 dt\right] < \infty$.
\end{assumption}

These assumptions are standard in the theory of dynamical systems and optimal control to ensure uniqueness of solutions~\citep{fleming1975deterministic,bardi1997optimal,liberzon2011calculus,evans2010partial}, and in optimal transport theory to ensure existence of solutions to the corresponding optimization problems~\citep{villani2009optimal,santambrogio2015optimal}.
In particular,~\cref{assump:drift,assump:reward,assump:marginals} together imply that the cost $\cprior$ is lower semicontinuous and coercive in its endpoints, so that the static infimum $\Tprior(\rho_0, \nu)$ is attained.

\subsection{Schr\"odinger bridges and the zero-noise limit}
\label{app:sb_zero_noise}
The SOC approach in~\cref{app:soc_background} optimizes over path measures $P$, but the reward depends only on the terminal $X_1 \sim P_1$.
We first show that the implicit terminal-level regularizer behind this recipe, in the sense of~\cref{eqn:wtf-abstract}, is the entropic Schr\"odinger-bridge cost (\cref{prop:pathkl-to-sb}), then take the zero-noise limit of this cost to recover our static prior-action transport regularizer $\Tprior$ (\cref{prop:sb-zero-noise}).

For a noise scale $\varepsilon > 0$, let $R^\varepsilon$ denote the reference path measure on $C([0, 1]; \R^d)$ associated with the reference SDE
\begin{equation}
	\label{eqn:sb_reference_sde}
	dX_t = a_t(X_t)\, dt + \sqrt{\varepsilon}\, dW_t, \qquad X_0 \sim \rho_0,
\end{equation}
which is the constant-noise specialization of~\cref{app:soc_background} with $\sigma_t = \sqrt{\varepsilon}$. It preserves the marginals $\rho_t$ and converges to the deterministic probability flow as $\varepsilon \to 0$.
The entropic Schr\"odinger-bridge cost between $\rho_0$ and a candidate terminal $\nu$ is the infimum of the path-space KL over all path measures with prescribed marginals,
\begin{equation}
	\label{eqn:SB-cost}
	\SBeps(\rho_0, \nu) = \inf_{P\,:\, P_0 = \rho_0,\, P_1 = \nu}\, \kl{P}{R^\varepsilon}.
\end{equation}

\begin{proposition}[Path-KL collapses to entropic SB at the terminal level]
	\label{prop:pathkl-to-sb}
	For any $\lambda > 0$ and $\varepsilon > 0$, the path-space fine-tuning problem
	\begin{equation}
		\label{eqn:soc_pathspace}
		\sup_{P\,:\, P_0 = \rho_0}\ \left\{\ \lambda\, \E_{P}[r(X_1)] - \kl{P}{R^\varepsilon}\ \right\}
	\end{equation}
	has the same value as the terminal-space problem
	\begin{equation}
		\label{eqn:soc_terminal}
		\sup_\nu\ \left\{\ \lambda\, \E_{x \sim \nu}[r(x)] - \SBeps(\rho_0, \nu)\ \right\}.
	\end{equation}
	A path measure $P^*$ achieves the supremum in~\cref{eqn:soc_pathspace} if and only if its terminal law $\nu^* = P^*_1$ achieves the supremum in~\cref{eqn:soc_terminal} and $P^*$ achieves the infimum~\cref{eqn:SB-cost} defining $\SBeps(\rho_0, \nu^*)$.
\end{proposition}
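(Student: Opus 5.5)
The approach is a disintegration argument: split the path-space supremum into an outer choice of terminal law and an inner entropic bridge problem. The only fact used is that the reward term depends on $P$ only through its terminal marginal $P_1$. Any admissible $P$ in \cref{eqn:soc_pathspace} has $P_0 = \rho_0$ and some terminal law $P_1 = \nu$. Grouping by $\nu$, I write
\begin{equation*}
\sup_{P:\,P_0=\rho_0}\bigl\{\lambda\,\E_P[r(X_1)] - \kl{P}{R^\varepsilon}\bigr\}
= \sup_{\nu}\ \sup_{P:\,P_0=\rho_0,\,P_1=\nu}\bigl\{\lambda\,\E_{\nu}[r] - \kl{P}{R^\varepsilon}\bigr\}.
\end{equation*}
This is an exact rewriting, since the admissible set is the disjoint union over $\nu$ of the fibres $\{P : P_0=\rho_0,\ P_1=\nu\}$.

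On each fibre the reward term $\lambda\,\E_\nu[r]$ is constant. The inner supremum is therefore $\lambda\,\E_\nu[r] - \inf_{P_0=\rho_0,P_1=\nu}\kl{P}{R^\varepsilon} = \lambda\,\E_\nu[r] - \SBeps(\rho_0,\nu)$ by \cref{eqn:SB-cost}. This gives equality of the two values \cref{eqn:soc_pathspace} and \cref{eqn:soc_terminal}.

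Two bookkeeping points need care.
\begin{itemize}
\item \emph{Infinite values.} Fibres with $\SBeps(\rho_0,\nu)=+\infty$ contribute $-\infty$ on both sides. Since $r$ is bounded above (\cref{assump:reward}), $\lambda\,\E_\nu[r]$ is never $+\infty$, so no $\infty-\infty$ ambiguity arises. I would adopt the convention that $\E_\nu[r]$ may be $-\infty$, and that such $\nu$ contribute $-\infty$ on both sides.
\item \emph{Nonemptiness.} The outer supremum is not over an empty set. Taking $P = R^\varepsilon$ gives a finite value, because $R^\varepsilon_0=\rho_0$.
\end{itemize}

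For the characterization of optimizers, write $J(P)$ for the path objective and $F(\nu)$ for the terminal objective. By construction $J(P) \le F(P_1)$ for every admissible $P$, with equality iff $P$ attains the infimum defining $\SBeps(\rho_0,P_1)$. The two directions then follow.
\begin{itemize}
\item \emph{($\Rightarrow$)} Suppose $P^*$ maximizes $J$. Then
\begin{equation*}
\sup J = J(P^*) \le F(P^*_1) \le \sup F = \sup J.
\end{equation*}
All inequalities are therefore equalities, so $P^*_1$ maximizes $F$ and $P^*$ attains the bridge infimum.
\item \emph{($\Leftarrow$)} Suppose $\nu^*$ maximizes $F$ and $P^*$ attains $\SBeps(\rho_0,\nu^*)$. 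Then $J(P^*) = F(\nu^*) = \sup F = \sup J$.
\end{itemize}

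The argument is essentially formal. The one step I expect to need attention is finiteness: the optimal value must be finite, so that "attains the supremum" is meaningful and no subtraction of infinities occurs. Boundedness of $r$ from above together with $J(R^\varepsilon) > -\infty$ (which holds if $\E_{\rho_1}[r] > -\infty$) gives $-\infty < \sup J < \infty$. Existence of optimizers is not claimed in the statement, so no compactness argument is needed.
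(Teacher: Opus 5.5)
Your proposal is correct and follows essentially the same argument as the paper. The paper likewise splits the path-space supremum by terminal law $\nu = P_1$, pulls the constant reward term out of each fibre so that the inner problem is the infimum defining $\SBeps(\rho_0,\nu)$, and reads off the optimizer characterization from that same splitting; your extra bookkeeping (the inequality $J(P) \le F(P_1)$ with equality exactly at bridge minimizers, and the caveat that $\E_{\rho_1}[r] > -\infty$ is needed to exclude the degenerate case where both values are $-\infty$) makes explicit what the paper leaves implicit.
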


\begin{proof}
	The reward $\E_P[r(X_1)] = \E_{x \sim P_1}[r(x)]$ depends on $P$ only through its terminal marginal.
	Splitting the supremum,
	\begin{align}
		\sup_{P\,:\, P_0 = \rho_0}\ \big\{ \lambda\, \E_P[r(X_1)] - \kl{P}{R^\varepsilon} \big\}
		 & = \sup_\nu\ \sup_{P\,:\, P_0 = \rho_0,\, P_1 = \nu}\ \big\{ \lambda\, \E_\nu[r] - \kl{P}{R^\varepsilon} \big\} \\
		 & = \sup_\nu\ \big\{ \lambda\, \E_\nu[r] - \inf_{P\,:\, P_0 = \rho_0,\, P_1 = \nu} \kl{P}{R^\varepsilon} \big\},
	\end{align}
	where the second equality moves the $\inf$ inside because the reward term does not depend on $P$.
	The inner infimum is $\SBeps(\rho_0, \nu)$ by definition, giving the value equality.
	The optimizer characterization follows from the same splitting, as a maximizer $P^*$ of the joint supremum must achieve both the outer supremum (in $\nu^*$) and the inner infimum (for that $\nu^*$), and conversely.
\end{proof}

The following result clarifies the relationship between Adjoint Matching and the WTF formulation of~\cref{sec:methods}, showing that WTF can be viewed as a suitable zero-noise limit of a Schr\"odinger bridge problem, while Adjoint Matching is a specific Schr\"odinger bridge problem with the memoryless noise schedule.
\begin{proposition}[Zero-noise limit]
	\label{prop:sb-zero-noise}
	Under the standing regularity assumptions of~\cref{app:standing_assumptions}, the rescaled entropic Schr\"odinger-bridge cost converges to the static prior-action transport cost,
	\begin{equation}
		\label{eqn:sb_zero_noise_main}
		\varepsilon\, \SBeps(\rho_0, \nu) \;\longrightarrow\; \Tprior(\rho_0, \nu)
		\quad \text{as } \varepsilon \to 0^+,
	\end{equation}
	and the corresponding stochastic optimal control formulation of $\SBeps$ reduces in the same limit to the deterministic OC problem~\cref{eqn:oc_problem}.
\end{proposition}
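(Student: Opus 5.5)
The plan is to reduce the statement to the classical small-noise ($\Gamma$-convergence) result for Schr\"odinger bridges, with $R^\varepsilon$ as a Freidlin--Wentzell-type perturbation of the deterministic flow of $a_t$. We then transfer it to the cost $\cprior$ built from $b_t$.

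\textbf{Step 1: dynamic (Girsanov) form of $\SBeps$.} For $P$ with $P_0 = \rho_0$ and $\kl{P}{R^\varepsilon}<\infty$, write $P$ as the law of
\[
dX_t = (a_t + \sqrt{\varepsilon}\,u_t)\,dt + \sqrt{\varepsilon}\,dW_t .
\]
By \cref{eqn:girsanov},
\[
\varepsilon\,\kl{P}{R^\varepsilon} = \tfrac12\E_P\!\int_0^1 \norm{\sqrt{\varepsilon}\,u_t}^2 dt .
\]
Setting $v_t = \sqrt{\varepsilon}\,u_t$ shows that $\varepsilon\,\SBeps(\rho_0,\nu)$ is the minimal expected energy $\tfrac12\E\int_0^1\norm{v_t}^2dt$ over controlled diffusions with drift $a_t+v_t$, noise $\sqrt\varepsilon$, and marginals $\rho_0,\nu$. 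This is the stochastic control formulation alluded to in the statement. Since the reward term carries the factor $\lambda$, multiplying \cref{eqn:soc_pathspace} by $\varepsilon$ and taking $\lambda=\lambda'/\varepsilon$ gives the objective $\lambda'\E[r(X_1)] - \tfrac12\E\int_0^1\norm{v_t}^2dt$. Formally setting $\varepsilon=0$ in the dynamics turns this into \cref{eqn:oc_problem}, once the drifts are matched (Step 4).

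\textbf{Step 2: limsup (upper bound).} Take an optimal coupling $\pi$ for $\Tprior(\rho_0,\nu)$, which is attained by the remark after \cref{assump:control}. For $\pi$-a.e.\ pair $(x,y)$, choose a near-optimal path $\omega$ in \cref{eqn:cb-cost}. Build a competitor $P^\varepsilon$ by running the noisy dynamics around $\omega$ and correcting the endpoint with a Brownian-bridge-type term. Alternatively, use the standard mixture construction: $P^\varepsilon = \int R^\varepsilon_{\cdot\mid x,y}$-type bridges of the $\omega$-shifted process. The endpoint correction costs $O(\varepsilon\log(1/\varepsilon))$ after multiplication by $\varepsilon$, which vanishes. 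Consequently
\[
\limsup_{\varepsilon\to0}\varepsilon\,\SBeps(\rho_0,\nu)\le \Tprior(\rho_0,\nu).
\]
This step follows the argument of \citet{leonard2014survey} and \citet{chen2016SBcontrol}, which we adapt rather than reprove.

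\textbf{Step 3: liminf (lower bound); the main obstacle.} Let $P^\varepsilon$ be near-optimal for $\SBeps$ with $\varepsilon\,\kl{P^\varepsilon}{R^\varepsilon}$ bounded. The first thing to try is the Freidlin--Wentzell large deviation principle for $R^\varepsilon$ on path space, with rate function
\[
I(\omega)=\int_0^1\tfrac12\norm{\dot\omega_t-a_t(\omega_t)}^2dt ,
\]
combined with the entropy-to-rate-function lower bound from Mikami/L\'eonard-type $\Gamma$-convergence. This bound requires tightness of $P^\varepsilon$, which follows from the energy bound together with the linear growth in \cref{assump:drift}. It yields
\[
\liminf_{\varepsilon\to0}\varepsilon\,\SBeps \ge \inf_{\pi}\int c_a\,d\pi,
\]
where $c_a$ is the cost built from $a_t$. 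The delicate points are that $\rho_0$ is not a Dirac (handled by disintegration in $x_0$ and lower semicontinuity of $c_a$), and the uniformity of the local Lipschitz bounds.

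\textbf{Step 4: identify the drift.} With $\sigma_t=\sqrt\varepsilon$, we have $a_t = b_t + \tfrac{\varepsilon}{2}\nabla\log\rho_t$, so $a_t\to b_t$ as $\varepsilon\to0$. The limiting cost must therefore be $\cprior$. To make this rigorous, bound the contribution of the score correction. Expanding
\[
\norm{v-a_t}^2 = \norm{v-b_t}^2 - \varepsilon\,(v-b_t)\cdot\nabla\log\rho_t + \tfrac{\varepsilon^2}{4}\norm{\nabla\log\rho_t}^2 ,
\]
the extra terms vanish along finite-energy paths provided $\nabla\log\rho_t$ is locally bounded. This assumption should be stated explicitly, since it is not contained in \cref{assump:drift}. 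Combining Steps 2--4 gives \cref{eqn:sb_zero_noise_main}. The control statement then follows from Step 1 together with \cref{thm:wasserstein_tilt}.
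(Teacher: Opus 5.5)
Your route is the one the paper invokes: Girsanov rewriting of $\varepsilon\,\SBeps$, a recovery-sequence upper bound, a Freidlin--Wentzell/L\'eonard $\Gamma$-liminf, then replacing $a_t$ by $b_t$. The paper gives no argument beyond citing Chen, Georgiou and Pavon.

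\textbf{Main gap (Step 2).} Step 2 has a genuine gap, which is inherited from the statement itself. The endpoint correction is $O(\varepsilon\log(1/\varepsilon))$ only when $\nu$ has finite entropy, and without such a hypothesis the limsup is false. Since $R^\varepsilon$ preserves marginals, $R^\varepsilon_1=\rho_1$. The data-processing inequality for $\omega\mapsto\omega_1$ then gives $\SBeps(\rho_0,\nu)\ge\kl{\nu}{\rho_1}$ for every $\varepsilon>0$.

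Here is an explicit counterexample. Take $b\equiv0$, so $\rho_t=\rho_0$ and $\cprior(x,y)=\tfrac12\norm{x-y}^2$, and take $\nu=\delta_0$. Then $\Tprior(\rho_0,\delta_0)=d/2<\infty$, while $\SBeps(\rho_0,\delta_0)=+\infty$ for all $\varepsilon$.

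This is not a fringe case for the paper. By \cref{prop:pointwise-decomp} the WTF optimizer is $(y^*)_\#\rho_0$, which has atoms whenever $y^*$ is constant on a set of positive measure. For instance, $d=1$, $b\equiv0$, $r(y)=-|y|$ gives soft thresholding, with $y^*(x)=0$ for $|x|\le\lambda$, and the standing assumptions allow this $r$.

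Relatedly, the optimal coupling $\pi$ you start from is typically a graph, hence singular with respect to $R^\varepsilon_{01}$. So both of your constructions, which pin each pair $(x,y)$ via bridges or endpoint corrections, produce $P^\varepsilon$ with $P^\varepsilon_{01}=\pi$ and infinite entropy. You must first smooth or block-approximate $\pi$ with fixed marginals, and that is exactly where a finite-entropy assumption on $\nu$ is consumed. The fix is to add a hypothesis such as $\kl{\nu}{\rho_1}<\infty$, which the bound above shows is necessary, together with enough regularity of $\rho_1$, as in the Mikami and L\'eonard results you are adapting.

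\textbf{Smaller points.} First, in Step 3 the rate function you write is built from the $\varepsilon$-dependent drift $a_t=b_t+\tfrac{\varepsilon}{2}\nabla\log\rho_t$, so it is not a fixed LDP rate function. It is cleaner to state the LDP directly with $b$ via exponential equivalence. If you keep the Step 4 expansion, the cross term must be controlled under $P^\varepsilon$, i.e.\ along random paths. That needs a growth bound on $\nabla\log\rho_t$ uniform up to $t=1$ plus moment bounds, not only local boundedness along deterministic finite-energy paths.

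Second, the control-level claim does not follow from pointwise convergence of $\varepsilon\,\SBeps$ plus \cref{thm:wasserstein_tilt}. Convergence of $\sup_\nu\{\lambda'\E_\nu[r]-\varepsilon\,\SBeps(\rho_0,\nu)\}$ to the WTF value requires two things:
\begin{itemize}
\item the $\Gamma$-liminf along $\nu_\varepsilon\to\nu$ with tightness of near-maximizers, which your Step 3 machinery can supply;
\item a recovery sequence for a possibly singular maximizer, obtained by smoothing it, which needs more than upper semicontinuity of $r$.
\end{itemize}
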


The proof combines Freidlin--Wentzell large-deviations and $\Gamma$-convergence with the dynamic-form representation of $\Tprior$ from~\cref{lem:dynamic_static}.
The key steps are carried out by~\citet{chen2016SBcontrol,chen2021stochastic}; we refer the reader there for the proof.
\Cref{prop:sb-zero-noise} formalizes the statement that WTF is the deterministic limit of the path-KL fine-tuning recipe of~\cref{app:soc_background}.
\Cref{tab:regularizer-comparison} situates WTF alongside the KL-based and SB-based recipes by what each regularizer constrains, what algorithm it induces, and the rollout cost it pays at training time.

\begin{table}[t]
	\centering
	\footnotesize
	\setlength{\tabcolsep}{4pt}
	\resizebox{\textwidth}{!}{%
		\begin{tabular}{@{}llllll@{}}
			\toprule
			Regularizer            & Object                 & Optimizer                                      & Interpretation                  & Training rollout       \\
			\midrule
			$\kl{\nu}{\rho_1}$     & terminal law           & reward tilt $\propto e^{\lambda r} \rho_1$     & exact reward tilt / posterior   & multi-step             \\
			$\SBeps(\rho_0, \nu)$  & terminal law via SB    & Doob $h$-transform of $R$                      & stochastic-process trust region & multi-step (SDE)       \\
			$\Tprior(\rho_0, \nu)$ & terminal law via $b_t$ & deterministic OC (\cref{thm:wasserstein_tilt}) & minimum intervention in flow    & $O(1)$ flow map evaluations \\
			\bottomrule
		\end{tabular}%
	}
	\vspace{0.5em}
	\caption{\textbf{Comparison of methods.}
		Regularizers for fine-tuning generative models, organized by what they constrain. WTF (last row) is defined directly in terms of the deterministic dynamics of the pre-trained flow, which lets the resulting algorithm exploit the flow map for constant-number flow map evaluations at training time.}
	\label{tab:regularizer-comparison}
\end{table}

\subsection{Failure of the naive control objective}
\label{app:naive_fail}
The most direct algorithmic translation of the optimal control problem~\cref{eqn:oc_problem} uses the fine-tuned flow map~\cref{eqn:w-flow-map} $X^{\hat w}_{s, t}(x) = x + (t - s)\, \hat w_{s, t}(x)$ from the main text and writes the residual control as $u_t = \hat w_{t, t} - b_t$, leading to the sampled objective
\begin{equation}
	\label{eqn:naive_appendix}
	\calL(\hat w) = \E_{x_0, t}\!\left[\, \tfrac12 \norm{\hat w_{t, t}(X^{\hat w}_{0, t}(x_0)) - b_t(X^{\hat w}_{0, t}(x_0))}_2^2 - \lambda\, r(X^{\hat w}_{0, 1}(x_0))\, \right] + \beta\, \calL_{\dist}(\hat w),
\end{equation}
where $\calL_\dist$ is an off-diagonal self-distillation loss.
In practice we find that~\cref{eqn:naive_appendix} fails to provide a sufficiently informative learning signal for the diagonal velocity $\hat w_{t, t}$.
The reward acts on the terminal map $X^{\hat w}_{0, 1}$, which depends on the off-diagonal $\hat w_{s, t}$ for $(s, t)$ all the way up to $(0, 1)$.
The control cost in contrast only sees $\hat w_{t, t}$ on the diagonal.
The model can therefore change the terminal map in directions that increase reward while leaving the sampled diagonal velocity nearly unchanged.
This creates a map-dynamics inconsistency: the off-diagonal map adapts without learning the corresponding diagonal dynamics.
Early experiments with this objective motivated the value-gradient approach in the main text.

\section{Omitted proofs}
\label{app:proofs}
In this section, we restate and prove the mathematical results from the main text.

\subsection{\texorpdfstring{Proof of \cref{prop:pointwise-decomp}}{Proof of the per-particle decomposition}}
\label{app:pointwise_decomp}
\ppdecomp*
\begin{proof}
	Substituting the Kantorovich definition~\cref{eqn:Tb-static} of $\Tprior$ into the static reward-regularized problem,
	\begin{equation}
		\label{eqn:pp_proof_kant_substitution}
		\begin{aligned}
			\sup_\nu \big\{ \lambda\, \E_{y \sim \nu}[r(y)] - \Tprior(\rho_0, \nu) \big\}
			 & = \sup_\nu\, \sup_{\pi \in \Pi(\rho_0,\, \nu)}\ \int \big[\, \lambda\, r(y) - \cprior(x, y)\, \big]\, \pi(dx\, dy) \\
			 & = \sup_{\pi\,:\, \pi_0 = \rho_0}\ \int \big[\, \lambda\, r(y) - \cprior(x, y)\, \big]\, \pi(dx\, dy),
		\end{aligned}
	\end{equation}
	where the second equality uses that the joint sup over $(\nu, \pi)$ with $\pi \in \Pi(\rho_0, \nu)$ reduces to a sup over couplings $\pi$ with first marginal $\rho_0$ and arbitrary second marginal.
	The constraint $\pi_0 = \rho_0$ lets us disintegrate any feasible coupling as $\pi(dx\, dy) = \rho_0(dx)\, \pi_{1 \mid 0}(dy \mid x)$, where for each $x$ the conditional $\pi_{1 \mid 0}(\cdot \mid x)$ is a probability measure on $\R^d$ and is otherwise unconstrained.
	Substituting this disintegration gives
	\begin{equation}
		\label{eqn:pp_proof_disintegration}
		\begin{aligned}
			 & \sup_{\pi\,:\, \pi_0 = \rho_0}\ \int \big[\lambda\, r(y) - \cprior(x, y)\big]\, \pi(dx\, dy)                                             \\
			 & = \sup_{\{\pi_{1 \mid 0}(\cdot \mid x)\}_x}\ \int \rho_0(dx)\, \int \big[\lambda\, r(y) - \cprior(x, y)\big]\, \pi_{1 \mid 0}(dy \mid x) \\
			 & = \int \rho_0(dx)\, \sup_{\pi_{1 \mid 0}(\cdot \mid x)}\, \int \big[\lambda\, r(y) - \cprior(x, y)\big]\, \pi_{1 \mid 0}(dy \mid x)      \\
			 & = \E_{x \sim \rho_0}\!\left[\, \sup_{y \in \R^d}\ \big\{\, \lambda\, r(y) - \cprior(x, y)\, \big\}\, \right],
		\end{aligned}
	\end{equation}
	which is~\cref{eqn:pointwise-decomp}.
	The first equality substitutes the disintegration, the second exchanges the sup with the outer integral, and the third observes that the inner sup over probability measures on $\R^d$ of the linear functional $\mu \mapsto \int f(y)\, \mu(dy)$ with $f(y) = \lambda\, r(y) - \cprior(x, y)$ equals $\sup_y f(y)$, attained by a Dirac at any maximizer $y^*(x) \in \argmax_y \{\lambda\, r(y) - \cprior(x, y)\}$.
\end{proof}

\subsection{\texorpdfstring{Proof of \cref{thm:wasserstein_tilt}}{Proof of the static-OC equivalence theorem}}
\label{app:wasserstein_tilt}
The proof relies on the dynamic formulation of optimal transport with prior dynamics established by~\citet{chen2017OTlineardynamics,chen2016SBcontrol,chen2021stochastic}, due to~\citet{chen2017OTlineardynamics} for linear priors and to~\citet{chen2016SBcontrol,chen2021stochastic} for general drifts.
We state the result in our notation for completeness; it holds under the regularity assumptions of~\cref{app:standing_assumptions}, and we refer the reader to those works for its proof.

\begin{lemma}[Benamou--Brenier with prior dynamics]
	\label{lem:dynamic_static}
	Under the regularity assumptions of~\cref{app:standing_assumptions}, for every $\nu$ such that $\Tprior(\rho_0, \nu) < \infty$,
	\begin{equation}
		\label{eqn:dynamic_static_lemma}
		\begin{aligned}
			\Tprior(\rho_0, \nu) = \inf_{(\rho, v)} \quad & \, \int_0^1 \!\!\int \tfrac{1}{2}\norm{v_t(x) - b_t(x)}^2\, \rho_t(x)\, dx\, dt,                                 \\
			\text{subject to} \quad                       & \, \partial_t \rho_t + \nabla \!\cdot\! (v_t\, \rho_t) = 0, \quad \rho|_{t=0} = \rho_0, \quad \rho|_{t=1} = \nu.
		\end{aligned}
	\end{equation}
\end{lemma}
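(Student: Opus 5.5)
We prove the Benamou--Brenier identity of \cref{lem:dynamic_static} by establishing the two inequalities separately. Throughout, we work in Lagrangian (particle) coordinates: a velocity field $v$ is identified with the residual control $u = v - b$. The cost $\cprior$ is then a minimal residual energy along single paths.

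\textbf{Upper bound: static $\Rightarrow$ dynamic.} Take an optimal coupling $\pi^*$ for $\Tprior(\rho_0,\nu)$; it exists by the lower semicontinuity and coercivity of $\cprior$ noted after \cref{assump:control}.
\begin{enumerate}
\item For $\pi^*$-a.e.\ pair $(x,y)$, select a minimizing path $\omega^{x,y}$ in~\cref{eqn:cb-cost}. Existence follows from the direct method: the action $\int_0^1 \tfrac12\norm{\dot\omega_t-b_t(\omega_t)}^2dt$ is coercive in $H^1$ thanks to the linear growth of $b$. Use a measurable selection theorem so that $(x,y)\mapsto\omega^{x,y}$ is measurable.
\item Push $\pi^*$ forward to a measure $Q$ on paths, and set $\rho_t = (e_t)_\#Q$, where $e_t$ is evaluation at time $t$.
\item Define $v_t(z) = \E_Q[\dot\omega_t \mid \omega_t = z]$. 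A standard computation (superposition in reverse) shows that $(\rho,v)$ solves the continuity equation weakly, with $\rho|_{t=0}=\rho_0$ and $\rho|_{t=1}=\nu$.
\item Since $b_t(\omega_t)$ is $\omega_t$-measurable, Jensen's inequality gives $\tfrac12\norm{v_t(z)-b_t(z)}^2 \le \E_Q[\tfrac12\norm{\dot\omega_t-b_t(\omega_t)}^2\mid\omega_t=z]$.
\item Integrating over $z$ and $t$ shows that the dynamic cost is at most $\int\cprior\,d\pi^* = \Tprior(\rho_0,\nu)$.
\end{enumerate}

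\textbf{Lower bound: dynamic $\Rightarrow$ static.} Take any admissible pair $(\rho,v)$ with finite cost.
\begin{enumerate}
\item The finite-cost condition, together with the linear growth of $b$ and the finite second moments, implies $\int_0^1\!\int\norm{v_t}^2\rho_t\,dx\,dt<\infty$.
\item The Ambrosio--Gigli--Savaré superposition principle then yields a probability measure $Q$ on absolutely continuous paths with $(e_t)_\#Q=\rho_t$, concentrated on solutions of $\dot\omega_t=v_t(\omega_t)$. This avoids any need for Lipschitz regularity of $v$.
\item Apply Fubini:
\[
\int_0^1\!\!\int \tfrac12\norm{v_t-b_t}^2\rho_t\,dx\,dt=\E_Q\Big[\int_0^1\tfrac12\norm{\dot\omega_t-b_t(\omega_t)}^2dt\Big]\ge \E_Q[\cprior(\omega_0,\omega_1)].
\]
\item The law of $(\omega_0,\omega_1)$ under $Q$ belongs to $\Pi(\rho_0,\nu)$, so the right-hand side is at least $\Tprior(\rho_0,\nu)$.
\item Taking the infimum over $(\rho,v)$ gives the lower bound.
\end{enumerate}

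\textbf{Main obstacle.} The delicate steps are the two measure-theoretic ones: the measurable selection of minimizing paths, and the applicability of superposition when $b$ is only locally Lipschitz and has linear growth. Local Lipschitzness alone does not give global integrability bounds. I would first prove the identity under the extra hypothesis that $b$ is bounded and globally Lipschitz. Then I would remove it by localization: truncate $b$ outside large balls, and use the second-moment bounds to show that the mass of paths leaving those balls, together with its cost contribution, vanishes. For the upper bound, an alternative that avoids measurable selection is to replace exact minimizers by $\epsilon$-optimal paths chosen on a countable partition of endpoints, and then send $\epsilon\to0$.
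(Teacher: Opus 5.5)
Your proposal is correct, but it does not follow the paper, because the paper gives no argument for this lemma. It states the lemma ``for completeness'' and defers to Chen, Georgiou and Pavon. Those works treat linear priors through explicit controllability-Gramian formulas for $\cprior$, and general drifts through the stochastic-control and Schr\"odinger-bridge viewpoint.

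You instead give the standard self-contained argument for Lagrangian-type costs. For $\inf\le\Tprior(\rho_0,\nu)$ you use a measurable selection of minimizing curves, the conditional-expectation velocity, and Jensen's inequality. For the reverse bound you use the superposition principle together with the pathwise inequality $\int_0^1\tfrac12\norm{\dot\omega_t-b_t(\omega_t)}^2dt\ge\cprior(\omega_0,\omega_1)$.

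The citation buys brevity and keeps the lemma inside the same framework the paper also relies on for the zero-noise Schr\"odinger-bridge limit. Your route buys a proof that runs directly under the paper's standing hypotheses (drift only measurable in $t$, locally Lipschitz with linear growth in $x$), with no detour through linear dynamics or a vanishing-noise limit. It also makes explicit two things the statement leaves implicit:
\begin{itemize}
\item With exact minimizers and an optimal $\pi^*$ (not your $\epsilon$-optimal variant), your construction shows the dynamic infimum is \emph{attained}. The paper's proof of the static--control equivalence draws exactly this attainment from the lemma, even though the lemma only asserts equality of values.
\item $\rho_t$ must be read as a narrowly continuous curve of measures rather than densities. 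The endpoint $\nu$ itself need not have a density: a Dirac mass has $\Tprior(\rho_0,\nu)<\infty$.
\end{itemize}

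One step needs tightening. In the lower bound, your Step~1 does not follow as stated. Finite second moments are assumed only for $\rho_0$ and $\nu$, not for the intermediate $\rho_t$, so linear growth of $b$ alone does not give $\int\!\int\norm{v_t}^2d\rho_t\,dt<\infty$. There are two ways to close it:
\begin{itemize}
\item Derive a Gronwall bound on $\int\norm{x}^2d\rho_t$ by testing the continuity equation against truncations of $\norm{x}^2$.
\item More simply, use Ambrosio's version of the superposition principle. It only requires $\int_0^1\!\int\norm{v_t(x)}/(1+\norm{x})\,d\rho_t\,dt<\infty$, which follows at once from finite cost and the linear growth of $b$.
\end{itemize}

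Conversely, the localization in your ``main obstacle'' paragraph is unnecessary. Superposition uses no regularity of $b$. Existence of minimizing curves needs only continuity of $b_t$ in $x$, uniformly in $t$, plus linear growth; Gronwall then gives the $H^1$ bound and Arzel\`a--Ascoli compactness. The same Gronwall estimate also gives the integrability $\E_Q\int_0^1\norm{\dot\omega_t}^2dt<\infty$ that your upper-bound continuity-equation computation needs.
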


We now use~\cref{lem:dynamic_static} to prove~\cref{thm:wasserstein_tilt}.
\wassersteintheorem*
\begin{proof}
	For brevity we write
	\begin{equation}
		\label{eqn:woc_proof_objectives}
		\begin{aligned}
			J_{\mathrm{OC}}(u)     & = \E_{x_0 \sim \rho_0}\!\left[\lambda\, r(x_1^u) - \tfrac{1}{2}\int_0^1 \norm{u_t(x_t^u)}^2\, dt\right], \\
			J_{\mathrm{stat}}(\nu) & = \lambda\, \E_{x \sim \nu}[r(x)] - \Tprior(\rho_0, \nu),
		\end{aligned}
	\end{equation}
	for the OC and static objectives, respectively.
	We prove the value equality
	\begin{equation}
		\label{eqn:woc_proof_value_equality}
		\sup_{u \in \calU} J_{\mathrm{OC}}(u) = \sup_\nu J_{\mathrm{stat}}(\nu)
	\end{equation}
	via matching upper and lower bounds.
	The lower-bound construction also yields the optimizer-realization claim under the regularity hypotheses.

	We first prove the upper bound $\sup_u J_{\mathrm{OC}}(u) \le \sup_\nu J_{\mathrm{stat}}(\nu)$.
	Fix any $u \in \calU$ and let $\rho_t^u = \Law(x_t^u)$ denote the law of the controlled trajectory, with corresponding velocity $v_t^u = b_t + u_t$.
	By the standard correspondence between trajectories and densities, $(\rho^u, v^u)$ satisfies the continuity equation
	\begin{equation}
		\label{eqn:woc_proof_continuity}
		\partial_t \rho_t^u + \nabla \cdot (v_t^u\, \rho_t^u) = 0,
		\qquad \rho^u|_{t=0} = \rho_0,
		\qquad \rho^u|_{t=1} = \Law(x_1^u),
	\end{equation}
	so $(\rho^u, v^u)$ is a feasible candidate for the dynamic prior-action transport problem~\cref{eqn:dynamic_static_lemma} from $\rho_0$ to $\Law(x_1^u)$.
	Pulling the squared deviation $\tfrac{1}{2}\norm{v_t^u - b_t}^2 = \tfrac{1}{2}\norm{u_t}^2$ back to trajectories under $\rho_t^u$ gives the action identity
	\begin{equation}
		\label{eqn:woc_proof_cost_identity}
		\int_0^1 \!\!\int \tfrac{1}{2}\norm{v_t^u(x) - b_t(x)}^2\, \rho_t^u(x)\, dx\, dt
		= \tfrac{1}{2}\, \E_{x_0 \sim \rho_0}\!\left[\int_0^1 \norm{u_t(x_t^u)}^2\, dt\right],
	\end{equation}
	which is the change of variables from densities to trajectories, using $\Law(x^u_t) = \rho^u_t$.
	Applying~\cref{lem:dynamic_static} to the candidate $(\rho^u, v^u)$, the right-hand side is at least $\Tprior(\rho_0, \Law(x_1^u))$, and so
	\begin{equation}
		\label{eqn:woc_proof_upper_chain}
		\begin{aligned}
			J_{\mathrm{OC}}(u)
			\; & =\; \lambda\, \E_{x \sim \Law(x_1^u)}[r(x)] - \tfrac{1}{2}\, \E_{x_0 \sim \rho_0}\!\left[\int_0^1 \norm{u_t(x_t^u)}^2\, dt\right] \\
			\; & \le\; \lambda\, \E_{x \sim \Law(x_1^u)}[r(x)] - \Tprior(\rho_0, \Law(x_1^u))
			\;=\; J_{\mathrm{stat}}(\Law(x_1^u)).
		\end{aligned}
	\end{equation}
	The first equality is the change of variables $\E_{x_0 \sim \rho_0}[r(x_1^u(x_0))] = \E_{x \sim \Law(x_1^u)}[r(x)]$, the inequality is~\cref{lem:dynamic_static} applied to $(\rho^u, v^u)$ via~\cref{eqn:woc_proof_cost_identity}, and the final equality is the definition of $J_{\mathrm{stat}}$.
	Taking the sup over $u \in \calU$ on both sides and using that the set of attainable terminal laws $\{\Law(x_1^u) : u \in \calU\}$ is contained in the admissible set for $\sup_\nu J_{\mathrm{stat}}(\nu)$ yields $\sup_u J_{\mathrm{OC}}(u) \le \sup_\nu J_{\mathrm{stat}}(\nu)$.

	We now prove the matching lower bound $\sup_u J_{\mathrm{OC}}(u) \ge \sup_\nu J_{\mathrm{stat}}(\nu)$.
	Let $\nu^*$ achieve the supremum of $J_{\mathrm{stat}}$, whose existence follows from the assumptions of~\cref{app:standing_assumptions}.
	By~\cref{lem:dynamic_static}, there exists a pair $(\rho^*, v^*)$ achieving the dynamic prior-action cost,
	\begin{equation}
		\label{eqn:woc_proof_dynamic_optimizer}
		\int_0^1 \!\!\int \tfrac{1}{2}\norm{v_t^*(x) - b_t(x)}^2\, \rho_t^*(x)\, dx\, dt = \Tprior(\rho_0, \nu^*),
		\qquad \rho^*|_{t=0} = \rho_0, \qquad \rho^*|_{t=1} = \nu^*.
	\end{equation}
	Set $u^*_t = v^*_t - b_t$.
	Under the regularity assumed in the theorem statement, $u^* \in \calU$ and the controlled flow $\dot x^{u^*}_t = b_t(x^{u^*}_t) + u^*_t(x^{u^*}_t)$ generates the velocity field $v^*$, so that $\Law(x_t^{u^*}) = \rho_t^*$ and in particular $\Law(x_1^{u^*}) = \nu^*$.
	Pulling the cost back to trajectories as in~\cref{eqn:woc_proof_cost_identity} and substituting~\cref{eqn:woc_proof_dynamic_optimizer} yields
	\begin{equation}
		\label{eqn:woc_proof_lower_chain}
		\begin{aligned}
			J_{\mathrm{OC}}(u^*)
			\; & =\; \lambda\, \E_{x \sim \nu^*}[r(x)] - \tfrac{1}{2}\, \E_{x_0 \sim \rho_0}\!\left[\int_0^1 \norm{u^*_t(x_t^{u^*})}^2\, dt\right] \\
			\; & =\; \lambda\, \E_{x \sim \nu^*}[r(x)] - \Tprior(\rho_0, \nu^*) \;=\; J_{\mathrm{stat}}(\nu^*),
		\end{aligned}
	\end{equation}
	where the first equality uses $\Law(x_1^{u^*}) = \nu^*$, the second substitutes~\cref{eqn:woc_proof_dynamic_optimizer} via~\cref{eqn:woc_proof_cost_identity}, and the third is the definition of $J_{\mathrm{stat}}$.
	Since $u^*$ is feasible, $\sup_u J_{\mathrm{OC}}(u) \ge J_{\mathrm{OC}}(u^*) = J_{\mathrm{stat}}(\nu^*) = \sup_\nu J_{\mathrm{stat}}(\nu)$.

	Combining the upper and lower bounds gives the value equality~\cref{eqn:woc_proof_value_equality}.
	The lower-bound construction additionally yields the optimizer-realization claim: the specific feasible control $u^* = v^* - b$ achieves $J_{\mathrm{OC}}(u^*) = J_{\mathrm{stat}}(\nu^*)$ by~\cref{eqn:woc_proof_lower_chain}, this in turn equals $\sup_\nu J_{\mathrm{stat}}(\nu)$ by the choice of $\nu^*$, and the value equality~\cref{eqn:woc_proof_value_equality} identifies the right-hand side with $\sup_u J_{\mathrm{OC}}(u)$.
	Hence $J_{\mathrm{OC}}(u^*) = \sup_u J_{\mathrm{OC}}(u)$ with $u^* \in \calU$, so $u^*$ attains the OC supremum, and its terminal law $\Law(x_1^{u^*}) = \nu^*$ solves~\cref{eqn:wtf-reward-objective}.
\end{proof}

\subsection{\texorpdfstring{Proof of \cref{prop:performance_diff}}{Proof of the policy-improvement proposition}}
\label{app:performance_diff}
\performancediff*
\begin{proof}
	We first establish the performance-difference identity~\cref{eqn:performance_diff} and then specialize to $w = \bar g$.
	Fix a candidate control $w$ and an initial pair $(s, x)$, and let $x^w_t = X^w_{s, t}(x)$ denote the controlled trajectory of $\dot x^w_t = b_t(x^w_t) + w_t(x^w_t)$ for $t \in [s, 1]$.

	Differentiating $V^{\bar u}_t(x^w_t)$ in $t$ along the trajectory, applying~\cref{lem:policy_evaluation} at $(t, x^w_t)$, and using $\bar g_t(x^w_t) = \nabla_x V^{\bar u}_t(x^w_t)$ yields
	\begin{equation}
		\label{eqn:perf_proof_chainrule}
		\frac{d}{dt} V^{\bar u}_t(x^w_t) = \tfrac{1}{2} \norm{\bar u_t(x^w_t)}^2 + \big(w_t(x^w_t) - \bar u_t(x^w_t)\big) \cdot \bar g_t(x^w_t).
	\end{equation}
	Completing the square in $w_t(x^w_t)$,
	\begin{equation}
		\label{eqn:perf_proof_completed_square}
		\frac{d}{dt} V^{\bar u}_t(x^w_t) = \tfrac{1}{2} \norm{\bar u_t(x^w_t) - \bar g_t(x^w_t)}^2 - \tfrac{1}{2} \norm{w_t(x^w_t) - \bar g_t(x^w_t)}^2 + \tfrac{1}{2} \norm{w_t(x^w_t)}^2.
	\end{equation}
	Integrating from $s$ to $1$ along the trajectory and using the terminal condition $V^{\bar u}_1(x) = \lambda\, r(x)$,
	\begin{equation}
		\label{eqn:perf_proof_integrated}
		\lambda\, r(x^w_1) - V^{\bar u}_s(x) = \int_s^1 \!\left[\, \tfrac{1}{2} \norm{\bar u_t(x^w_t) - \bar g_t(x^w_t)}^2 - \tfrac{1}{2} \norm{w_t(x^w_t) - \bar g_t(x^w_t)}^2 + \tfrac{1}{2} \norm{w_t(x^w_t)}^2\, \right] dt.
	\end{equation}
	From the definition of the value function, $V^w_s(x) = \lambda\, r(x^w_1) - \int_s^1 \tfrac{1}{2}\norm{w_t(x^w_t)}^2\, dt$, so subtracting the integral of $\tfrac{1}{2}\norm{w_t(x^w_t)}^2$ from both sides of~\cref{eqn:perf_proof_integrated} yields the performance-difference identity~\cref{eqn:performance_diff} for any candidate control $w$.

	Specializing to $w = \bar g$ makes the second norm in~\cref{eqn:performance_diff} vanish, giving
	\begin{equation}
		\label{eqn:perf_proof_policy_improvement}
		V^{\bar g}_s(x) - V^{\bar u}_s(x) = \frac{1}{2} \int_s^1 \norm{\bar u_t(x^{\bar g}_t) - \bar g_t(x^{\bar g}_t)}^2\, dt \;\geq\; 0,
	\end{equation}
	with equality if and only if $\bar u_t(x^{\bar g}_t) = \bar g_t(x^{\bar g}_t)$ for almost every $t \in [s, 1]$.
	At equality, $\bar u = \bar g = \nabla_x V^{\bar u}$, and substituting this into the policy-evaluation identity~\cref{eqn:doc_policy_eval} of~\cref{lem:policy_evaluation} reduces it to
	\begin{equation}
		\label{eqn:perf_proof_hjb_match}
		\partial_t V^{\bar u}_t(x) + b_t(x) \cdot \nabla_x V^{\bar u}_t(x) + \tfrac{1}{2} \norm{\nabla_x V^{\bar u}_t(x)}^2 = 0,
		\qquad V^{\bar u}_1(x) = \lambda\, r(x),
	\end{equation}
	which is exactly the reduced HJB equation~\cref{eqn:doc_hjb_reduced} whose unique solution is the optimal value function $V^*$.
	Hence $V^{\bar u} = V^*$ and $\bar u = \nabla_x V^{\bar u} = \nabla_x V^* = u^*$ by the optimal control formula~\cref{eqn:doc_optimal_control}.
\end{proof}

\subsection{\texorpdfstring{Proof of \cref{prop:wtf_critical_point}}{Proof of WTF critical-point optimality}}
\label{app:wtf_critical_point}
\wtfcriticalpoint*
\begin{proof}
	The proof proceeds in three steps.
	Step~1 analyzes the off-diagonal distillation critical condition; with the stop-gradient target frozen the loss is a convex quadratic functional whose minimum is zero and attained, so every critical point forces $X^{\hat w}_{s, t}$ to be the flow map of the velocity $\hat w_{t, t}$ on the support of the sampling distribution over the upper triangle of the $(s, t)$ plane.
	Step~2 analyzes the diagonal regression critical condition; using Step~1's flow map property it establishes unbiasedness of the Monte Carlo value estimator, and forces $\hat w_{t, t} = b_t + \nabla_x V^{\bar w}_t$ on the on-policy support.
	Step~3 stitches these together via the stop-gradient consistency $\bar w = \sg{\hat w}$ and the policy-iteration fixed-point characterization of~\cref{prop:performance_diff}.

	For the proof, we decompose $\calL_{\WTF}(\hat w) = \calL_{\mathrm{diag}}(\hat w; \bar w) + \beta\, \calL_{\dist}(\hat w; \bar w)$ with $\bar{w} = \sg{\hat{w}}$.
	The variational analysis below uses this factorization throughout, treating every stop-gradient quantity as held fixed under variations of $\hat w$, including those depending on $\bar w$ and the distillation target $\sg{\nabla X^{\hat w}_{t, \tau}\, \hat w_{t, t}}$, which is a function of $\hat w$ itself.
	We work with the Eulerian self-distillation loss~\cref{eqn:dist_eulerian}, but the Lagrangian and progressive variants follow by analogous arguments using the corresponding flow map characterization in~\cref{eqn:app_three_characterizations}.
	The full-support hypothesis on $p_{s, t}$ is satisfied by the uniform-on-upper-triangle sampler used in our algorithm, and $\hat w$ varies over the class $\calW$ of~\cref{assump:flowmap_class}.
	Since $\rho_0$ has full support and $X^{\bar w}_{0, t}$ is a homeomorphism, the law of $\bar x_t$ has full support on $\R^d$ for every $t$; combined with the full support of $p_{s, t}$ this makes $\supp(\rho^{\bar w}_{t, \tau, x}) = \{0 \le t \le \tau \le 1\} \times \R^d$.

	\paragraph{Step 1: Off-diagonal distillation.}
	The Eulerian self-distillation loss~\cref{eqn:dist_eulerian} reads
	\begin{equation}
		\label{eqn:wtf_proof_dist}
		\calL_{\dist}(\hat w; \bar w) = \E_{x_0, t, \tau}\!\left[\,\norm{\partial_t X^{\hat w}_{t, \tau}(\bar x_t) + \sg{\,\nabla X^{\hat w}_{t, \tau}(\bar x_t)\, \hat w_{t, t}(\bar x_t)\,}}^2\,\right],
	\end{equation}
	with $(t, \tau)$ sampled from the off-diagonal sampler $p_{s, t}$ on the upper triangle and $\bar x_t = X^{\bar w}_{0, t}(x_0)$.
	Here and below $\partial_t$ differentiates the \emph{first} time argument of the flow map, as in the Eulerian characterization of~\cref{eqn:app_three_characterizations}.
	Write $\rho^{\bar w}_{t, \tau, x}$ for the joint density of $(t, \tau, \bar x_t)$ under the upper-triangle sampler and the on-policy trajectory, and
	\begin{equation}
		\label{eqn:wtf_proof_dist_target}
		c(t, \tau, x) \coloneqq \nabla X^{\hat w}_{t, \tau}(x)\, \hat w_{t, t}(x)
	\end{equation}
	for the stop-gradient target, which the semi-gradient convention holds fixed under variations of $\hat w$.
	The off-diagonal velocities $\hat w_{t, \tau}$, $t < \tau$, therefore enter~\cref{eqn:wtf_proof_dist} only through the un-stopped factor $\partial_t X^{\hat w}_{t, \tau}(\bar x_t)$.
	Under the flow map parameterization $X^{\hat w}_{t, \tau}(x) = x + (\tau - t)\, \hat w_{t, \tau}(x)$,
	\begin{equation}
		\label{eqn:wtf_proof_dt_affine}
		\partial_t X^{\hat w}_{t, \tau}(x) = -\hat w_{t, \tau}(x) + (\tau - t)\, \partial_t \hat w_{t, \tau}(x),
	\end{equation}
	which is \emph{linear} in $\hat w$.
	With $c$ frozen, the residual in~\cref{eqn:wtf_proof_dist} is thus affine in the off-diagonal velocities and
	\begin{equation}
		\label{eqn:wtf_proof_dist_convex}
		\hat w \longmapsto \calL_{\dist}(\hat w; \bar w) = \E_{x_0, t, \tau}\!\left[\, \norm{\partial_t X^{\hat w}_{t, \tau}(\bar x_t) + c(t, \tau, \bar x_t)}^2\, \right]
	\end{equation}
	is a convex quadratic functional of them.
	Every stationary point of a convex quadratic functional is a global minimizer, so it is enough to identify the global minimum.

	That minimum is zero, and it is attained within the parameterized class.
	The field
	\begin{equation}
		\label{eqn:wtf_proof_dist_realizer}
		\hat w^\star_{t, \tau}(x) = \frac{1}{\tau - t}\int_t^\tau c(\sigma, \tau, x)\, d\sigma, \qquad t < \tau,
	\end{equation}
	leaves the diagonal untouched -- since $X^{\hat w}_{\tau, \tau} = \mathrm{id}$ gives $\nabla X^{\hat w}_{\tau, \tau} = I$ and hence $c(\tau, \tau, x) = \hat w_{\tau, \tau}(x)$, so~\cref{eqn:wtf_proof_dist_realizer} extends continuously to $\hat w^\star_{\tau, \tau} = \hat w_{\tau, \tau}$ -- and lies in the same class of two-time velocity fields as $\hat w$, with $\hat w^\star$ inheriting the regularity of $c$.
	It satisfies $X^{\hat w^\star}_{t, \tau}(x) = x + \int_t^\tau c(\sigma, \tau, x)\, d\sigma$ and therefore $\partial_t X^{\hat w^\star}_{t, \tau}(x) = -c(t, \tau, x)$, so its residual vanishes identically and $\calL_{\dist}(\hat w^\star; \bar w) = 0$.

	This is where the argument differs from a variational-derivative calculation: rather than dividing out the Jacobian factor $\partial(\partial_t X^{\hat w}_{t, \tau})/\partial \hat w_{t, \tau}$ and arguing that it is non-degenerate, convexity of the frozen objective together with realizability of the frozen target gives the conclusion with nothing to invert.
	Since a critical point attains the global minimum $0$ of~\cref{eqn:wtf_proof_dist_convex}, the residual vanishes $\rho^{\bar w}_{t, \tau, x}$-almost everywhere:
	\begin{equation}
		\label{eqn:wtf_proof_dist_critical}
		\partial_t X^{\hat w}_{t, \tau}(x) = -\nabla X^{\hat w}_{t, \tau}(x)\, \hat w_{t, t}(x), \qquad (t, \tau, x) \in \supp\big(\rho^{\bar w}_{t, \tau, x}\big).
	\end{equation}
	Equation~\cref{eqn:wtf_proof_dist_critical} is the Eulerian transport equation~\cref{eqn:app_three_characterizations} characterizing $X^{\hat w}_{s, t}$ as the flow map generated by the velocity field $\hat w_{t, t}$.
	Because it holds for every $x \in \R^d$, it holds in particular along the integral curves of $\hat w_{t, t}$.
	Combined with the diagonal initial condition $X^{\hat w}_{t, t}(x) = x$ -- which holds for free under the flow map parameterization $X^{\hat w}_{s, t}(x) = x + (t - s)\, \hat w_{s, t}(x)$ -- standard uniqueness results for the method of characteristics imply
	\begin{equation}
		\label{eqn:wtf_proof_dist_fixed}
		\partial_t X^{\hat w}_{s, t}(x) = \hat w_{t, t}\big(X^{\hat w}_{s, t}(x)\big),
		\qquad
		X^{\hat w}_{s, t} = X^{\hat w}_{\sigma, t} \circ X^{\hat w}_{s, \sigma},
		\qquad 0 \le s \le \sigma \le t \le 1,
	\end{equation}
	that is, $X^{\hat w}$ is the flow map generated by its own diagonal velocity.
	In particular, $X^{\hat w}$ -- and hence $X^{\bar w}$ at the stop-gradient consistency $\bar w = \sg{\hat w}$ -- satisfies the semigroup property.
	We reserve the symbol $u$ for the residual control $\hat w_{t, t} - b_t$ introduced in Step~3, so that $X^u$ keeps its usual meaning as the flow of $b_t + u_t$.

	\paragraph{Step 2: Diagonal regression.}
	The diagonal regression term reads
	\begin{equation}
		\label{eqn:wtf_proof_diag}
		\calL_{\mathrm{diag}}(\hat w; \bar w) = \E_{x_0, t, \tau}\!\left[\,\norm{\hat w_{t, t}(\bar x_t) - b_t(\bar x_t) - \widehat g(t, \bar x_t; \tau)}^2\,\right],
	\end{equation}
	with $t \sim \Unif[0, 1]$, $\tau \sim \Unif[t, 1]$, $\bar x_t = X^{\bar w}_{0, t}(x_0)$, and $\widehat g(t, x; \tau) = \nabla_x \widehat V(t, x; \tau)$ the value-gradient estimator obtained by differentiating the Monte Carlo estimator~\cref{eqn:mc_value} at the auxiliary sample $\tau$.

	The unbiasedness of $\widehat V$ as an estimator of $V^{\bar w}_t(x)$ relies on the semigroup property of $X^{\bar w}$ established in Step~1.
	The construction of~\cref{eqn:mc_value} writes $x_1 = X^{\bar w}_{\tau, 1}(X^{\bar w}_{t, \tau}(x))$, and the semigroup property collapses this to $X^{\bar w}_{t, 1}(x)$ independently of $\tau$.
	Combined with the substitution $(1 - t)\, \E_\tau[\norm{\cdot}^2] = \int_t^1 \norm{\cdot}^2\, d\tau$ for the control-cost term, this yields
	\begin{equation}
		\label{eqn:wtf_proof_unbiased}
		\begin{aligned}
			\E_{\tau \sim \Unif[t, 1]}\!\left[\widehat V(t, x; \tau)\right]
			 & = \lambda\, r\!\left(X^{\bar w}_{t, 1}(x)\right)
			- \frac{1}{2} \int_t^1 \norm{\bar w_{\tau, \tau}(X^{\bar w}_{t, \tau}(x)) - b_\tau(X^{\bar w}_{t, \tau}(x))}^2\, d\tau \\
			 & = V^{\bar w}_t(x),
		\end{aligned}
	\end{equation}
	where the second equality is the definition~\cref{eqn:value_fn} of $V^{\bar w}_t$ with control $u_\tau = \bar w_{\tau, \tau} - b_\tau$.
	Differentiating in $x$ and interchanging differentiation with expectation under the standard regularity conditions yields
	\begin{equation}
		\label{eqn:wtf_proof_grad_unbiased}
		\E_\tau\!\left[\widehat g(t, x; \tau)\right] = \nabla_x V^{\bar w}_t(x).
	\end{equation}

	Now we compute the variational derivative of $\calL_{\mathrm{diag}}$ with respect to $\hat w_{t, t}$ at a test point $x$.
	Since the regression target $b_t(\bar x_t) + \widehat g(t, \bar x_t; \tau)$ is a function of $\bar w$ only, it is held fixed under variations of $\hat w_{t, t}$.
	With its target frozen, the distillation term depends on $\hat w$ only through its off-diagonal components, so it contributes nothing to a diagonal variation and the critical-point condition for $\calL_{\WTF}$ reduces to that for $\calL_{\mathrm{diag}}$.
	The variational derivative is
	\begin{equation}
		\label{eqn:wtf_proof_diag_var}
		\frac{\delta \calL_{\mathrm{diag}}}{\delta \hat w_{t, t}(x)}
		= 2\, \E_\tau\!\left[\hat w_{t, t}(x) - b_t(x) - \widehat g(t, x; \tau)\right] \rho^{\bar w}(t, x),
	\end{equation}
	where $\rho^{\bar w}(t, x)$ denotes the joint density of $(t, \bar x_t)$ under $t \sim \Unif[0, 1]$, $x_0 \sim \rho_0$.
	By~\cref{eqn:wtf_proof_grad_unbiased}, the bracketed expectation simplifies to $\hat w_{t, t}(x) - b_t(x) - \nabla_x V^{\bar w}_t(x)$.

	Setting the variational derivative to zero on the support of $\rho^{\bar w}$ yields
	\begin{equation}
		\label{eqn:wtf_proof_diag_critical}
		\hat w_{t, t}(x) = b_t(x) + \nabla_x V^{\bar w}_t(x), \qquad (t, x) \in \supp(\rho^{\bar w}).
	\end{equation}
	Because $\bar w = \sg{\hat w}$, the right-hand side is the value gradient under $\hat w$ itself, and the on-policy support coincides with that of $\rho^{\hat w}$:
	\begin{equation}
		\label{eqn:wtf_proof_diag_fixed}
		\hat w_{t, t}(x) = b_t(x) + \nabla_x V^{\hat w}_t(x), \qquad (t, x) \in \supp(\rho^{\hat w}).
	\end{equation}

	\paragraph{Step 3: Joint consistency closes the loop.}
	From \cref{eqn:wtf_proof_diag_fixed},
	\begin{equation}
		\label{eqn:wtf_proof_diag_summary}
		\hat w_{t, t}(x) = b_t(x) + \nabla_x V^{\hat w}_t(x) \qquad \text{on } \supp(\rho^{\hat w}).
	\end{equation}
	Here $\rho^{\hat w}$ denotes the joint law of $(t, X^{\hat w}_{0, t}(x_0))$ under $t \sim \Unif[0, 1]$ and $x_0 \sim \rho_0$, which is the law $\rho^{\bar w}$ of~\cref{eqn:wtf_proof_diag_var} evaluated at the stop-gradient consistency $\bar w = \sg{\hat w}$.
	From \cref{eqn:wtf_proof_dist_fixed}, $X^{\hat w}_{s, t}$ is the flow map generated by the velocity $\hat w_{t, t}$.
	Define the residual control $u_t \coloneqq \hat w_{t, t} - b_t$, so that $\hat w_{t, t} = b_t + u_t$ is the controlled drift and $X^{\hat w}_{s, t} = X^u_{s, t}$; accordingly we write $\rho^u = \rho^{\hat w}$ for the on-policy law.

	Substituting $\hat w_{t, t} = b_t + u_t$ into~\cref{eqn:wtf_proof_diag_summary} gives the on-policy fixed-point condition
	\begin{equation}
		\label{eqn:wtf_proof_fixed_point}
		u_t(x) = \nabla_x V^{u}_t(x), \qquad (t, x) \in \supp(\rho^{u}),
	\end{equation}
	where $V^u_t \equiv V^{\hat w}_t$ since the value function depends only on the controlled drift $b_t + u_t$.

	Since $\rho_0$ has full support on $\R^d$ and the controlled flow $X^u_{0, t}$ is a diffeomorphism under standard regularity conditions, the on-policy density $\rho^u$ has full support.
  Hence~\cref{eqn:wtf_proof_fixed_point} gives $u_t = \nabla_x V^u_t$ everywhere, and the equality clause of~\cref{prop:performance_diff} (with $\bar u = u$ and $\bar g = \nabla_x V^u_t$) yields $u = u^*$, the optimal control of~\cref{eqn:oc_problem}.

	Combining the above arguments, we conclude that $\hat w_{t, t} = b_t + u^*_t$ and that $X^{\hat w}_{s, t}$ is the flow map of $b_t + u^*_t$.
\end{proof}

\section{Algorithmic aspects}
\label{sec:algorithmic}
We next describe the implementation of the WTF objective~\cref{eqn:val_objective}, summarized in~\cref{alg:wtf}.

\paragraph{Euclidean reward gradient.}
The exact reward contribution to the value gradient is $\nabla X_{t, 1}(x_t)^\top \nabla r(x_1)$.
For our image experiments, we instead use $\nabla r(x_1)$ while leaving the endpoint $x_1 = X_{t, 1}(x_t)$ unchanged.
This removes the flow map Jacobian from the reward gradient and computes the update in the direction that increases reward at the generated endpoint.
Following the terminology of \citet{huang2026guideflowfewstepalignment}, we refer to this as the \emph{Euclidean} reward gradient.
For the residual parameterization $X_{t, 1}(x) = x + (1 - t)\, w_{t, 1}(x)$, we implement it as
\begin{equation*}
	x_1 = x_t + \sg{X_{t, 1}(x_t) - x_t} .
\end{equation*}
This leaves the endpoint unchanged while omitting the flow map Jacobian from the backward pass.
Related updates have been used for diffusion reward fine-tuning, including DRTune~\citep{wu2024deepreward}.
\Cref{alg:wtf} shows both gradient choices, and \cref{ssec:exp_detach} compares them empirically.

\paragraph{Parameterization.}
The derivation writes the control as a residual $u$ on the base drift $b$.
We fine-tune a single flow map
\begin{equation}
	\label{eqn:w-flow-map}
	X^{\hat w}_{s, t}(x) = x + (t - s)\, \hat w_{s, t}(x),
\end{equation}
The network is initialized from the pre-trained mean velocity $v_{s, t}$.
The residual is recovered as $u_{s, t} = \hat{w}_{s, t} - v_{s, t}$ and is used only in the objective.
\Cref{app:diffusion_convention} gives the corresponding diffusion-time convention.

\paragraph{Off-diagonal self-distillation.}
The off-diagonal regularizer $\calL_{\dist}$ in~\cref{eqn:val_objective} can be any of the standard self-distillation losses for flow maps; we use the Eulerian (mean-flow) variant~\citep{boffi_flow_2024,boffi2025self,sabour2025align,geng2025meanflowsonestepgenerative}
\begin{equation}
	\label{eqn:dist_eulerian}
	\calL_{\dist}(\hat w) = \E\!\left[\norm{\partial_t X^{\hat w}_{t, \tau}(x_t) + \sg{\nabla X^{\hat w}_{t, \tau}(x_t)\, \hat w_{t, t}(x_t)}}^2\right],
\end{equation}
where $x_t = \sg{X^{\bar w}_{0, t}(x_0)}$ is sampled on-policy along the current flow and $\sg{\cdot}$ denotes a stop-gradient.
We use the Eulerian self-distillation objective, matching the objective used to pre-train the base models.

\paragraph{Reward-gradient scaling.}
Because the base velocity and reward gradient can have very different numerical scales, we normalize the reward gradient by their norm ratio and use $\lambda_{\mathrm{eff}} = \lambda \kappa$ in~\cref{eqn:val_objective}.
\Cref{app:reward_scale} gives the definition, and \cref{tab:impl_t2i} reports the raw values of $\lambda$.

\subsection{Diffusion-time convention}
\label{app:diffusion_convention}
Many pre-trained checkpoints, including score-based, variance-preserving, and EDM-style models~\citep{karras_elucidating_2022}, use time $t = 1$ for the Gaussian base distribution and $t = 0$ for the data distribution.
The reward is evaluated at $t = 0$, and the signs and integration limits differ from the convention in the main text.
We state the resulting objective and value-gradient target below.

Let $b_t$ denote a pre-trained velocity for which integrating $\dot{x}_t = b_t(x_t)$ from $t=1$ to $t=0$ transports a Gaussian sample $x_1 \sim \Normal(0, \Id)$ into a data sample $x_0 \sim \rho^*$, and let $X_{s,t}^u(x)$ denote the state at time $t$ under the controlled ODE $\dot{x}_\tau^u = b_\tau(x_\tau^u) + u_\tau(x_\tau^u)$ started from $x$ at time $s$.
In this convention, one typically has $t \leq s$.
The OC problem~\cref{eqn:oc_problem} becomes
\begin{equation}
	\label{eqn:oc_problem_diff}
	\begin{aligned}
		 & \sup_u \E_{x_1 \sim \Normal(0, \Id)}\!\left[\, \lambda\, r(x_0^u) - \frac{1}{2} \int_0^1 \norm{u_t(x_t^u)}^2\, dt\, \right], \\
		 & \text{subject to}\quad \dot{x}_t^u = b_t(x_t^u) + u_t(x_t^u), \quad x_1^u = x_1.
	\end{aligned}
\end{equation}
Only the terminal-time indexing changes relative to~\cref{eqn:oc_problem}, since the control-cost integrand is orientation-free.
We note that the integral of the control runs opposite to the direction of integration of the flow, which ensures the control cost remains positive.

The value function under control $u$ is
\begin{equation}
	\label{eqn:value_fn_diff}
	V^u_t(x) := \lambda\, r\!\left(X_{t,0}^u(x)\right) - \frac{1}{2} \int_0^t \norm{u_\tau(X_{t,\tau}^u(x))}_2^2\,d\tau,
\end{equation}
where again the integration domain is the unsigned physical interval $[0, t]$ so that the control cost is accumulated as a positive quantity.
Applying dynamic programming with infinitesimal step $t \mapsto t-\epsilon$ toward the data gives the Hamilton--Jacobi--Bellman equation
\begin{equation}
	\label{eqn:hjb_diff}
	\p_t V^*_t(x) + b_t(x) \cdot \nabla_x V^*_t(x) - \frac{1}{2}\norm{\nabla_x V^*_t(x)}_2^2 = 0, \qquad V^*_0(x) = \lambda\, r(x),
\end{equation}
with the optimal control
\begin{equation}
	\label{eqn:optimal_control_diff}
	u_t^*(x) = -\nabla_x V^*_t(x).
\end{equation}
The minus sign in~\cref{eqn:optimal_control_diff}, in contrast to $u^*_t = +\nabla_x V^*_t$ in the data-terminal convention, is the consequence of time reversal.
Because integration of the generative process flows backwards in time, this negative sign implements gradient \textit{ascent} at inference, matching the forward-time result.

The single-sample Monte Carlo estimator of~\cref{eqn:value_fn_diff} also changes accordingly.
Fix a frozen reference $\bar{u}$, sample $\tau \sim \Unif[0, t]$, and compute $x_\tau = X_{t,\tau}^{\bar{u}}(x)$ and $x_0 = X_{\tau, 0}^{\bar{u}}(x_\tau)$ to form
\begin{equation}
	\label{eqn:mc_value_diff}
	\widehat{V}_t(x) = \lambda\, r(x_0) - \frac{t}{2}\norm{\bar{u}_\tau(x_\tau)}_2^2,
\end{equation}
where the factor $t$ is the length of the remaining interval $[0, t]$, replacing the factor $1-t$ from~\cref{eqn:mc_value}.
Differentiation yields the gradient target $\widehat{g}_t(x) = \nabla_x \widehat{V}_t(x)$, and the diagonal value gradient regression of~\cref{eqn:val_objective} fits $\hat w_{t, t}(\bar{x}_t) \to b_t(\bar{x}_t) - \widehat{g}_t(\bar{x}_t)$ along the reference trajectory $\bar{x}_t = X_{1, t}^{\bar{u}}(x_1)$, with the sign flip on $\widehat{g}_t$ matching~\cref{eqn:optimal_control_diff}.

\subsection{Reward-gradient scaling}
\label{app:reward_scale}
The reward gradient and the base flow velocity gradient typically live on different numerical scales.
The base flow $b_t$ has been trained to push $\rho_0$ all the way to $\rho_1$ in unit time, so the magnitude of $b_t$ is set by the dataset and the training schedule.
Hand-coded or learned rewards typically have gradient magnitudes set by an unrelated convention.
Without rescaling, $\lambda$ in the value gradient~\cref{eqn:mc_value} must be tuned by orders of magnitude per reward to balance $\lambda \nabla r$ against the implicit scale of the base velocity field.
To normalize this in standardized units, we rescale the reward gradient by the ratio of the base flow velocity norm to the reward gradient norm:
\begin{equation}
	\label{eqn:tcf}
	\kappa = \E\left[\frac{\norm{b_t(\bar{x}_t)}_2}{\norm{\nabla_x r(\bar{x}_1)}_2}\right], \qquad \lambda_{\mathrm{eff}} = \lambda \cdot \kappa,
\end{equation}
and use $\lambda_{\mathrm{eff}}$ in place of $\lambda$ inside~\cref{eqn:mc_value}.
We track the two norms with exponential moving averages and do not freeze them after warmup.
This requires one reward backward pass and one evaluation of $b_t$ per iteration.
\Cref{ssec:exp_lambda} sweeps $\lambda$ on text-to-image.

\section{Implementation details}
\label{app:hyper}

\subsection{Training setup}
\label{app:hyper_t2i}
\paragraph{ImageNet.}
We fine-tune DMF XL/2~\citep{lee2025decoupled} end to end using HPSv2 as the reward, starting from the pre-trained \texttt{dmf\_xl\_2\_256} checkpoint at $256 \times 256$ resolution over the full $1000$-class conditioning.
We update the full network without an adapter.
A single parameterization is used for $\hat{w}_{t, t}$ and $\hat{w}_{s, t}$.
Training uses \texttt{bf16} mixed precision for $12{,}500$ optimizer steps, with the EMA weights as the frozen reference $\bar w$ in~\cref{eqn:val_objective}.
The text-to-image experiments instead use a stop-gradient copy of the current weights.
\paragraph{Text-to-image.}
We fine-tune the pre-trained TiM-T2I model~\citep{wang2025transition} at $512 \times 512$ resolution using the \texttt{tim\_xl\_p1\_t2i} configuration.
Images are represented on a $16 \times 16$ latent grid with $32$ channels from a frozen deep compression autoencoder~\citep{chen2024deep} (\texttt{mit-han-lab/dc-ae-f32c32-sana-1.1-diffusers}), which downsamples by $32$ spatially to $32$ latent channels.
Captions are encoded by a frozen Gemma~3 1B instruction-tuned text encoder~\citep{gemma2025three} (\texttt{google/gemma-3-1b-it}) with a maximum sequence length of $256$.
We train LoRA adapters on all attention $(Q, K, V, \text{out})$ and MLP $(\text{fc}_1, \text{fc}_2)$ projections in every transformer block.
Prompts are drawn from the photo and painting categories of HPDv2.
\paragraph{Fine-tuning.}
A single LoRA adapter of rank $16$ with scaling $\alpha = 16$ is used for the diagonal update and all off-diagonal flow map evaluations.
We train for $1{,}460$ optimizer steps and use the Euclidean reward gradient described in~\cref{sec:algorithmic}.
\paragraph{Reward model.}
Reward gradients use HPSv2 v2.1 with the fine-tuned \texttt{xswu/HPSv2} weights on an OpenCLIP~\citep{ilharco2021openclip} CLIP ViT-H-14 backbone.
Images in $[0, 1]$ are processed at $224$ pixels with the HPSv2 mask-aware normalization and resize pipeline.
\paragraph{Optimization.}
Both benchmarks use AdamW. ImageNet uses $(\beta_1, \beta_2) = (0.9, 0.95)$ with weight decay $0$, so its update coincides with Adam; text-to-image uses $(\beta_1, \beta_2) = (0.9, 0.999)$ with weight decay $10^{-2}$.
The learning rate is constant.
\Cref{tab:impl_t2i} quotes the global batch, together with the per-GPU count and accumulation factor that produce it across the $8$ GPUs.
The EMA decay $\mu$ averages the trainable parameters of the fine-tuned model, updated once per optimizer step.
For text-to-image, we clip each per-sample reward gradient to the $0.8$ quantile of the batch gradient norms.
For both benchmarks, we apply a global parameter-gradient $L_2$ clip of $1$.
The reward scale $\lambda$ in~\cref{tab:impl_t2i} is the raw coefficient before the normalization in~\cref{app:reward_scale}.
Every sample contributes to both the diagonal value-gradient term and the regularizer; $p_{\mathrm{diag}}$ mixes the times at which the regularizer is applied rather than routing samples between losses.
The text-to-image runs use a different regularizer and carry no such mixing probability.
\Cref{tab:impl_t2i} gives the configuration for both benchmarks.

\subsection{Sampling and evaluation}
\label{app:hyper_eval}
\paragraph{ImageNet.}
We evaluate at NFE $\in \{1, 250\}$ with CFG scale $1.0$.
Using the first $32$ ImageNet classes, we generate $16$ samples per class, conditioned on the class label.
We use a fixed evaluation seed and share initial latents across methods.
HPSv2, PickScore, and ImageReward are averaged over the same $512$ images.
Diversity is the class-averaged mean pairwise squared distance in DreamSim and CLIP embedding space.
WTF results are means over three matched training seeds.
We use the first $32$ ImageNet classes, fixed across all methods.
\paragraph{Text-to-image.}
We evaluate at NFE $\in \{1, 2, 4, 8, 50\}$ with CFG scale $2.5$ and share starting latents across methods.
The evaluation pool contains $100$ fixed prompts from the photo and painting categories of HPDv2.
Reward metrics use $1{,}000$ scored generations drawn from this pool under a fixed evaluation seed, so every method is scored on the identical prompt sequence.
Diversity is computed from $16$ samples for each of a fixed subset of $32$ prompts using the same DreamSim and CLIP pairwise distances.
Each sampler transition uses one network evaluation.
A joint conditional and unconditional forward pass counts as one NFE.
ImageNet uses CFG scale $1.0$ without an unconditional branch.

\begin{table}[t]
	\centering
	\small
	\begin{tabular}{lcc}
		\toprule
		Setting                                  & ImageNet                            & Text-to-image                        \\
		\midrule
		Base flow map                            & DMF XL/2, $256^2$                   & TiM-T2I XL, patch $1$, $512^2$           \\
		Pre-trained checkpoint                   & \texttt{dmf\_xl\_2\_256}            & \texttt{TiM-T2I}                    \\
		Fine-tuned parameters                    & full network                        & one LoRA adapter, rank $16$         \\
		Optimizer                                & AdamW $(0.9, 0.95)$                 & AdamW $(0.9, 0.999)$                 \\
		Weight decay                             & $0$                                 & $10^{-2}$                            \\
		Learning rate                            & $1 \times 10^{-4}$                  & $1 \times 10^{-3}$                   \\
		Global batch size                        & $64$                                & $80$                                 \\
		\quad per GPU $\times$ accumulation      & $8 \times 1$                        & $2 \times 5$                         \\
		EMA decay $\mu$                          & $0.999$                             & $0.9999$                             \\
		Distillation weight $\beta$              & ---                                 & $0.5$                                \\
		Reward scale $\lambda$ (raw)             & $15$                                & $30$                                 \\
		Reward-gradient smoothing $(P, \sigma)$  & ---                                 & $(1,\ 0.20)$                         \\
		Reward-gradient quantile clip            & ---                                 & $0.8$                                \\
		Parameter-gradient L2 clip               & $1$                                 & $1$                                  \\
		Optimizer steps                          & $12{,}500$                          & $1{,}460$                                \\
		GPU configuration                        & $8 \times \mathrm{H100}$            & $8 \times \mathrm{H100}$             \\
		\bottomrule
	\end{tabular}
	\vspace{0.5em}
	\caption{\label{tab:impl_t2i}\label{tab:impl_imagenet}\textbf{Implementation details for both benchmarks.}
		Reward scales are the raw coefficient $\lambda$; The applied weight is $\lambda_{\mathrm{eff}}$ from~\cref{app:reward_scale}.}
\end{table}

\subsection{Baseline implementations}
\label{app:hyper_baselines}
Both baselines use the sampling and evaluation protocol of~\cref{app:hyper_eval}. Training stops
when the evaluation reward does not improve for six consecutive checkpoints.

Flow-GRPO uses group size $24$ with one prompt per rank, $192$ samples per iteration, $10$ SDE
steps, LoRA rank $32$ with $\alpha = 32$, learning rate $10^{-4}$, and weight decay $10^{-2}$.
Advantages use a per-prompt mean and a globally gathered reward standard deviation, clipped to
$[-5, 5]$. The KL penalty is the analytic same-variance Gaussian transition KL against the
frozen backbone, with weight $0.01$. Each rollout batch is followed by one optimizer update over all $10$ timesteps, and applies no EMA to the adapter weights.

Adjoint Matching~\citep{domingoenrich2025adjointmatchingfinetuningflow} uses reward scale
$1.2 \times 10^{5}$, $N = 40$ rollout steps, $K = 20$ timesteps per update, LoRA rank $8$,
learning rate $2 \times 10^{-5}$, and effective batch $28$. Half of the $K$ timesteps are drawn
uniformly from the first three quarters of the trajectory; the remaining $10$ are the final $10$
timesteps. Adapter parameters are float32. Loss terms whose norm exceeds an exponential moving
average of the globally gathered $0.9$ quantile are masked.

\subsection{Training cost and compute comparison}
\label{app:compute}
We compare fine-tuning compute on the same $8 \times \mathrm{H100}$ node.
\Cref{tab:compute} reports GPU-hours to each displayed checkpoint and excludes pre-training, evaluation, and post-hoc flow map distillation.
The text-to-image totals are approximate and use the median time between consecutive training logs.
\Cref{tab:compute} reports the cost to each displayed checkpoint, whereas \cref{fig:compute_efficiency} reports the compute WTF requires to reach each baseline's peak reward.
These comparisons use different endpoints, so their ratios are not directly commensurable.

\begin{table}[t]
	\centering
	\footnotesize
	\setlength{\tabcolsep}{5pt}
	\resizebox{\textwidth}{!}{%
		\begin{tabular}{@{}llll@{}}
			\toprule
			Method                  & Task     & GPU-hours & Output of fine-tuning \\
			\midrule
			Adjoint Matching        & ImageNet & $40$ & flow; few-step needs separate distillation \\
			MFM                     & ImageNet & $40$ & flow; few-step needs separate distillation \\
			VFM                     & ImageNet & $40$ & one-step sampler, $1$ NFE only \\
			\ourrow WTF (Ours)      & ImageNet & $22$ & flow map, any NFE \\
			\midrule
			Adjoint Matching        & T2I      & $48$ & flow; few-step needs separate distillation \\
			Flow-GRPO                & T2I      & $48$ & flow; few-step needs separate distillation \\
			\ourrow WTF (Ours)      & T2I      & $48$ & flow map, any NFE \\
			\bottomrule
		\end{tabular}%
	}
	\vspace{0.5em}
	\caption{\textbf{Fine-tuning compute.}
		GPU-hours to the reported checkpoint on the same $8 \times \mathrm{H100}$ node.
		Post-hoc flow map distillation is excluded.}
	\label{tab:compute}
\end{table}

\section{Additional experimental results}
\label{app:extras}

\subsection{Qualitative comparison on ImageNet}
\label{ssec:exp_imagenet_qual}
\Cref{fig:overview_imagenet} compares samples against the base model and adjoint matching at a fixed
initial latent.

\begin{figure}[t]
    \centering
    \includegraphics[width=\linewidth]{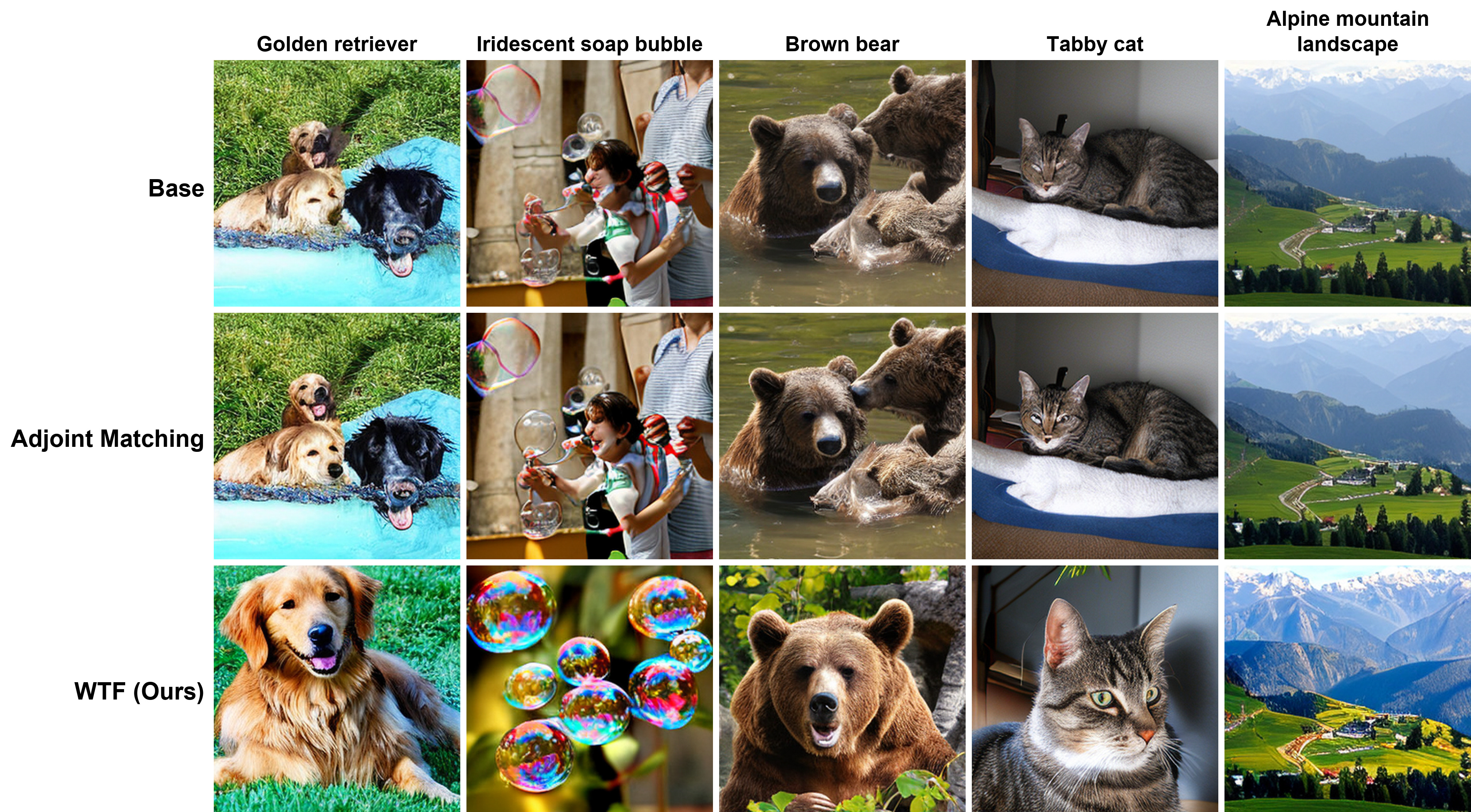}
    \caption{\textbf{Qualitative results on ImageNet.} Samples at 250 NFE from the base model, adjoint matching, and WTF, with the initial latent fixed down each column. Adjoint matching stays close to the base sample. WTF changes composition and color while keeping the class.}
    \label{fig:overview_imagenet}
\end{figure}

\subsection{Exact versus Euclidean reward gradient}
\label{ssec:exp_detach}
The exact reward contribution uses the flow map pullback $J^\top_{X_{t, 1}} \nabla r$, while the Euclidean estimator replaces it with $\nabla r$.
\Cref{tab:detach_ab} compares the two estimators on text-to-image at matched iterations.
The Euclidean estimator attains higher HPSv2 and higher DreamSim and CLIP diversity.

\begin{table}[t]
	\centering
	\begin{tabular}{lccccc}
		\toprule
		Value gradient                & HPSv2 $\uparrow$ & PickScore $\uparrow$ & ImageReward $\uparrow$ & DreamSim $\uparrow$ & CLIP diversity $\uparrow$ \\
		\midrule
		Exact, $J^\top \nabla r$      & $0.377$ & $22.59$ & $1.34$ & $0.163$ & $0.161$ \\
		\ourrow Euclidean, $\nabla r$ & $0.395$ & $22.83$ & $1.40$ & $0.195$ & $0.193$ \\
		\bottomrule
	\end{tabular}
	\vspace{0.5em}
	\caption{\textbf{Exact versus Euclidean reward gradients on text-to-image.}
		Both rows are evaluated at $800$ optimizer steps under one protocol
		($50$ NFE, guidance $2.5$, $1{,}000$ quality prompts, $32 \times 16$ diversity prompts).
		A matched comparison over three seeds at $600$ iterations and $\lambda = 25$ moves HPSv2
		from $0.359$ to $0.376$ and DreamSim from $0.154$ to $0.217$, in the same direction and of
		comparable size.}
	\label{tab:detach_ab}
\end{table}

\subsection{Reward scale and reward-diversity tradeoff}
\label{ssec:exp_lambda}
We retrain the text-to-image model over a $4\times$ range of reward scales at matched training steps and evaluation protocol.
Increasing $\lambda$ increases HPSv2 while decreasing DreamSim and CLIP diversity.
Increasing $\lambda$ from $10$ to $40$ raises HPSv2 by $0.023$ and lowers DreamSim and CLIP diversity by $0.022$ and $0.013$, respectively.

\section{Synthetic experiments: transport versus reweighting}
\label{app:synthetic}

We compare the WTF and KL population optima on settings where both laws can be computed exactly.
The experiments isolate how transport and reward reweighting produce different terminal laws.

\subsection{Gaussian benchmark}
\label{app:synthetic_lowdensity}
The base is $\rho_0 = \rho_1 = \Normal(0, 1)$ in one dimension, and the reward is a bounded bump $r_K(y) = \exp(-\tfrac{1}{2}((y-K)/w)^2)$ of width $w = 2.25$ centered at $K$ base standard deviations from the mean.
The width is chosen so that the reward overlaps the base appreciably rather than sitting in its tail: at $K = 4.75$ the base already attains $\E_{\rho_1}[r] = 0.142$, and $15.8\%$ of the base mass has $r > 0.25$.

Both laws are evaluated exactly rather than sampled.
We compute the KL tilt by numerical quadrature.
For the affine base drift, the prior-action cost has a closed form, so the WTF optimum is obtained from the pointwise maximization in~\cref{prop:pointwise-decomp} and its terminal density by change of variables.
\Cref{fig:synth_lowdensity} and~\cref{tab:synth_lowdensity} report the resulting population quantities.
\Cref{fig:synth_sweeps} quantifies the same comparison across the reward centre $K$ and the reward scale $\lambda$.

\begin{figure}[t]
    \centering
    \includegraphics[width=0.48\linewidth]{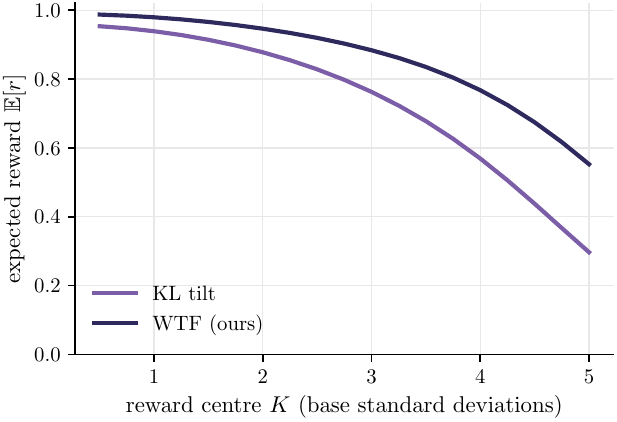}
    \hfill
    \includegraphics[width=0.48\linewidth]{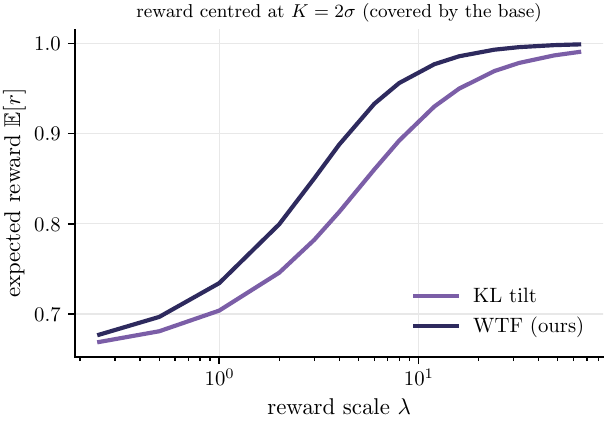}
\caption{\textbf{Quantitative sweeps for the one-dimensional comparison.}
Left: expected reward against the reward centre $K$ at fixed reward scale.
Right: expected reward against the reward scale $\lambda$ at fixed reward centre.
Both quantify the trend that~\cref{fig:synth_lowdensity} shows qualitatively.}
    \label{fig:synth_sweeps}
\end{figure}

At $\lambda = 7$ the difference between the two laws widens monotonically as the reward moves outward, from $0.068$ at $K = 2$ to $0.256$ at $K = 5$.

\begin{table}[t]
    \centering
    \begin{tabular}{lccccc}
        \toprule
        & \multicolumn{5}{c}{Expected reward $\E[r]$ at $\lambda = 7$, $w = 2.25$} \\
        \cmidrule(lr){2-6}
        Method & $K=2$ & $K=3$ & $K=4$ & $K=4.75$ & $K=5$ \\
        \midrule
        Base $\rho_1$      & 0.657 & 0.435 & 0.244 & 0.142 & 0.116 \\
        KL tilt            & 0.878 & 0.763 & 0.569 & 0.367 & 0.297 \\
        \ourrow WTF (Ours) & 0.946 & 0.884 & 0.768 & 0.617 & 0.553 \\
        \bottomrule
    \end{tabular}
    \vspace{0.5em}
    \caption{\textbf{Expected reward as the reward moves outward.} The reward bump of width $w = 2.25$ is centered $K$ base standard deviations from the mean. Both fine-tuned laws are population optima, the tilt by quadrature and WTF in closed form, so no seed variation enters.}
    \label{tab:synth_lowdensity}
\end{table}

\subsection{Qualitative comparison on text-to-image}
\label{app:qual_t2i}
The figures below show text-to-image samples across inference budgets, with the prompts listed in each caption.

\paragraph{Prompts and classes in \cref{fig:front_hero}.}
The text-to-image rows of \cref{fig:front_hero} use, from top to bottom,
``The image depicts the god dreaming at the end of time.'',
``An otherworldly world depicted with vivid colors by Fuco Ueda.'', and
``A skull-shaped island with rocks and vegetation, painted by Ghibli with strong light and shadow.''
The ImageNet rows are class-conditional rather than prompted, and show the classes
\emph{platypus}, \emph{Shetland sheepdog} and \emph{volcano}.
\begin{figure}[p]
	\centering
	\includegraphics[width=\linewidth]{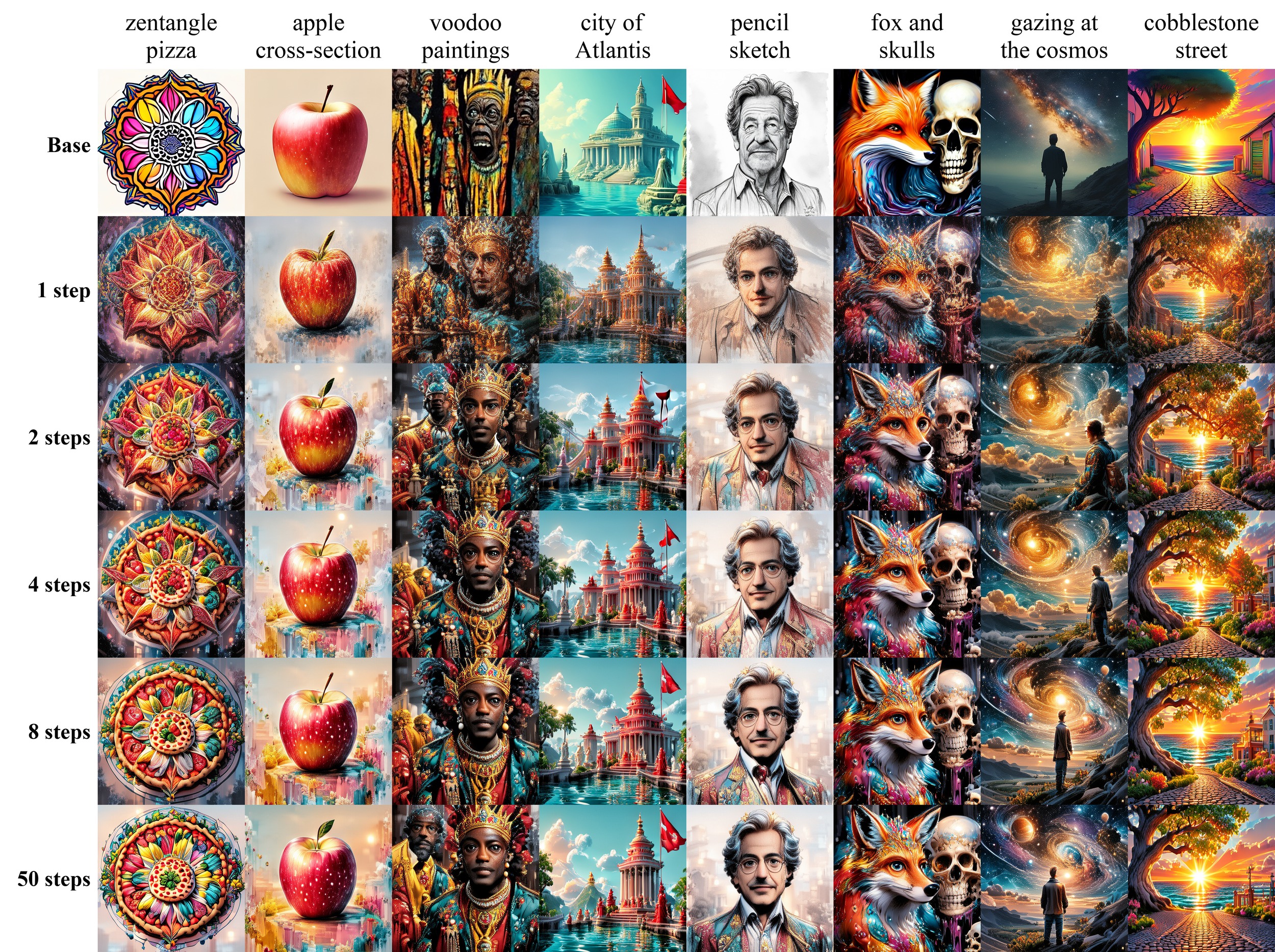}
	\caption{\textbf{Reward-aligned text-to-image samples across inference budgets (first set).}
		Each column is one prompt and each row an inference budget, with the pre-trained model in the top row.
		Within a column every WTF row uses the same initial noise, so the sequence shows one sample refining as the budget grows rather than independent draws.
		Prompts, left to right:
		(a) ``A zentangle pizza illustration with colorful ink.
		(b) ``Cross section of an apple in a limited neutral palette with a beautiful graphic design and a painterly style.
		(c) ``Scary African voodoo paintings by Jean-Michel Basquiat.
		(d) ``A digital painting of the legendary water city of Atlantis, featuring a Greek temple, statues, and a red flag.''
		(e) ``A pencil sketch of Danny Devito by Milt Kahl.
		(f) ``The image features a surreal fox and skulls in highly detailed, liquid oilpaint style.
		(g) ``An art piece by Wojciech Siudmak depicting an individual gazing at the vast cosmos.
		(h) ``A cobblestone street with a tree over the sea at sunset, illuminated by sun rays.
		}
	\label{fig:appx_t2i_1}
\end{figure}

\begin{figure}[p]
	\centering
	\includegraphics[width=\linewidth]{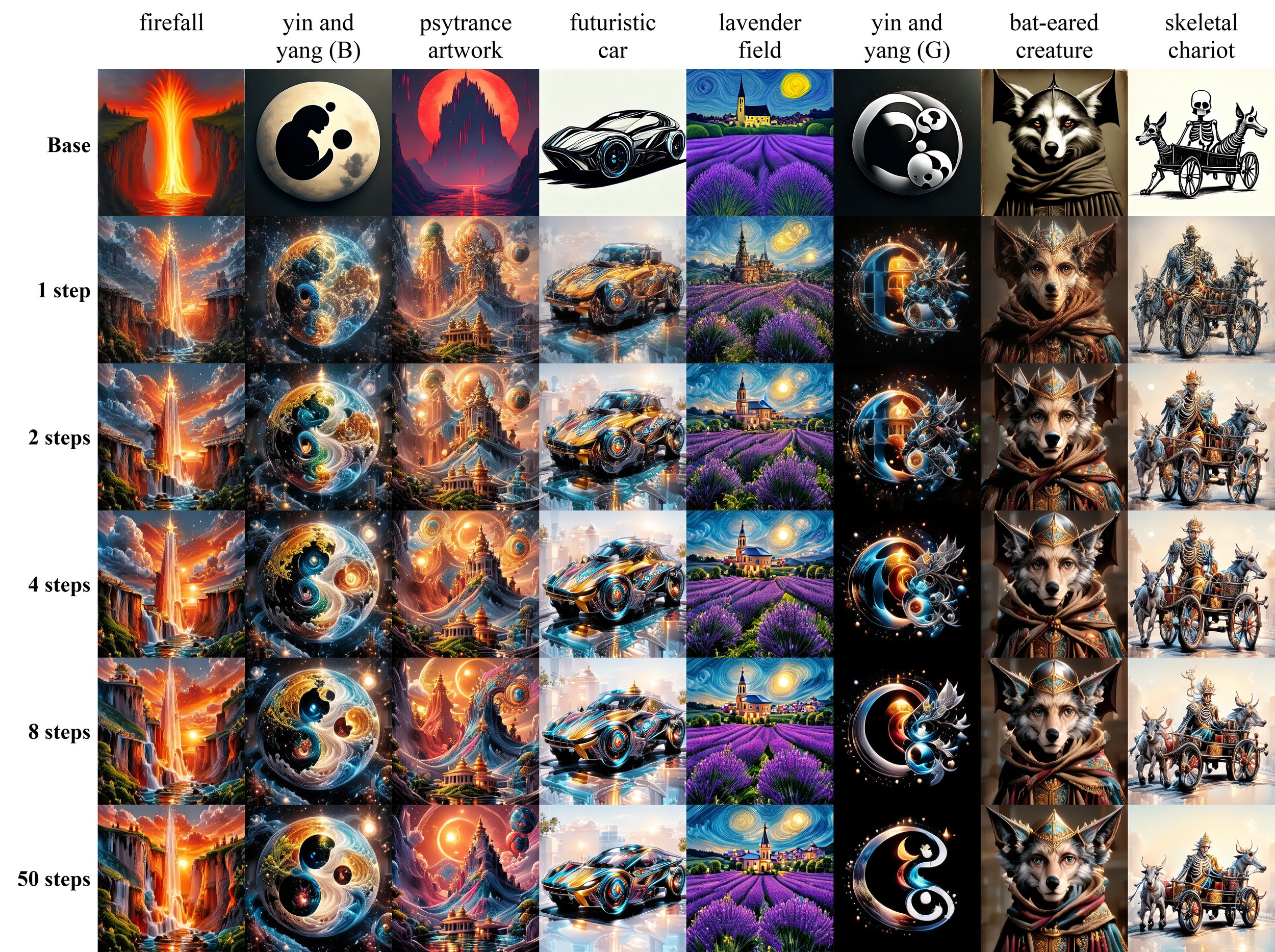}
	\caption{\textbf{Reward-aligned text-to-image samples across inference budgets (second set).}
		Each column is one prompt and each row an inference budget, with the pre-trained model in the top row.
		Within a column every WTF row uses the same initial noise, so the sequence shows one sample refining as the budget grows rather than independent draws.
		Prompts, left to right:
		(a) ``A painting of a firefall cascading over a high cliff.
		(b) ``An image depicting the concept of yin and yang.
		(c) ``Psytrance artwork by Lee Madgwick.
		(d) ``Artwork depicting a futuristic car, created by Ed Roth.
		(e) ``A night scene of a lavender field with a town and church in the background, reminiscent of Van Gogh.
		(f) ``An image depicting the concept of yin and yang.
		(g) ``Portrait of a creature with bat ears, a wolf snout and eagle features, wearing a poncho and helmet.
		(h) ``The image is a drawing of a skeletal, frail figure driving a chariot pulled by two skeletal hounds.
		}
	\label{fig:appx_t2i_2}
\end{figure}

\end{document}